\documentclass[10pt,twocolumn,a4paper]{article}

\usepackage[top=0.9in,bottom=1in,left=0.65in,right=0.65in,columnsep=0.22in]{geometry}
\usepackage{times}
\usepackage{amsmath,amssymb,amsfonts}
\usepackage{amsthm}
\usepackage{algorithm}
\usepackage{algorithmic}
\usepackage{graphicx}
\usepackage{booktabs}
\usepackage{multirow}
\usepackage{xcolor}
\usepackage{hyperref}
\usepackage{parskip}
\usepackage{titlesec}
\usepackage{microtype}
\usepackage{mathtools}
\usepackage{url}
\usepackage{array}
\usepackage{tcolorbox}
\usepackage{enumitem}
\usepackage{caption}
\usepackage{tikz}
\usetikzlibrary{arrows.meta,positioning,shapes.geometric}

\hypersetup{colorlinks=true,linkcolor=blue,citecolor=blue,urlcolor=blue}

\titleformat{\section}{\normalsize\bfseries}{\thesection}{0.5em}{}
\titleformat{\subsection}{\normalsize\bfseries}{\thesubsection}{0.5em}{}
\titleformat{\subsubsection}{\small\itshape}{\thesubsubsection}{0.5em}{}
\titlespacing*{\section}{0pt}{7pt}{3pt}
\titlespacing*{\subsection}{0pt}{5pt}{2pt}
\titlespacing*{\subsubsection}{0pt}{3pt}{1pt}

\newtheorem{theorem}{Theorem}[section]
\newtheorem{lemma}[theorem]{Lemma}
\newtheorem{definition}[theorem]{Definition}
\newtheorem{corollary}[theorem]{Corollary}
\newtheorem{assumption}{Assumption}
\newtheorem{proposition}[theorem]{Proposition}
\newtheorem{remark}[theorem]{Remark}

\definecolor{boxblue}{RGB}{235,243,251}
\definecolor{accentblue}{RGB}{46,117,182}
\definecolor{darkgray}{RGB}{80,80,80}

\makeatletter
\renewcommand{\maketitle}{%
  \twocolumn[{%
    \vspace{-4pt}
    \begin{center}
      \rule{\linewidth}{0.8pt}\\[5pt]
      {\Large\bfseries Causal Direction from Convergence Time:\\[3pt]
       Faster Training in the True Causal Direction}\\[7pt]
      \rule{\linewidth}{0.4pt}\\[6pt]
      {\normalsize Abdulrahman Tamim}\\[2pt]
      {\small\texttt{abdulrahmantamim1089@gmail.com}}\\[2pt]
      {\small\itshape Independent Researcher \quad$\cdot$\quad February 2026}\\[7pt]
      \rule{\linewidth}{0.8pt}\\[8pt]
    \end{center}
    \noindent
    \begin{minipage}[t]{0.61\linewidth}
      \noindent{\small\bfseries Abstract}\\[3pt]%
      \footnotesize\noindent
      We propose \textbf{Causal Computational Asymmetry (CCA)}: train a neural network
      to predict $Y$ from $X$, train another to predict $X$ from $Y$, and measure which
      direction converges faster---the faster direction is the causal one.
      Under the additive noise model $Y = f(X)+\varepsilon$ with $\varepsilon\perp X$
      and $f$ nonlinear and injective, we prove this formally: reverse-direction residuals
      stay correlated with the input regardless of approximation quality (Lemma~1),
      creating a harder optimization landscape with a higher irreducible loss floor and
      non-separable gradient noise (Lemma~2), requiring strictly more gradient steps to
      reach any fixed threshold (Lemma~3), so the forward direction converges in fewer
      expected steps (Theorem~\ref{thm:cca_main}).
      CCA operates in optimization-time space, distinct from RESIT, IGCI, and SkewScore;
      z-scoring both variables is mandatory. On synthetic data CCA achieves 26/30 correct
      identifications across six architectures (30/30 on sine and exponential DGPs).
      CCA is embedded in \textbf{Causal Compression Learning (CCL)}, combining graph
      structure learning, causal information compression, and policy optimization; all
      CCL theorems are proved and experimentally validated on synthetic data.
    \end{minipage}%
    \hfill
    \begin{minipage}[t]{0.36\linewidth}
      \centering
      \begin{tikzpicture}[baseline, every node/.style={font=\scriptsize}]
        \node[draw, circle, thick, fill=boxblue, minimum size=0.65cm,
              font=\small\bfseries] (X) at (0,0) {$X$};
        \node[draw, circle, thick, fill=boxblue, minimum size=0.65cm,
              font=\small\bfseries] (Y) at (2.5,0) {$Y$};
        \draw[-{Stealth[length=5pt]}, very thick, accentblue]
          (X.east) to node[above, font=\tiny, accentblue] {$f(X)+\varepsilon$} (Y.west);
        \draw[-{Stealth[length=4pt]}, thick, red!70!black, dashed, bend right=38]
          (Y.south) to node[below, font=\tiny, red!70!black] {reverse\,(harder)} (X.south);
      \end{tikzpicture}

      \smallskip
      {\tiny\itshape (a)~Causal structure}

      \vspace{4pt}\noindent\rule{0.9\linewidth}{0.3pt}\vspace{4pt}

      \begin{tikzpicture}[baseline, every node/.style={font=\tiny}]
        \draw[-{Stealth[length=3pt]}, darkgray]
          (0,0) -- (2.7,0) node[right] {steps};
        \draw[-{Stealth[length=3pt]}, darkgray]
          (0,0) -- (0,1.7) node[above] {loss};
        \draw[darkgray!60, densely dotted, thin]
          (0,0.45) node[left] {$\tau$} -- (2.65,0.45);
        \draw[accentblue, very thick]
          (0.0,1.55)
          .. controls (0.4,1.40) and (0.75,0.85) ..
          (1.15,0.42)
          .. controls (1.40,0.18) and (2.00,0.10) ..
          (2.65,0.08);
        \draw[accentblue, thin] (1.15,0.55) -- (1.15,0.33);
        \node[accentblue, above] at (1.15,0.55) {$T_{\!\mathrm{fwd}}$};
        \draw[red!70!black, very thick, dashed]
          (0.0,1.55)
          .. controls (0.55,1.38) and (1.10,0.85) ..
          (1.75,0.60)
          .. controls (2.10,0.52) and (2.40,0.50) ..
          (2.65,0.49);
        \node[red!70!black] at (2.20,0.67) {$T_{\!\mathrm{rev}}\!\gg\!T_{\!\mathrm{fwd}}$};
        \draw[accentblue, very thick] (1.35,0.18) -- (1.70,0.18);
        \node[right, accentblue] at (1.72,0.18) {$X\!\to\!Y$};
        \draw[red!70!black, very thick, dashed] (1.35,0.06) -- (1.70,0.06);
        \node[right, red!70!black] at (1.72,0.06) {$Y\!\to\!X$};
      \end{tikzpicture}

      \smallskip
      {\tiny\itshape (b)~Convergence gap = CCA signal}

      \vspace{3pt}
      {\scriptsize\textbf{Figure~1.}
      \textit{(a)}~True causal graph and reverse.
      \textit{(b)}~Causal direction crosses $\tau$ at $T_{\mathrm{fwd}}$;
      reverse plateaus above $\tau$.}
    \end{minipage}
    \vspace{8pt}
    \rule{\linewidth}{0.4pt}
    \vspace{6pt}
  }]
}
\makeatother

\begin{document}

\maketitle

\section{Introduction}
\label{sec:intro}

Here is a problem that sounds deceptively simple but has resisted decades of scientific
effort: given that two things $X$ and $Y$ are correlated, which one \emph{causes} the
other? This is not a question about data volume. You could have a billion observations of
smoking rates and lung cancer rates and still, mathematically, not be able to answer the
direction question from data alone.

Consider a few concrete cases. Ice cream sales and drowning deaths both spike in summer
--- does ice cream cause drowning? Obviously not; hot weather drives both. Countries with
more hospitals have higher per-capita mortality --- do hospitals kill people? No; sick
people go to hospitals. Wealthier neighbourhoods have more coffee shops \emph{and} better
educational outcomes --- should city planners build more cafes? Getting the direction wrong
in any of these examples would lead to useless or actively harmful interventions.
The same confusion plays out in high-stakes settings: Does a biomarker \emph{cause}
disease, or does it merely accompany it? Does a minimum wage increase \emph{cause}
unemployment, or do regions with strong economies independently raise wages \emph{and}
have low unemployment?

Judea Pearl showed in 1995 that this is not just a practical limitation but a
mathematical one: \emph{no amount of observational data can answer interventional
questions without structural assumptions}~\cite{pearl1995causal}. The do-calculus,
which Pearl developed to reason about interventions, is complete with respect to this
impossibility~\cite{huang2006pearl}. Every AI system that learns from data---including
the largest language models---is permanently on what Pearl calls Rung~1 of the Causal
Hierarchy. It can see patterns and make predictions, but it cannot reason about what
would actually happen if you \emph{forced} a change.

The first step toward breaking through that barrier is getting the causal direction right.
This paper proposes a method for that first step based on a simple observation:
\emph{a neural network converges faster when trained in the true causal direction than
in the reverse}.

\begin{tcolorbox}[colback=boxblue, colframe=accentblue, title=\textbf{The Central Idea}]
Train a neural network to predict $Y$ from $X$.
Train another to predict $X$ from $Y$.
Measure which one converges faster.
\medskip

The faster direction is the causal one.
\medskip

Formally: $\mathrm{CCA}(X \to Y) = T_{\mathrm{fwd}} - T_{\mathrm{rev}}$.
If $\mathrm{CCA} < 0$, the forward direction was faster, so predict $X \to Y$.
If $\mathrm{CCA} > 0$, the reverse was faster, so predict $Y \to X$.
\end{tcolorbox}

Why does this work? The reason is mechanical, not magical. If $Y = f(X) + \varepsilon$
is the true causal model, training a network forward is chasing a clean target: as the
network improves, its errors converge toward $\varepsilon$, which is by definition
independent of $X$. The gradient descent optimizer has a clean, low-variance signal to
follow. Flip the direction and train to predict $X$ from $Y$: now the best you can do is
learn $E[X \mid Y]$, but recovering $X$ from $Y$ is fundamentally ambiguous because many
different $X$ values could have produced the same $Y$ through different noise realizations.
The residuals stay statistically entangled with $Y$ throughout training no matter how good
the network gets. The optimizer navigates a harder landscape and takes more steps.
We call this signal \textbf{Causal Computational Asymmetry (CCA)}, prove it formally
from three lemmas, and embed it in a full causal learning framework called
\textbf{Causal Compression Learning (CCL)}.

\subsection{Contributions}

\begin{enumerate}[leftmargin=*,itemsep=2pt,topsep=2pt]
  \item \textbf{CCA criterion} (\S\ref{sec:cca}): first formal proof that the causal
    direction always converges in strictly fewer expected gradient steps. Grounded in
    three lemmas on residual dependence, landscape complexity, and SGD convergence rate.
  \item \textbf{CCL framework} (\S\ref{sec:framework}): a joint objective combining MDL
    graph regularization, causal information bottleneck compression, interventional
    policy optimization, and CCA direction scoring. All theorems are proved.
  \item \textbf{Experiments} (\S\ref{sec:experiments}): 26/30 correct on synthetic data
    across six architectures; 30/30 on injective DGPs; 96\% accuracy on the T\"{u}bingen
    real-world benchmark; all three boundary conditions verified experimentally.
  \item \textbf{Boundary conditions} (\S\ref{sec:cca}): CCA fails on linear Gaussian
    mechanisms, non-injective functions, and un-normalized data. All three are predicted
    theoretically before experiments run, then confirmed---knowing \emph{when} a method
    breaks is just as important as knowing when it works.
\end{enumerate}

\subsection{Organization}

\S\ref{sec:background} covers background and related work. \S\ref{sec:framework}
presents the CCL objective. \S\ref{sec:cca} introduces CCA formally. \S\ref{sec:proofs}
contains supporting theorems. \S\ref{sec:experiments} reports all experiments.
\S\ref{sec:algorithm} gives the training algorithm. \S\ref{sec:comparison} compares
CCL+ against existing systems. \S\ref{sec:discussion} discusses limitations.

\section{Background and Related Work}
\label{sec:background}

\subsection{Pearl's Causal Hierarchy}

To appreciate what CCA is actually doing, it helps to understand the conceptual
landscape it sits within. Pearl's Causal Hierarchy organizes causal reasoning into
three rungs, each strictly more powerful than the one below, and each requiring
fundamentally different tools.

\textbf{Rung 1: Seeing (Observation).} The question is $P(Y \mid X = x)$: if we
\emph{observe} $X$ take a particular value, what do we expect $Y$ to be? This is where
all of classical statistics and machine learning lives---regression, classification,
language modeling, recommendation systems. You can be extraordinarily good at Rung~1 and
still be unable to answer causal questions. A Rung~1 model trained on hospital data can
tell you that patients in the ICU are more likely to die. It cannot tell you whether
\emph{sending more patients to the ICU} would increase or decrease mortality. That
requires knowing the causal direction.

\textbf{Rung 2: Doing (Intervention).} The question is $P(Y \mid \mathrm{do}(X = x))$:
what happens if we \emph{force} $X$ to a particular value, cutting its normal causal
parents? This is what a randomized controlled trial achieves by randomly assigning
treatment---randomization breaks the selection bias that confounds observational data.
Pearl proved this question is in general unreachable from Rung~1 data alone without
structural assumptions~\cite{pearl1995causal}: no matter how much observational data
you accumulate, you cannot derive interventional conclusions without a causal model.

\textbf{Rung 3: Imagining (Counterfactuals).} The question is: what would have happened
under a different choice? A patient took Drug~A and survived---would they have survived
on Drug~B? This is the domain of legal liability, moral responsibility, and regret.
Rung~3 requires a complete Structural Causal Model (SCM): a tuple
$\mathcal{M} = \langle U, V, F, P(U) \rangle$ where $U$ are unobserved background
variables, $V$ are observed variables, and $F = \{f_i\}$ defines how each variable is
generated: $v_i = f_i(\mathrm{pa}_i, u_i)$.

CCA addresses a prerequisite to all three rungs. Before you can build a causal graph and
start climbing the hierarchy, you need to know which direction the edges point in the
first place.

\subsection{Algorithmic Complexity and MDL}

There is a deep connection between causality and compression. Solomonoff proved in 1964
that the optimal inductive prior weights hypotheses by $2^{-K(h)}$, where $K(h)$ is the
Kolmogorov complexity (minimum program length) of the
hypothesis~\cite{solomonoff1964,kolmogorov1965}. Simpler explanations deserve more
prior weight---this is the algorithmic grounding for Occam's razor.

Rissanen's Minimum Description Length (MDL) principle~\cite{rissanen1978} makes this
computable: the best model is the one that compresses the data most efficiently (minimum
total description length of model + data given model). Crucially, the true causal
factorization is always shorter to describe than a non-causal one. If $X \to Y$ is
the true direction, the data is efficiently encoded as: describe $P(X)$, then describe
$Y$ as $f(X)$ plus small independent noise. The reverse encoding is longer because
the noise is entangled with the signal. MDL naturally discovers the causal direction by
selecting the shortest description. For linear Gaussian SCMs this is asymptotically
equivalent to BIC~\cite{kaltenpoth2023}.

\subsection{PAC Learning and the Causal Complexity Gap}

Valiant's PAC framework~\cite{valiant1984} asks: how many examples do we need to
guarantee that a learning algorithm generalizes? The answer depends on the VC
dimension of the hypothesis class. The problem is that VC dimension is purely
statistical: it treats a model capturing $X \to Y$ through a direct mechanism
identically to one capturing the same pattern through a hidden confounder. Both fit the
data; only one generalizes under intervention.

For example, a model trained to predict crop yields from fertilizer application in one
region may fail completely in a new region if the original correlation was confounded by
soil type. VC dimension cannot distinguish these cases.

CCL replaces VC dimension with $d_c(G)$, the number of edges in the \emph{minimal
I-map} of the causal graph (defined in \S\ref{sec:notation}). This captures true
causal complexity, so sample bounds scale with actual causal structure: a sparser
causal graph (fewer direct mechanisms) provably requires fewer samples to identify.

\subsection{The Information Bottleneck and Its Causal Extension}

The Information Bottleneck~\cite{tishby2000information} finds a compressed
representation $T$ of input $X$ that is maximally predictive of target $Y$: minimize
$I(X;T) - \beta \cdot I(T;Y)$. Elegant in principle, but it has a serious causal blind
spot. When a hidden confounder $Z$ drives both $X$ and $Y$, the IB will happily retain
$Z$-related features in $T$ because they are statistically predictive of $Y$---even
though they carry zero causal information. A drug trial where healthier patients
self-select into treatment will encode ``patient health'' in the bottleneck because it
predicts outcomes statistically, not because the drug does anything.

Simoes, Dastani, and van Ommen~\cite{simoes2024} fix this by replacing $I(T;Y)$ with
the \emph{causal} mutual information $I_c(Y \mid \mathrm{do}(T))$: how much does $T$
tell us about $Y$ when we \emph{force} $T$ to change? This quantity is zero for any
association that flows through a confounder rather than a direct causal path.

\subsection{Causal Reinforcement Learning}

Standard RL treats the environment as a black box and maximizes reward by trial and
error. But in most real-world settings actions have genuine causal effects, and ignoring
causal structure leads to policies that fail when the environment changes. Bareinboim,
Zhang, and Lee established causal RL (CRL)~\cite{bareinboim2024}, proving (Theorem~7)
that any policy whose causal effect is identifiable in a known graph $G$ can be
evaluated via do-calculus from interventional data. Think of it as: once you know the
causal graph, you can safely reason about what policies will achieve under intervention,
not just what correlates with reward in the training distribution. CCL builds its
policy optimization component directly on this result.

\subsection{Causal Direction Discovery and the Position of CCA}
\label{sec:related_direction}

How do existing methods decide which variable causes which? They fall into three
classes by what mathematical signal they exploit.

\textit{Residual independence (data space).} RESIT~\cite{peters2014jmlr} fits a
nonlinear regression in each direction and tests whether the residuals are statistically
independent of the input. If $X \to Y$ is true, residuals of predicting $Y$ from $X$
should be pure noise, independent of $X$. The method works well but fails when the
structural function is non-injective---for example $Y = X^2$ maps both $+1$ and $-1$
to the same output, so the reverse regression can also produce independent-looking
residuals by symmetry of $P(X)$.

\textit{Description length (complexity space).} IGCI~\cite{janzing2010} compares
Kolmogorov complexity in each direction; the true causal direction should have the
shorter description. SkewScore~\cite{skewscore2025} uses the skewness of the score
function $\nabla \log p$ as a direction signal, best suited to heteroscedastic noise.

\textit{Optimization time (this work).} CCA measures how many gradient descent steps a
neural network needs to converge in each direction. This signal is entirely distinct
from the two above: it operates on the \emph{optimization landscape}, not on the data
distribution or its compression. It is architecture-robust---the same asymmetry appears
across Tanh/ReLU activations, Adam/SGD/RMSProp optimizers, and varying depth and
width---because the signal is a property of the mathematical structure of the ANM, not
of any particular network configuration. To our knowledge no prior work has formally
proposed or theoretically grounded optimization convergence time as a causal direction
criterion.

\section{The CCL Framework}
\label{sec:framework}

\subsection{Why None of the Four Traditions Works in Isolation}

CCL combines four theoretical traditions that each solve part of the problem but
fail critically when used alone. Understanding the failure modes is the clearest way to
understand why the combination is necessary.

\textbf{Compression alone} compresses the wrong thing. Suppose education level ($X$) and
income ($Y$) are both driven by family wealth ($Z$). MDL without a causal graph finds the
shortest description of the joint distribution, which happens to be a direct $X \to Y$
link---shorter than the two-step $Z\to X$, $Z\to Y$ mechanism. Compression cannot
distinguish correlation from causation; it just finds the most efficient encoding of
what it observes.

\textbf{PAC theory alone} certifies the wrong quantity. VC dimension counts the capacity
of a model to fit arbitrary labelings. Whether $X \to Y$ runs through a genuine
mechanism or a spurious confounder, both fit the same training data equally well and get
the same generalization certificate. The confounded model will fail the moment the
confounding changes---say, when a policy intervention removes the family-wealth effect.

\textbf{Pearl's causal framework alone} has no unique solution from data. Without a
complexity penalty, any fully connected graph is consistent with the data (it
vacuously satisfies Pearl's identifiability constraints). You need an additional
principle---like MDL---to select among Markov-equivalent graphs.

\textbf{The Causal IB alone} is circular. You need a causal graph to compute
$I_c(Y \mid \mathrm{do}(T))$, because the do-operator requires a structural model.
But you need a good representation $T$ to learn the graph accurately. Each depends on
the other.

The failures are circular: each component needs the others to function. CCL closes this
loop by solving all four problems jointly via alternating optimization.

\begin{table}[htbp]
\caption{Failure modes when each tradition operates without the others.}
\label{tab:fails}
\centering\small
\renewcommand{\arraystretch}{1.3}
\begin{tabular}{p{1.7cm}p{5.5cm}}
\toprule
\textbf{Tradition} & \textbf{Failure mode in isolation} \\
\midrule
Compression & Compresses spurious correlations; fails with confounders \\
PAC bounds  & Bounds statistical (not causal) error; certifies confounded models as good \\
Causality   & No unique solution without MDL; any connected graph satisfies identifiability \\
Causal IB   & Undefined without a causal graph; $I_c$ requires a structural model \\
\bottomrule
\end{tabular}
\end{table}

\subsection{Notation}
\label{sec:notation}

Throughout the paper, $\mathcal{M}^*$ denotes the true environment SCM, $G$ the learned
causal graph, $T$ the compressed representation, $\pi$ the policy, and $R$ the reward.
The parameters $\beta, \lambda_1, \lambda_2, \lambda_3 \geq 0$ are tradeoff weights.

\textbf{Causal complexity measure.} $d_c(G)$ denotes the number of edges in the
\emph{minimal I-map} of $G$: the smallest subgraph preserving all conditional
independence relations entailed by the true graph $G^*$. This replaces raw edge count
$|E_G|$, which is representation-dependent. The minimal I-map is unique up to Markov
equivalence and invariant under graph reparameterization~\cite{pearl2000causality}.
Practically: sample complexity bounds derived using $d_c(G)$ scale with actual causal
structure, not with arbitrary modeling choices. A sparser causal graph (fewer direct
mechanisms) provably requires fewer data points to identify reliably.

$\tau_{\mathrm{mix}}$ is the Markov chain mixing time of the policy trajectory.
$|E_{\max}|$ is a pre-specified upper bound on graph edges used in
Theorem~\ref{thm:3}.

\subsection{The CCL Objective}

\begin{definition}[CCL Objective]
\begin{equation}
\begin{aligned}
\min_{G,\, T,\, \pi} \; \mathcal{L}_{\mathrm{CCL}}
  &= \underbrace{-\mathbb{E}_\pi[R(Y)]}_{\text{reward}}
   + \underbrace{\lambda_1[I(X;T) - \beta\, I_c(Y \mid \mathrm{do}(T))]}_{\text{Causal IB}}\\
  &\quad+ \underbrace{\lambda_2 \cdot \mathrm{MDL}(G)}_{\text{MDL}}
\end{aligned}
\label{eq:ccl}
\end{equation}
subject to $P(Y \mid \mathrm{do}(\pi))$ being identifiable in $G$ via Pearl's do-calculus.
\end{definition}

Term~1 maximizes expected reward under the policy---this is the reinforcement learning
component. The CCL agent is not just learning a causal model for its own sake; it is
learning one in order to act better in the world.
Term~2 is the Causal Information Bottleneck: compress $X$ into $T$ while maximizing
\emph{causal} (not merely statistical) information about $Y$. The $\beta$ parameter
controls the compression-informativeness tradeoff. Critically, using $I_c$ instead of
plain $I(T;Y)$ means the bottleneck ignores confounded correlations and retains only
the signal that flows through direct causal mechanisms---it will not be fooled by a
drug trial where healthier patients self-select into treatment.
Term~3 penalizes graph complexity using MDL: among all graphs consistent with the data,
prefer the simplest one. This prevents the optimizer from adding spurious edges just
because they improve the compression objective locally.
The identifiability constraint is essential: it ensures the policy's causal effect can
actually be computed from the learned graph using Pearl's do-calculus rules. Without it,
we might learn a graph that looks good but does not support the interventional queries
we actually need to answer.

\section{Causal Computational Asymmetry}
\label{sec:cca}

\subsection{Intuition Before the Math}

Before writing down formal statements, here is the core idea in plain terms.

You have a true causal model: $Y = f(X) + \varepsilon$. Some nonlinear function of $X$,
plus some independent noise---think of $X$ as temperature and $Y$ as ice cream sales,
where $f$ captures the nonlinear seasonal relationship and $\varepsilon$ captures
everything else (local events, promotions, etc.). If you train a neural network to
predict $Y$ from $X$, you are asking it to learn $f$. As training progresses, the
residuals---the prediction errors---converge toward $\varepsilon$, which is by
assumption independent of $X$. The gradient signal is clean: the optimizer can reliably
tell which direction to move because the noise is not informative about the input.

Now flip it. Train a network to predict $X$ from $Y$: predict temperature from ice cream
sales. The best you can do is learn $E[X \mid Y]$---the expected temperature given sales.
But this is fundamentally harder. Observing a particular sales figure $Y = y$ is
consistent with many different temperatures $X$, because the noise $\varepsilon$ can
have produced the same sales from different underlying conditions. The residuals---the
errors in predicting temperature---stay correlated with $Y$ throughout training, no
matter how good the network gets, because the noise is permanently entangled with the
target. The optimizer faces structured, non-separable noise at every gradient step. It
takes more steps.

This asymmetry is not a quirk of any specific network architecture or optimizer. It is a
mathematical property of the additive noise model itself. That is why the signal is
robust across the six different architectures tested in the experiments.

\begin{tcolorbox}[colback=boxblue, colframe=accentblue, title=\textbf{Core Intuition}]
$Y = f(X) + \varepsilon$, $\varepsilon \perp X$.

\medskip
\textbf{Forward} ($X \to Y$): residuals converge to $\varepsilon \perp X$.
Clean gradient signal. Optimizer converges fast.

\medskip
\textbf{Reverse} ($Y \to X$): residuals stay correlated with $Y$ at
any finite approximation. Noisy, structured gradient signal. Optimizer converges slow.

\medskip
\textbf{CCA measures this gap.}
\end{tcolorbox}

\subsection{Formal Setup}
\label{sec:cca_setup}

We work with two real-valued random variables $X$ and $Y$.

\textbf{Data (ANM assumption).} $Y = f(X) + \varepsilon$, where:
\begin{itemize}
  \item $f$ is nonlinear, $L_f$-Lipschitz, and \emph{injective} (one-to-one: distinct
    $X$ values map to distinct $f(X)$ values);
  \item $\varepsilon$ is zero-mean noise with $\varepsilon \perp X$ (independent of $X$)
    and finite variance $\mathrm{Var}(\varepsilon) = \sigma^2 < \infty$;
  \item $P(X)$ has full support with finite fourth moments.
\end{itemize}

\textbf{Networks.} We train two multilayer perceptrons (MLPs): a forward network
$g_\theta$ that predicts $Y$ from $X$, and a reverse network $h_\phi$ that predicts $X$
from $Y$. Both use MSE loss and SGD with step size $\eta > 0$ and mini-batch size $m$,
initialized randomly with bounded weight norms $\|\theta\|_2, \|\phi\|_2 \leq B$.

\textbf{Convergence.} Training for each network stops when the held-out MSE drops below
threshold $\tau > 0$. We write $T_{\mathrm{fwd}}$ for the number of steps the forward
network needs and $T_{\mathrm{rev}}$ for the reverse. If a network never converges,
training is capped at $T_{\max}$ steps.

\textbf{Scale Normalization (mandatory).}
\label{ass:scale}
Both $X$ and $Y$ are z-scored before any training begins:
\[
  \tilde{X} = \frac{X - \mu_X}{\sigma_X}, \qquad \tilde{Y} = \frac{Y - \mu_Y}{\sigma_Y}.
\]
This is not optional. Without it, differences in output scale dominate gradient
magnitudes and can completely reverse the convergence ordering, flipping the CCA signal
from the correct direction to the wrong one. For example, $Y = X^3$ with
$X \sim \mathcal{N}(0,1)$ has $\mathrm{Var}(Y) \approx 15$, so the forward objective
operates at a loss scale 15 times larger than the reverse objective. A forward network
without standardization sees gradients an order of magnitude larger than the reverse ---
the scale asymmetry swamps the causal signal entirely. With standardization: 26/30
correct. Without: 6/30 correct (no better than the scale-induced inversion).

\subsection{Notation Table}

\begin{table}[htbp]
\caption{Symbols used in the CCA framework.}
\centering\small
\renewcommand{\arraystretch}{1.2}
\begin{tabular}{lp{5.0cm}}
\toprule
\textbf{Symbol} & \textbf{Meaning} \\
\midrule
$f$                      & True structural mechanism: $Y = f(X) + \varepsilon$ \\
$g_\theta$               & Forward network: predicts $Y$ from $X$ \\
$h_\phi$                 & Reverse network: predicts $X$ from $Y$ \\
$\varepsilon$            & Structural noise, independent of $X$, zero-mean \\
$T_{\mathrm{fwd}}$       & Steps for $g_\theta$ to reach MSE $< \tau$ \\
$T_{\mathrm{rev}}$       & Steps for $h_\phi$ to reach MSE $< \tau$ \\
$T_{\max}$               & Step cap if convergence not reached \\
$\eta$                   & SGD step size \\
$\tau$                   & Convergence threshold \\
$d_c(G)$                 & Edge count of the minimal I-map of $G$ \\
$\tau_{\mathrm{mix}}$    & Markov chain mixing time \\
$\mu$                    & Polyak--\L{}ojasiewicz (PL) condition constant \\
\bottomrule
\end{tabular}
\end{table}

\subsection{The Three Lemmas}

The main theorem follows from three lemmas, each doing a specific job. Lemma~1 establishes
a structural property of the reverse residuals. Lemma~2 translates that into a claim about
the optimization landscape. Lemma~3 translates the landscape claim into a convergence time
claim.

\begin{lemma}[Residual Dependence]
\label{lem:residual}
Under the ANM with injective $f$, the optimal reverse regression target is
$h^*(Y) = E[X \mid Y]$, and for any finite-capacity approximation $h_\phi \neq h^*$,
the residuals $R_{\mathrm{rev}} = X - h_\phi(Y)$ satisfy:
\[
  \mathrm{Cov}(R_{\mathrm{rev}},\, Y) \neq 0.
\]
In contrast, the forward residuals $R_{\mathrm{fwd}} = Y - g_\theta(X)$ satisfy
$\mathrm{Cov}(R_{\mathrm{fwd}}, X) \to 0$ as $g_\theta \to f$.
\end{lemma}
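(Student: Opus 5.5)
The plan is to handle the two directions separately, using only $L^2$ orthogonality and the ANM assumptions of \S\ref{sec:cca_setup}. The forward claim is a direct estimate. The reverse claim reduces to a condition on the approximation error $h^*-h_\phi$, and I expect this reduction to show that the statement as written needs a mild nondegeneracy qualification.

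\emph{Step 1: forward direction.} Write
\[
R_{\mathrm{fwd}} = Y - g_\theta(X) = \varepsilon + \bigl(f(X) - g_\theta(X)\bigr).
\]
Since $\varepsilon \perp X$ and $E[\varepsilon]=0$, we have $\mathrm{Cov}(\varepsilon, X) = 0$. By Cauchy--Schwarz,
\[
|\mathrm{Cov}(R_{\mathrm{fwd}},X)| = |\mathrm{Cov}(f(X)-g_\theta(X),X)| \le \|f-g_\theta\|_{L^2(P_X)}\,\sigma_X .
\]
The right-hand side tends to $0$ whenever $g_\theta \to f$ in $L^2(P_X)$. The finite fourth moments and the Lipschitz property of $f$ guarantee that all these quantities are finite. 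I read ``$g_\theta \to f$'' as $L^2(P_X)$ convergence.

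\emph{Step 2: reverse direction, reduction to the error function.} Set $h^*(Y)=E[X\mid Y]$; it is the MSE minimizer by the projection property of conditional expectation. Decompose
\[
R_{\mathrm{rev}} = \bigl(X-h^*(Y)\bigr) + \delta(Y), \qquad \delta := h^*-h_\phi .
\]
The first term is orthogonal to every square-integrable function of $Y$, in particular to $Y$ and to constants. Hence
\[
\mathrm{Cov}(R_{\mathrm{rev}},Y)=\mathrm{Cov}(\delta(Y),Y).
\]
So the covariance is nonzero exactly when the error function $\delta$ has a nonzero component along $Y-\mu_Y$ in $L^2(P_Y)$.

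\emph{Step 3: the main obstacle.} The condition from Step 2 does not follow from $h_\phi\neq h^*$ alone. For example, $\delta\equiv c$ (a constant shift), or any $\delta$ orthogonal to $Y-\mu_Y$, gives zero covariance. The statement therefore has to be proved in one of two forms.

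\begin{itemize}
\item \textbf{Generic form.} The set of $h_\phi$ with $\mathrm{Cov}(\delta(Y),Y)=0$ is a closed affine hyperplane in $L^2(P_Y)$. So the inequality holds for all $h_\phi$ off a codimension-one set, and in particular for almost every SGD iterate under a continuous initialization law.
\item \textbf{Dependence form.} Replace ``$\mathrm{Cov}\neq 0$'' by statistical dependence. Here I would invoke the ANM identifiability result of Hoyer et al./Peters et al.~\cite{peters2014jmlr}: for nonlinear $f$ and generic noise, no backward additive noise model exists. It follows that $X-h(Y)$ cannot be independent of $Y$ for \emph{any} measurable $h$, including $h^*$. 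This is the genuine asymmetry the later lemmas need.
\end{itemize}

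I would present the generic covariance form as the lemma and record the dependence form as the robust version. I would flag explicitly that the unqualified ``for any $h_\phi\neq h^*$'' claim is false without one of these restrictions. Injectivity of $f$ enters only through the identifiability step: it rules out the symmetric non-injective counterexamples such as $Y=X^2$.
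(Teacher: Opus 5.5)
Your Step~2 is exactly the paper's reduction. The paper writes $R_{\mathrm{rev}}=R^*_{\mathrm{rev}}-\delta_\phi(Y)$ with $\delta_\phi=h_\phi-h^*$, uses $\mathrm{Cov}(R^*_{\mathrm{rev}},Y)=0$, and is left with $-\mathrm{Cov}(\delta_\phi(Y),Y)$. Your Step~1 is a quantitative version of its remark that $R_{\mathrm{fwd}}\to\varepsilon\perp X$.

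The two arguments part ways at the last step. The paper closes by asserting two things: that $h_\phi\neq h^*$ makes $\delta_\phi$ non-constant, and that a non-constant measurable function of $Y$ has nonzero covariance with $Y$ ``under the full-support condition on $P(X)$''. Both assertions are false, and your Step~3 is the correct diagnosis. $\delta\equiv c$ refutes the first, and any non-constant $\delta$ orthogonal to $Y-\mu_Y$ refutes the second.

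The most natural counterexample is the best linear predictor $h(Y)=a+bY$ with $b=\mathrm{Cov}(X,Y)/\mathrm{Var}(Y)$. It is a finite-capacity $h\neq h^*$ whenever $E[X\mid Y]$ is not affine, and its residual is exactly uncorrelated with $Y$. Moreover, your Cauchy--Schwarz bound runs verbatim in reverse: $|\mathrm{Cov}(R_{\mathrm{rev}},Y)|\le\|h_\phi-h^*\|_{L^2(P_Y)}\,\sigma_Y$. So at the level of covariance the forward and reverse statements are mirror images, and the lemma's ``in contrast'' has content only in your dependence form. The lemma is not provable as stated, and you are right to replace it rather than patch the paper's argument.

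\textbf{First correction: the injectivity remark.} Your closing sentence about injectivity is wrong. The Hoyer et al.\ theorem behind ANM identifiability assumes smoothness, a non-degeneracy condition on $f'$ and the noise log-density, and genericity of $p_X$ for fixed $f$ and noise law. It does not assume injectivity. In fact $Y=X^2+\varepsilon$ with symmetric $X$ satisfies your dependence form. If $X-h(Y)\perp Y$, then $E[X-h(Y)\mid Y]$ is constant, so $h=h^*+c$. Here $h^*\equiv 0$, which would force $X\perp Y$, contradicting $\mathrm{Cov}(X^2,Y)=\mathrm{Var}(X^2)>0$. The residual $X$ is uncorrelated with $Y$ yet dependent on it.

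So injectivity plays no role in either of your versions. In the paper's proof it is invoked only to get $\mathrm{Var}(X\mid Y)>0$, which the covariance computation never uses. The $X^2$ case is identifiable, and its failure in the experiments is a failure of CCA, not of identifiability.

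The same observation $h=h^*+c$ is what lets you pass from ``no smooth backward ANM'' to ``no measurable $h$''. The smoothness and genericity hypotheses should be stated as explicit additions, since the paper's setup (Lipschitz $f$, finite-variance noise) does not contain them.

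\textbf{Second correction: the generic covariance form is true but weak.} The claim about ``almost every SGD iterate'' needs an argument. One route is real-analyticity of $\phi\mapsto\mathrm{Cov}(h_\phi(Y),Y)$ for tanh networks together with non-singularity of the SGD update.

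More seriously, the exceptional hyperplane contains exactly the points training converges to in simple cases. Take a model that is affine in its output layer and has $Y$ in the span of its features (e.g.\ via a linear skip connection). At any population-stationary point, the residual is orthogonal to that span, so $\mathrm{Cov}(R_{\mathrm{rev}},Y)=0$. If Lemma~2 is to rely on residual dependence persisting at trained networks, the dependence form is the one that must carry the argument.
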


\textit{Why this matters.} This lemma establishes something fundamental: no matter how
good your reverse network gets---as long as it is not the exact population
optimum---its prediction errors are still statistically linked to $Y$. The network
cannot fully untangle $X$ from $Y$ because the structural noise $\varepsilon$ is baked
into $Y$ irreversibly. Think of it like trying to unmix salt from seawater chemically:
even if you recover most of the salt, the trace contamination remains and is
detectable. The forward network faces no such structural entanglement---its errors can
cleanly converge to pure noise.

\begin{proof}
At the population optimum $h^* = E[X \mid Y]$, the residual
$R^*_{\mathrm{rev}} = X - E[X \mid Y]$ satisfies $E[R^*_{\mathrm{rev}} \mid Y] = 0$
by the law of iterated expectations, but $\mathrm{Var}(R^*_{\mathrm{rev}} \mid Y) =
\mathrm{Var}(X \mid Y) > 0$ since $f$ is injective and $\varepsilon$ is nontrivial, so
$X$ is genuinely not determined by $Y$ alone.

For any finite-capacity approximation $h_\phi \neq h^*$, define the approximation error
$\delta_\phi(Y) = h_\phi(Y) - h^*(Y)$. Then:
\[
  R_{\mathrm{rev}} = X - h_\phi(Y) = R^*_{\mathrm{rev}} - \delta_\phi(Y).
\]
Since $\delta_\phi$ is a non-constant function of $Y$ (because $h_\phi \neq h^*$) and
$R^*_{\mathrm{rev}}$ has non-zero conditional variance given $Y$:
\begin{align*}
  \mathrm{Cov}(R_{\mathrm{rev}},\, Y)
  &= \mathrm{Cov}(R^*_{\mathrm{rev}},\, Y) - \mathrm{Cov}(\delta_\phi(Y),\, Y)\\
  &= -\mathrm{Cov}(\delta_\phi(Y),\, Y) \neq 0,
\end{align*}
because $\delta_\phi$ is a non-constant measurable function of $Y$, so its covariance
with $Y$ is nonzero under the full-support condition on $P(X)$. In contrast, the forward
residual $R_{\mathrm{fwd}} = Y - g_\theta(X) = f(X) + \varepsilon - g_\theta(X)$
converges to $\varepsilon$ at the optimum, and $\mathrm{Cov}(\varepsilon, X) = 0$ by the
ANM assumption. $\square$
\end{proof}

\begin{lemma}[Landscape Complexity]
\label{lem:gradvar}
Under the ANM with nonlinear injective $f$ and scale normalization, let
$\mathcal{L}^*_{\mathrm{fwd}} = E[\varepsilon^2]$ and
$\mathcal{L}^*_{\mathrm{rev}} = E[\mathrm{Var}(X \mid Y)]$ be the population-level
minimum losses in each direction. Then:
\begin{enumerate}[label=(\roman*)]
  \item $\mathcal{L}^*_{\mathrm{fwd}} = \sigma^2_\varepsilon < 1$ (since after
    z-scoring $Y$ has unit variance and $\varepsilon \perp X$ contributes only a
    fraction of $\mathrm{Var}(Y)$).
  \item $\mathcal{L}^*_{\mathrm{rev}} = E[\mathrm{Var}(X \mid Y)]$, which is
    structurally larger and more heteroscedastic than $\sigma^2_\varepsilon$ because
    $\mathrm{Var}(X \mid Y)$ varies with $Y$ whenever $f$ is nonlinear and injective.
  \item The reverse MSE landscape has a \emph{non-separable noise floor}: the
    conditional variance $\mathrm{Var}(X \mid Y = y)$ varies with $y$, so no single
    network width can uniformly approximate $h^*(Y)$ with uniformly small residuals.
    This creates a systematically harder optimization problem: gradients in the reverse
    direction must simultaneously fit a more complex conditional mean while working
    against a spatially non-uniform noise floor, inflating the effective number of steps
    required to drive held-out MSE below any fixed threshold $\tau$.
\end{enumerate}
Formally, for any finite-capacity $h_\phi$, the gradient covariance structure satisfies:
\[
  \mathrm{Cov}\bigl(R_{\mathrm{rev}}^2,\; \|\nabla_\phi h_\phi(Y)\|^2\bigr) \;\neq\; 0,
\]
because $R_{\mathrm{rev}}^2 = (X - h_\phi(Y))^2$ is correlated with
$\|\nabla_\phi h_\phi\|^2$ through their shared dependence on $Y$, while the analogous
forward covariance $\mathrm{Cov}(R_{\mathrm{fwd}}^2, \|\nabla_\theta g_\theta(X)\|^2)
\to 0$ as $g_\theta \to f$ (because $R_{\mathrm{fwd}} \to \varepsilon \perp X$).

\textbf{An important empirical note.} Instantaneous gradient norm variance at a single
training phase is \emph{not} a reliable proxy for this structural complexity difference.
Which direction shows larger gradient norm variance at any given moment depends on the
current parameter values, the DGP's curvature, and the training phase. The
theoretically meaningful quantity is the gap in time required to drive held-out MSE
below $\tau$, which is directly measured by $T_{\mathrm{rev}} - T_{\mathrm{fwd}}$.
This is why CCA measures convergence time, not gradient statistics.
\end{lemma}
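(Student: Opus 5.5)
We handle the three items separately, since they rest on different tools. For (i) we work with the z-scored variables and use $\mathrm{Var}(Y)=\mathrm{Var}(f(X))+\sigma^2$, which follows from $\varepsilon\perp X$. After standardization the forward floor is $\mathcal{L}^*_{\mathrm{fwd}}=\sigma^2/(\mathrm{Var}(f(X))+\sigma^2)$. Because $f$ is injective and $P(X)$ has full support, $f(X)$ is non-degenerate, so $\mathrm{Var}(f(X))>0$ and the ratio is strictly below $1$. This step is routine.

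For (ii) we use the law of total variance, $\mathcal{L}^*_{\mathrm{rev}}=E[\mathrm{Var}(\tilde X\mid\tilde Y)]=1-\mathrm{Var}(E[\tilde X\mid \tilde Y])$, together with Bayes' rule for the posterior, $p(x\mid y)\propto p_X(x)\,p_\varepsilon(y-f(x))$. The heteroscedasticity claim says that $y\mapsto\mathrm{Var}(X\mid Y=y)$ is non-constant, whereas the forward conditional variance $\mathrm{Var}(Y\mid X=x)=\sigma^2$ is constant by independence. We would argue by contradiction. If the posterior variance were constant in $y$, then, at least in the Gaussian-noise case via the exponential-family form of the posterior, the log-likelihood $-(y-f(x))^2/2\sigma^2$ would force $f$ to be affine on the support. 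That contradicts nonlinearity.

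We expect this to be the main obstacle. For general noise densities the argument needs a characterization of models with constant posterior variance, and we would first try a local expansion of $\mathrm{Var}(X\mid Y=y)$ in $y$ to exhibit a non-vanishing derivative. The phrase "structurally larger than $\sigma^2_\varepsilon$" does not hold for every ANM, since its truth depends on the signal-to-noise ratio. We would therefore prove only the heteroscedasticity part rigorously and treat the size comparison as a remark under an explicit additional assumption, for example a noise-dominated regime.

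For (iii) and the displayed covariance statement, the forward claim comes first. At $g_\theta=f$ the residual is $R_{\mathrm{fwd}}=\varepsilon$, so $R^2_{\mathrm{fwd}}$ is independent of $\|\nabla_\theta g_\theta(X)\|^2$ and their covariance vanishes. The limit as $g_\theta\to f$ then follows by dominated convergence, using the $L_f$-Lipschitz bound, the bounded weights $\|\theta\|\le B$ (so gradient norms grow at most polynomially in $|X|$), and finite fourth moments of $X$.

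For the reverse direction we write, as in the proof of Lemma~\ref{lem:residual}, $R_{\mathrm{rev}}=R^*_{\mathrm{rev}}-\delta_\phi(Y)$ and condition on $Y$. This gives
\[
E[R_{\mathrm{rev}}^2\mid Y]=\mathrm{Var}(X\mid Y)+\delta_\phi(Y)^2 ,
\]
and hence, with $G_\phi(Y)=\|\nabla_\phi h_\phi(Y)\|^2$,
\[
\mathrm{Cov}\bigl(R^2_{\mathrm{rev}},G_\phi(Y)\bigr)=\mathrm{Cov}\bigl(\mathrm{Var}(X\mid Y)+\delta_\phi(Y)^2,\;G_\phi(Y)\bigr).
\]
By (ii) the first argument is a non-constant function of $Y$. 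Since both arguments are functions of $Y$ alone, this covariance can vanish only on an exceptional set of $\phi$. We would show that this set has Lebesgue measure zero by noting that the covariance is real-analytic in $\phi$ for smooth activations and is not identically zero. That yields the "$\neq 0$" conclusion for almost every finite-capacity $h_\phi$, which is the strongest version we expect to be provable.

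The final paragraph of the lemma, about steps to reach $\tau$, is interpretive rather than a formal claim, so it is deferred to Lemma~3.
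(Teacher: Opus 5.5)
You have (i) right. You are also right not to assert the size comparison in (ii): the paper's proof only establishes $\mathcal{L}^*_{\mathrm{rev}}<1$, never $\mathcal{L}^*_{\mathrm{rev}}>\sigma^2_\varepsilon$. And weakening ``for any $h_\phi$'' to almost every $\phi$ is necessary. One caution on your proposed fix: a bare noise-dominated assumption does not rescue the comparison. For linear $f$ the standardized floors are $1-\rho^2$ forward and $1-\mathrm{Var}(E[X\mid Y])/\mathrm{Var}(X)\le 1-\rho^2$ reverse, with $\rho=\mathrm{corr}(X,Y)$, at every noise level. The inequality is strict whenever $E[X\mid Y]$ is nonlinear, and this persists for slightly nonlinear $f$.

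\textbf{The heteroscedasticity step.} The genuine gap is the claim that $y\mapsto\mathrm{Var}(X\mid Y=y)$ is non-constant, which carries all the non-trivial content of (ii), (iii) and the covariance claim. Your Gaussian-noise sketch does not deliver it. In $p(x\mid y)\propto p_X(x)e^{-f(x)^2/2\sigma^2}e^{yf(x)/\sigma^2}$ the sufficient statistic is $f(x)$, so the log-partition function controls the conditional cumulants of $f(X)$, not of $X$. Constancy of $\mathrm{Var}(f(X)\mid Y)$ is equivalent to $f(X)$ being Gaussian, which nonlinear $f$ permits (take $X=f^{-1}(Z)$ with $Z$ Gaussian, for surjective $f$). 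Constancy of $\mathrm{Var}(X\mid Y)$ is a different condition, and this structure gives no direct handle on it.

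For general noise no local expansion can succeed, because the claim is false under the paper's hypotheses. Take $f(x)=-\log(1+e^{-x})$, which is increasing, $1$-Lipschitz and nonlinear. Take $p_\varepsilon(t)\propto\exp(-e^{t}+ct)$ and $p_X(x)\propto(1+e^{-x})^{-c}\exp(-ae^{-x}-bx)$ with $a,b,c>0$; both have full support and all moments finite, and you centre $\varepsilon$ by shifting $f$ by a constant. A direct computation gives
\[
\log p(x,y)=-e^{y}+cy-(e^{y}+a)e^{-x}-bx+\mathrm{const}.
\]
With $s(y)=\log(e^{y}+a)$, this equals $-e^{-(x-s(y))}-b\,(x-s(y))$ plus a function of $y$ alone. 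Hence $X=s(Y)+\tilde\varepsilon$ with $\tilde\varepsilon\perp Y$, which is a backward additive noise model, and $\mathrm{Var}(X\mid Y=y)$ is constant. The same example refutes the paper's step ``$\mathrm{Var}(X\mid Y=y)$ is constant only when $X,Y$ are jointly Gaussian''. So the paper's proof does not supply the missing ingredient either: (ii)--(iii) need an extra hypothesis excluding such non-identifiable configurations, plus a real proof under it.

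\textbf{The covariance argument.} Your identity $E[R_{\mathrm{rev}}^2\mid Y]=\mathrm{Var}(X\mid Y)+\delta_\phi(Y)^2$ is correct. It is sharper than the paper's bare assertion that $\tfrac1m E[R_{\mathrm{rev}}^2\|\nabla_\phi h_\phi\|^2]$ does not factorize. But the almost-every-$\phi$ conclusion you draw from it carries no forward/reverse asymmetry. Its forward analogue is $E[R_{\mathrm{fwd}}^2\mid X]=\sigma^2+(f-g_\theta)(X)^2$. Hence $\mathrm{Cov}(R_{\mathrm{fwd}}^2,\|\nabla_\theta g_\theta(X)\|^2)=\mathrm{Cov}\bigl((f-g_\theta)(X)^2,\|\nabla_\theta g_\theta(X)\|^2\bigr)$, which is nonzero for almost every $\theta$ by exactly your reasoning. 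Indeed your argument never needs (ii), since $\delta_\phi^2$ alone makes the first argument non-constant for generic $\phi$.

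The two directions differ only in the constant $\sigma^2$ versus $\mathrm{Var}(X\mid Y)$, so the asymmetric content sits at the optimum. The forward covariance is exactly $0$ at $g_\theta=f$. Near $h^*$, the reverse covariance is approximately $\mathrm{Cov}(\mathrm{Var}(X\mid Y),G_\phi(Y))$ with your $G_\phi(Y)=\|\nabla_\phi h_\phi(Y)\|^2$. Its non-vanishing needs heteroscedasticity (it is $0$ in the example above) and non-orthogonality at that specific limiting parameter. A measure-zero exceptional set says nothing about a particular point or a limit, so this step needs a different argument. Even the generic statement still requires exhibiting one parameter with nonzero covariance, and checking that the expectation over unbounded $Y$ is analytic in $\phi$.
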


\textit{Why this matters.} Lemma~1 showed the reverse residuals stay correlated with
$Y$. Lemma~2 shows what that correlation costs in terms of the optimization problem.
The reverse network has a higher minimum loss (it can never get better than
$E[\mathrm{Var}(X \mid Y)]$), and the gradient noise it faces is spatially structured
in a way that cannot be averaged away by larger batches.

\begin{proof}
By the ANM and the law of total variance:
\[
  \mathrm{Var}(X) = E[\mathrm{Var}(X \mid Y)] + \mathrm{Var}(E[X \mid Y]),
\]
so $\mathcal{L}^*_{\mathrm{rev}} = E[\mathrm{Var}(X \mid Y)] \leq \mathrm{Var}(X) = 1$
(after standardization). Equality holds only when $E[X \mid Y]$ is constant, i.e., when
$X \perp Y$; under the ANM with nonlinear injective $f$, $X$ and $Y$ are dependent, so
strict inequality holds.

For claim (iii), $\mathrm{Var}(X \mid Y = y)$ is constant in $y$ only when $X$ and $Y$
are jointly Gaussian, which requires $f$ to be linear. For nonlinear $f$, the conditional
variance is a nontrivial function of $y$, so the reverse optimization landscape has a
spatially varying noise floor. Any mini-batch gradient estimate $\hat{\nabla}_\phi$ at
iterate $\phi_t$ has variance:
\[
  \frac{1}{m}\,E\bigl[(X - h_\phi(Y))^2 \cdot \|\nabla_\phi h_\phi(Y)\|^2\bigr],
\]
which does not factorize when $R_{\mathrm{rev}}$ and $\|\nabla_\phi h_\phi\|$ both
depend on $Y$. Lemma~\ref{lem:residual} established that $R_{\mathrm{rev}}$ is
correlated with $Y$ throughout training, so this non-factorizability persists throughout
training, creating a harder effective optimization problem. $\square$
\end{proof}

\begin{lemma}[Harder Landscape, More Steps]
\label{lem:sgdconv}
Consider two objectives $\mathcal{L}_1$ and $\mathcal{L}_2$ that both satisfy the
Polyak--\L{}ojasiewicz (PL) condition with constant $\mu > 0$, trained by SGD with
identical step size $\eta$ and mini-batch size $m$. If either:
\begin{itemize}
  \item[(a)] the population minimum of $\mathcal{L}_2$ exceeds that of $\mathcal{L}_1$
    by $\Delta^* > 0$, or
  \item[(b)] the gradient covariance structure of $\mathcal{L}_2$ is non-separable
    (as established for the reverse direction in Lemma~\ref{lem:gradvar}),
\end{itemize}
then the expected steps $T_2$ for $\mathcal{L}_2$ to reach threshold $\tau$ satisfies:
\[
  T_2 \;\geq\; T_1 + \Omega\!\left(\frac{1}{\eta\mu}\right),
\]
where $T_1$ is the corresponding step count for $\mathcal{L}_1$.
\end{lemma}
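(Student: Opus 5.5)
The plan is to reduce both cases to a single comparison of \emph{stationary noise floors} in the standard PL analysis of SGD, and then turn a gap in floors into a gap in hitting times. First I would state the assumptions the proof needs beyond the statement and that I intend to add: both $\mathcal{L}_i$ are $L$-smooth with the same $L$; the stochastic gradients are unbiased with second moment $\sigma_i^2/m$ around the true gradient; and the runs start from comparable initial suboptimality $\mathcal{L}_i(\theta_0)-\mathcal{L}_i^*\approx D_0$. These are implicit in "identical step size and batch size", since equal initialisation norms $\le B$ are given in \S\ref{sec:cca_setup}.

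The upper bound is the standard recursion. Under PL and smoothness,
\[
\mathbb{E}[\mathcal{L}_i(\theta_{t+1})-\mathcal{L}_i^*] \le (1-\eta\mu)\,\mathbb{E}[\mathcal{L}_i(\theta_t)-\mathcal{L}_i^*] + \tfrac{L\eta^2\sigma_i^2}{2m}.
\]
Unrolling gives
\[
\mathbb{E}[\mathcal{L}_i(\theta_t)] \le \mathcal{L}_i^* + F_i + (1-\eta\mu)^t D_0, \qquad F_i = \tfrac{L\eta\sigma_i^2}{2\mu m}.
\]
This yields $T_1 \le \frac{1}{\eta\mu}\log\frac{D_0}{\tau-\mathcal{L}_1^*-F_1}$. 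For $T_2$ I need the reverse inequality: a \emph{lower} bound on the expected loss trajectory. I would get it by assuming a matching upper curvature constant near the minimiser, i.e.\ a local quadratic growth of the form $\mathcal{L}_2-\mathcal{L}_2^*\le \frac{\mu'}{2}\mathrm{dist}^2$ with $\mu'$ comparable to $\mu$. I would also use the fact that SGD noise injects at least $c\,\eta^2\sigma_2^2/m$ of suboptimality per step. Together these give
\[
\mathbb{E}[\mathcal{L}_2(\theta_t)] \ge \mathcal{L}_2^* + c'F_2 + (1-c''\eta\mu)^t D_0,
\]
hence $T_2 \ge \frac{1}{c''\eta\mu}\log\frac{D_0}{\tau-\mathcal{L}_2^*-c'F_2}$.

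Subtracting the two bounds, everything reduces to showing that the effective floor of $\mathcal{L}_2$, namely $\mathcal{L}_2^*+c'F_2$, exceeds that of $\mathcal{L}_1$ by a positive margin $\kappa$. The ratio inside the logarithm is then bounded below by a constant $>1$, which gives $T_2-T_1\ge \frac{C}{\eta\mu}=\Omega(1/(\eta\mu))$. If the floor of $\mathcal{L}_2$ is at or above $\tau$, then $T_2=T_{\max}$ and the claim is trivial.
\begin{itemize}
\item \textbf{Case (a).} The margin is $\kappa=\Delta^*$ directly, using $\mathcal{L}^*_{\mathrm{rev}}>\mathcal{L}^*_{\mathrm{fwd}}$ from Lemma~\ref{lem:gradvar}(i)--(ii) in the application.
\item \textbf{Case (b).} I would write the per-sample gradient variance as $\frac1m\mathbb{E}[R^2\|\nabla h\|^2]$, as in the proof of Lemma~\ref{lem:gradvar}, and decompose it as
\[
\mathbb{E}[R^2]\,\mathbb{E}[\|\nabla h\|^2]+\mathrm{Cov}\bigl(R^2,\|\nabla h\|^2\bigr).
\]
The forward covariance vanishes at the optimum. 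So $\sigma_2^2>\sigma_1^2$, and hence $F_2>F_1$, provided the reverse covariance is strictly \emph{positive}.
\end{itemize}

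I expect two obstacles, and both are places where the statement as written is too weak.
\begin{itemize}
\item \textbf{Sign of the covariance in case (b).} Lemma~\ref{lem:gradvar} only gives $\mathrm{Cov}\neq 0$. A negative covariance would \emph{lower} the floor and reverse the conclusion. I would first try to show positivity from the heteroscedasticity of $\mathrm{Var}(X\mid Y=y)$: regions where the conditional variance is large are plausibly where the network is steep. Failing that, I would add "positive non-separability" as an explicit hypothesis of (b).
\item \textbf{The matching lower bound for SGD.} PL alone gives only upper bounds on convergence time. The $\Omega(\cdot)$ statement therefore genuinely needs the extra two-sided curvature and noise-injection assumptions above, and I would state them explicitly rather than hide them in the constant.
\end{itemize}
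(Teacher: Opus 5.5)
\textbf{Verdict: genuine gap.} Your outline is the paper's own: the PL--SGD bound, case~(a) via a larger effective gap, and case~(b) via an inflated $\eta\sigma^2/(2\mu)$ term. Both obstacles you flag are real, since the paper's proof uses only that upper bound and never fixes the sign of the covariance. Your repair, however, breaks at the subtraction step. Let $A_1,A_2$ be the ratios inside your logarithms. Your bounds give $T_1\lesssim\frac{1}{\eta\mu}\log A_1$ and $T_2\gtrsim\frac{1}{c''\eta\mu}\log A_2$, hence only
\[
  T_2-T_1\;\gtrsim\;\frac{1}{\eta\mu}\Bigl(\frac{\log A_2}{c''}-\log A_1\Bigr).
\]
Because $\mathcal{L}_2$ is itself PL and $L$-smooth, it obeys your upper bound, which forces $c''\ge 1$. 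Already for $\frac{\mu}{2}\|\theta\|^2$, where gradient descent gives exactly $(1-\eta\mu)^{2t}$, any valid lower bound of your form has $c''\ge 2-\eta\mu$. So you need $A_2\gtrsim A_1^{c''}$, not merely $A_2/A_1$ bounded above $1$. For example, with $D_0=1$, floors $0.1$ and $0.3$, and $\tau=0.5$, you get $A_1=2.5$ and $A_2=5<A_1^2$, so the bound is vacuous. Matching the rates requires the sharp contraction $1-2\eta\mu(1-L\eta/2)$ in the upper bound and a lower-bound curvature \emph{equal} to $\mu$, not merely comparable to it.

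Independently, all your bounds concern the mean curve $t\mapsto\mathbb{E}[\mathcal{L}(\theta_t)]$, whereas $T_2$ is the first time the random loss drops below $\tau$. A lower bound on the mean does not exclude an early downward fluctuation. Bounding $\mathbb{E}[T_2]$ from below therefore needs concentration of $\mathcal{L}_2(\theta_t)$ uniformly in $t\le t_0$. For the same reason, your ``trivial'' case $T_2=T_{\max}$ holds only when $\mathcal{L}_2^*\ge\tau$, not when merely $\mathcal{L}_2^*+c'F_2\ge\tau$.

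Case~(b) fails even after these repairs. The floor $F_i=\frac{L\eta\sigma_i^2}{2\mu m}$ is itself $O(\eta)$. So when the minima coincide and $\tau$ sits a fixed distance above both floors, $A_2/A_1=1+O(\eta)$, and you gain only $\frac{1}{\eta\mu}\log\frac{A_2}{A_1}=\frac{1}{\eta\mu}\,O(\eta)$ steps. That count does not grow as $\eta\to 0$, so it is not $\Omega(1/(\eta\mu))$ with an $\eta$-free constant. Within this framework, a difference in gradient noise alone cannot produce the claimed scaling; only a genuine gap in minima, as in~(a), or a $\tau$ within $O(\eta)$ of the floors can.

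Moreover, a positive reverse covariance does not give $\sigma_2^2>\sigma_1^2$. It shows only that $\sigma_2^2$ exceeds its own separable part $\mathbb{E}[R_{\mathrm{rev}}^2]\,\mathbb{E}\|\nabla_\phi h_\phi\|^2$. By contrast, $\sigma_1^2\approx\mathbb{E}[\varepsilon^2]\,\mathbb{E}\|\nabla_\theta g_\theta\|^2$ involves a different network's Jacobian; the paper's Experiment~4 reports reverse/forward gradient-variance ratios below $1$ for the cubic and exponential DGPs. What you actually need is $c'F_2-F_1>\mathcal{L}_1^*-\mathcal{L}_2^*$, and hypothesis~(b) does not supply it. The same objection applies to the paper's own treatment of~(b), which rests on inflating $\eta\sigma^2/(2\mu)$.

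Case~(a) can be rescued along your lines once the rates are matched and the hitting-time issue is handled. Case~(b) needs a quantitative replacement hypothesis.
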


\textit{Why the PL condition?} The Polyak--\L{}ojasiewicz condition says that the
gradient norm is lower-bounded by the suboptimality: $\|\nabla \mathcal{L}\|^2 \geq
2\mu(\mathcal{L} - \mathcal{L}^*)$. This is weaker than convexity --- it holds for
many overparameterized neural networks near their minima --- and it is what gives us
clean convergence rates.

\begin{proof}
Under the PL condition with constant $\mu$ and gradient noise variance $\sigma^2$, the
standard SGD convergence bound~\cite{bottou2010} gives:
\[
  E[\mathcal{L}(\theta_t) - \mathcal{L}^*]
  \;\leq\; (1 - 2\eta\mu)^t (L_0 - L^*) + \frac{\eta \sigma^2}{2\mu}.
\]

For condition~(a): if $\mathcal{L}^*_2 > \mathcal{L}^*_1$, then for $\tau$ in the
range $(\mathcal{L}^*_1,\, \mathcal{L}^*_1 + \Delta^*)$, objective~1 can reach $\tau$
but objective~2 cannot reach $\tau$ from above (it has $\mathcal{L}^*_2 > \tau$). For
$\tau$ above both optima, objective~2 must close a strictly larger effective gap,
requiring more steps.

For condition~(b): non-separability of the gradient covariance means the effective
gradient noise at each step is correlated with the current residual magnitude and the
local geometry. This creates an adaptive noise floor that persists throughout training
rather than decaying as $O(1/t)$. It inflates the $\frac{\eta\sigma^2}{2\mu}$ bias term
relative to the separable case where $\sigma^2$ can be treated as a fixed constant.

In both cases, $T_2 \geq T_1 + \Omega(1/(\eta\mu))$. $\square$
\end{proof}

\subsection{The CCA Asymmetry Theorem}

\begin{theorem}[CCA Asymmetry]
\label{thm:cca_main}
Under the ANM with nonlinear injective $f$ and scale normalization, if the forward and
reverse objectives both satisfy the PL condition locally near their respective minima
with constant $\mu > 0$, then:
\[
  E[T_{\mathrm{fwd}}] \;<\; E[T_{\mathrm{rev}}].
\]
The expected number of SGD steps to reach loss threshold $\tau$ is strictly smaller in
the causal direction than in the anti-causal direction.
\end{theorem}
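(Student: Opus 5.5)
The plan is to chain the three lemmas in the order the paper sets them up. Lemma~\ref{lem:residual} gives the structural input: reverse residuals stay correlated with $Y$, while forward residuals decorrelate from $X$. Lemma~\ref{lem:gradvar} turns this into a landscape gap with two parts. The first is a floor gap, $\mathcal{L}^*_{\mathrm{fwd}}=\sigma^2_\varepsilon$ versus $\mathcal{L}^*_{\mathrm{rev}}=E[\mathrm{Var}(X\mid Y)]$. The second is non-separable reverse gradient noise. Lemma~\ref{lem:sgdconv} then turns either kind of gap into the step-count gap $T_{\mathrm{rev}}\ge T_{\mathrm{fwd}}+\Omega(1/(\eta\mu))$. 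Since that gap is strictly positive, taking expectations gives $E[T_{\mathrm{fwd}}]<E[T_{\mathrm{rev}}]$.

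\textbf{Step 1: common setting.} After z-scoring, both networks start from random initializations with $\|\theta\|_2,\|\phi\|_2\le B$. I would assume (or argue from symmetric initialization) that the initial losses are comparable: $E[L_0^{\mathrm{fwd}}]$ and $E[L_0^{\mathrm{rev}}]$ are both $\approx 1$. I would then apply the PL-SGD bound from the proof of Lemma~\ref{lem:sgdconv} to each direction, with the same $\eta,m,\mu$, on the region where PL holds locally. Finally, I would pick $\tau$ in the regime the theorem concerns, so that both directions converge before $T_{\max}$, or else only the forward direction does. Where the reverse run is capped, the inequality becomes trivial because $T_{\mathrm{rev}}=T_{\max}>T_{\mathrm{fwd}}$.

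\textbf{Step 2: a strict landscape gap.} I would split into the two hypotheses of Lemma~\ref{lem:sgdconv}. For (a), the goal is $\Delta^*=E[\mathrm{Var}(X\mid Y)]-\sigma^2_\varepsilon$ in standardized units, together with $\Delta^*>0$. For (b), I would invoke the nonzero covariance $\mathrm{Cov}(R^2_{\mathrm{rev}},\|\nabla_\phi h_\phi\|^2)$ from Lemma~\ref{lem:gradvar}. Because it is nonzero at every finite-capacity iterate, the reverse noise term satisfies $\sigma^2_{\mathrm{rev}}>\sigma^2_{\mathrm{fwd}}$ near the optimum, where the forward covariance tends to $0$. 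Either branch by itself makes Lemma~\ref{lem:sgdconv} applicable.

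\textbf{Step 3: conclude.} Lemma~\ref{lem:sgdconv} gives $T_{\mathrm{rev}}\ge T_{\mathrm{fwd}}+c/(\eta\mu)$ for some $c>0$, along the comparable-initialization coupling of Step~1. I would take expectations over initialization and minibatch sampling, and use monotone convergence to handle capping at $T_{\max}$. This yields $E[T_{\mathrm{rev}}]-E[T_{\mathrm{fwd}}]\ge c/(\eta\mu)>0$.

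\textbf{Main obstacle.} The weak point is Step~2(a): showing $E[\mathrm{Var}(X\mid Y)]>\sigma^2_\varepsilon$ after standardization. Lemma~\ref{lem:gradvar} only establishes $\mathcal{L}^*_{\mathrm{rev}}<1$ and heteroscedasticity; it does not compare the two floors. The two quantities live in different units ($\tilde X$ versus $\tilde Y$), and I do not see a general inequality between them for every injective nonlinear $f$ and every noise law. My first attempt would therefore avoid (a) entirely and rest the proof on branch (b), the variance-term inflation. That in turn requires a quantitative lower bound on $\sigma^2_{\mathrm{rev}}-\sigma^2_{\mathrm{fwd}}$ in terms of $\mathrm{Var}_y(\mathrm{Var}(X\mid Y=y))$. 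If that bound proves too loose, I would restrict to the regime $\tau\in(\mathcal{L}^*_{\mathrm{fwd}},\mathcal{L}^*_{\mathrm{rev}})$ and state it as an explicit hypothesis, where the conclusion is immediate.
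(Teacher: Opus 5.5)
Your plan is the paper's own route: Lemma~\ref{lem:residual}, then the two conclusions of Lemma~\ref{lem:gradvar}, then either branch of Lemma~\ref{lem:sgdconv}. The weak point you flag is exactly where the paper's proof is unsupported. It cites Lemma~\ref{lem:gradvar} for $E[\mathrm{Var}(X\mid Y)]>\sigma^2_\varepsilon$, but that lemma's proof only shows $\mathcal{L}^*_{\mathrm{rev}}<1$. The inequality is in fact false in general. In standardized units the floors are $\mathcal{L}^*_{\mathrm{fwd}}=\sigma^2/\mathrm{Var}(Y)$ and $\mathcal{L}^*_{\mathrm{rev}}=E[\mathrm{Var}(X\mid Y)]/\mathrm{Var}(X)$. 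Take $X\sim\tfrac12\mathcal{N}(-1,s^2)+\tfrac12\mathcal{N}(1,s^2)$ with small $s$, and noise $\varepsilon\sim\mathcal{N}(0,\sigma^2)$ with $\sigma\ll s$. Let $f$ be odd, smooth and strictly increasing, with $f'=1$ on $\{x:\bigl|\,|x|-1\,\bigr|\le 6s\}$ and $f'=k$, $0<k\ll s$, away from these windows. Every ANM hypothesis holds. Then $f(1)=O(s)$, so $\mathrm{Var}(f(X))=O(s^2)$ and $\mathcal{L}^*_{\mathrm{fwd}}$ is of order $\sigma^2/s^2$. Meanwhile $Y$ identifies the cluster, and within a cluster $Y=f(\pm1)+(X\mp1)+\varepsilon$, so $\mathcal{L}^*_{\mathrm{rev}}\le\sigma^2(1+o(1))$. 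The reverse floor is therefore the smaller one, and branch (a) is unavailable. For $\tau$ strictly between the floors, the forward run can never reach $\tau$ at the population level, so $E[T_{\mathrm{fwd}}]=T_{\max}\ge E[T_{\mathrm{rev}}]$. The theorem as stated is false without further hypotheses. Your fallback $\tau\in(\mathcal{L}^*_{\mathrm{fwd}},\mathcal{L}^*_{\mathrm{rev}})$ simply assumes the missing ordering, so it proves a different, conditional statement. Some restriction on $\tau$ is needed regardless: below both floors, both runs hit $T_{\max}$ and the strict inequality fails for every DGP.

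Branch (b) does not rescue this. A nonzero $\mathrm{Cov}(R_{\mathrm{rev}}^2,\|\nabla_\phi h_\phi\|^2)$ has no sign and compares nothing across the two problems. The vanishing forward covariance only means the forward gradient noise factorizes (at the optimum it is proportional to $\sigma^2_{\tilde\varepsilon}\,E\|\nabla_\theta g_\theta\|^2$), not that it is smaller. In the example above, the forward residual variance is larger by a factor of order $1/s^2$. The paper's own Table~\ref{tab:gradvar} shows reverse/forward gradient-variance ratios below $1$ for the cubic, exponential and quadratic DGPs, which is why Lemma~\ref{lem:gradvar} declines to claim the ordering you need. Even the qualitative input is shaky: Lemma~\ref{lem:residual} uses that a non-constant $\delta_\phi(Y)$ has nonzero covariance with $Y$, which fails for every even $\delta_\phi$ when $Y$ is symmetric.

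More fundamentally, Step~3 cannot work as set up. The PL--SGD inequality behind Lemma~\ref{lem:sgdconv} is an \emph{upper} bound on $E[\mathcal{L}(\theta_t)-\mathcal{L}^*]$, so it yields only upper bounds on hitting times. A larger floor or noise constant enlarges an upper bound on $T_{\mathrm{rev}}$ and says nothing about $T_{\mathrm{rev}}$ itself. To get $T_{\mathrm{rev}}\ge T_{\mathrm{fwd}}+c/(\eta\mu)$ you would need two things: a lower bound on reverse progress (e.g.\ from smoothness or a provable stationary noise floor), and an upper bound on $T_{\mathrm{fwd}}$ valid from the random initialization. PL is assumed only near the minima, so the transient phase that dominates the observed step counts is uncontrolled in both directions. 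The paper's proof has these same gaps, so following it more closely will not close them. A correct version needs an explicit hypothesis supplying a quantitative floor or noise gap, together with two-sided hitting-time bounds.
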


\begin{proof}
By Lemma~\ref{lem:residual}, reverse-direction residuals remain correlated with $Y$
throughout training for any finite-capacity approximation. By Lemma~\ref{lem:gradvar},
this correlation implies~(i) a higher population-minimum loss for the reverse direction
($\mathcal{L}^*_{\mathrm{rev}} = E[\mathrm{Var}(X \mid Y)] > \sigma^2_\varepsilon =
\mathcal{L}^*_{\mathrm{fwd}}$) and (ii) non-separable gradient covariance in the reverse
direction, both constituting a harder effective optimization problem. By
Lemma~\ref{lem:sgdconv}, both conditions independently imply that the reverse direction
requires strictly more expected steps to reach threshold $\tau$. Therefore
$E[T_{\mathrm{rev}}] > E[T_{\mathrm{fwd}}]$. $\square$
\end{proof}

\begin{remark}[Scope of the Theorem]
The PL condition is used locally near the minima. For overparameterized MLPs, the
forward objective often satisfies this near the global minimum. The reverse landscape is
less regular, so the theorem gives a \emph{lower bound} on the convergence time gap.
The empirical gap observed in experiments (for example, forward convergence in 161
steps vs.\ reverse never converging in 3000 steps on the cubic DGP) is substantially
larger than the theoretical lower bound, because the reverse landscape also contains
flat regions and saddle points not captured by the PL approximation.
\end{remark}

\subsection{The Formal CCA Definition}

\begin{definition}[Causal Computational Asymmetry]
\label{def:cca}
The CCA score for candidate edge $X \to Y$ is:
\[
  \mathrm{CCA}(X \to Y) = T_{\mathrm{fwd}} - T_{\mathrm{rev}}
\]
where $T_{\mathrm{fwd}}$ is the steps for a network predicting $Y$ from $X$ to reach
loss $\tau$, and $T_{\mathrm{rev}}$ is the steps for a network predicting $X$ from $Y$.
A \emph{negative} score means the forward direction converged faster: predict $X \to Y$.
A \emph{positive} score means the reverse converged faster: predict $Y \to X$.

For a full causal graph, the score aggregates over all edges:
\[
  \mathrm{CCA}(G) = \sum_{(i,j)\,\in\, G}
  \bigl[T_{\mathrm{fwd}}^{(i,j)} - T_{\mathrm{rev}}^{(i,j)}\bigr].
\]
\end{definition}

\subsection{The CCL+ Extended Objective}

Adding the CCA term to the base CCL objective gives the full CCL+ objective:
\begin{equation}
\begin{aligned}
\mathcal{L}_{\mathrm{CCL+}} = &-\mathbb{E}_\pi[R(Y)]
  + \lambda_1\bigl[I(X;T) - \beta\, I_c(Y \mid \mathrm{do}(T))\bigr]\\
  &+ \lambda_2\,\mathrm{MDL}(G)
  + \lambda_3\,\mathrm{CCA}(G)
\end{aligned}
\label{eq:cclplus}
\end{equation}
The CCA term enters only through the XGES edge scoring function:
$\mathrm{Score}(G) = \mathrm{MDL}(G) + \lambda_3 \cdot \mathrm{CCA}(G)$. This means
CCA influences which edge orientations the graph search prefers, but it does not
independently move the reward or the compression terms.

\subsection{Boundary Conditions}

\begin{remark}[Three Established Boundary Conditions]
\label{rem:three_bcs}
CCA has three confirmed boundary conditions where it will not work correctly.
\end{remark}

\textbf{(1) Linear Gaussian mechanisms.}
When $f$ is linear, the ANM is not identifiable in either direction by \emph{any} method
--- this is a fundamental result, not a limitation of CCA specifically. Gaussian symmetry
makes the forward and reverse optimization problems indistinguishable. CCA produces
symmetric convergence, confirmed experimentally (0/30 correct for $Y = 2X +
\varepsilon$, indistinguishable from random).

\textbf{(2) Non-injective mechanisms.}
Lemma~\ref{lem:residual} requires $f$ to be injective. When $f$ is not one-to-one,
Peters et al.~\cite{peters2014jmlr} (Proposition~23) showed that ANM identifiability
fails entirely. For $Y = X^2 + \varepsilon$ with $X \sim \mathcal{N}(0,1)$, the
reverse regression target $E[X \mid Y]$ equals zero for \emph{all} values of $Y$ by
symmetry of $P(X)$. The reverse network converges in $O(1)$ steps (it just needs to
learn to predict zero), producing $T_{\mathrm{rev}} \ll T_{\mathrm{fwd}}$ and a
negative CCA score that predicts the wrong direction. Critically, this is \emph{not}
CCA working correctly: it is a degenerate collapse of the reverse target to a constant.
This boundary condition is confirmed in 30/30 seeds. The $X^2$ DGP should not be used
to benchmark or validate CCA.

\textbf{(3) Scale contamination without normalization.}
Without z-scoring both variables before training, output scale differences dominate
gradient magnitudes and can reverse the empirical convergence ordering. This is confirmed
for $Y = X^3 + \varepsilon$ without standardization: 6/30 correct vs.\ 26/30 with
standardization. Z-scoring is a mandatory preprocessing step, not an optional one.

\section{Mathematical Proofs: The CCL Supporting Theorems}
\label{sec:proofs}

This section contains the full CCL theoretical apparatus. These theorems establish that
the CCL optimization converges, that the causal graph is consistently learned, that the
information compression preserves the causal structure, and that the sample complexity
scales with causal rather than statistical complexity. Each subsection states one or more
theorems and gives the full proof.

\subsection{Lemma MDL-R: Consistency on Compressed Representations}
\label{sec:mdlr}

The CCL loop applies MDL-based graph scoring to compressed representations $T^{(k)}$
produced by the CIB encoder at iteration $k$. This lemma establishes that MDL scoring
remains consistent on these compressed representations as the encoder converges, even
though the representation is changing throughout training.

\begin{lemma}[MDL Consistency on Compressed Representations]
\label{lem:mdlr}
Let $\{\mathrm{enc}_k\}_{k \geq 0}$ be a sequence of encoders with Lipschitz constant
$L < \infty$, satisfying the GEM property (each update decreases the variational
objective). Define $T^{(k)} = \mathrm{enc}_k(X)$ and let $T^*$ denote the
population-optimal representation. Suppose the encoder sequence satisfies
$\eta_k = O(1/k)$:
\begin{equation}
  \|\mathrm{enc}_k - \mathrm{enc}^*\|_\infty \;\leq\; \frac{C_{\mathrm{enc}}}{k},
  \qquad C_{\mathrm{enc}} < \infty.
  \label{eq:enc_conv}
\end{equation}
Then the MDL-selected graph $\hat{G}(n,k) = \arg\min_G\, \mathrm{MDL}(G;\, T^{(k)})$
satisfies $P[\hat{G}(n,k) = G^*] \to 1$ as $n \to \infty$ and $k \to \infty$.
\end{lemma}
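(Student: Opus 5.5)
The plan is to reduce the statement to classical MDL consistency on the population-optimal representation $T^*$ and then handle the changing encoder as a perturbation. I will assume, as is implicit in the statement, that $G^*$ is the MDL-identifiable graph for the law of $(T^*,\cdot)$. By this I mean that the population MDL score per sample, $\mathrm{MDL}(G;T^*)/n \to S^*(G)$, has $G^*$ as its strict minimizer among the finitely many candidate graphs, with margin
\[
\gamma = \min_{G\neq G^*}\bigl[S^*(G)-S^*(G^*)\bigr] > 0 .
\]
For overfitting supersets of $G^*$ there is no likelihood gap. For those I will instead use the usual $\tfrac{1}{2}(\#\text{params})\log n$ penalty gap, which grows like $\log n$. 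Underfitting graphs are separated by a likelihood gap that is linear in $n$.

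First I show that the empirical MDL score is stable under small changes of the representation. For a fixed graph $G$ the score splits into a model-cost term that does not depend on the data and a negative log-likelihood term. I will assume the conditional model families are regular: smooth in their arguments, with log-densities Lipschitz in the data on the relevant support, with constant $L_\ell$ (finite fourth moments control the tails). Under that assumption,
\[
\tfrac1n\bigl|\mathrm{MDL}(G;T^{(k)})-\mathrm{MDL}(G;T^*)\bigr| \;\le\; L_\ell\,\|\mathrm{enc}_k-\mathrm{enc}^*\|_\infty \;\le\; \frac{L_\ell C_{\mathrm{enc}}}{k}.
\]
This uses \eqref{eq:enc_conv} directly, and it holds uniformly over the finite graph family. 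The Lipschitz constant $L$ of the encoders ensures $T^{(k)}$ stays in a bounded neighbourhood of $T^*$, so a single constant $L_\ell$ suffices. The GEM property is only used to guarantee that the sequence is well defined and does not diverge.

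Next I combine this with the fixed-representation result. With probability tending to one as $n\to\infty$, every underfitting $G$ satisfies $\mathrm{MDL}(G;T^*)-\mathrm{MDL}(G^*;T^*) \ge n\gamma/2$. Every overfitting $G$ satisfies $\mathrm{MDL}(G;T^*)-\mathrm{MDL}(G^*;T^*) \ge c\log n$. The underfitting case is then simple: once $L_\ell C_{\mathrm{enc}}/k < \gamma/8$, the perturbation cannot reverse the ordering.

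The main obstacle is the overfitting case. There the gap is only $\Theta(\log n)$, while the crude perturbation bound above is $O(n/k)$, so a joint limit would force $k \gg n/\log n$. To avoid this restriction I will bound the likelihood-ratio difference instead of the individual scores. For nested graphs $G^*\subset G$ evaluated at the same data $T^{(k)}$, the log-likelihood ratio is a smooth functional of the empirical distribution. Its deviation from the $T^*$ version is then controlled by a product of two small quantities: the parameter-estimation error $O_p(n^{-1/2})$ and the representation error $O(1/k)$. This gives a difference of order $\sqrt n/k$. If that still fails to be $o(\log n)$, I will state the lemma along an iterated or joint limit with $k\ge k(n)$, where $k(n)/(\sqrt n/\log n)\to\infty$. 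A union bound over the finite graph class then gives $P[\hat G(n,k)=G^*]\to1$.
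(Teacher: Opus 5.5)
Your architecture matches the paper's proof. Your per-graph perturbation bound is its Step~2, $|\mathrm{MDL}(G;T^{(k)})-\mathrm{MDL}(G;T^*)|\lesssim nC_LC_{\mathrm{enc}}/k$. Your $c\log n$ margin on $T^*$ is its Step~3, via Kaltenpoth--Vreeken. Your crude dominance condition is exactly its Step~4, $k\ge k_0(n)\asymp n/\log n$.

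You improve on it in two ways. First, the model-cost term is data-independent, so it cancels exactly. This avoids the paper's stray $O(\log n)$ in Step~2 and its unexplained passage from $c\log n-O(\log n)$ to $c'\log n$. Second, your underfitting/overfitting split shows the strong coupling is needed only against graphs whose model already contains $P^*$. For all other graphs the linear-in-$n$ gap survives for every $k\ge k_1$, independently of $n$. One small fix: define $\gamma$ as a minimum over those other graphs only. As written it is a minimum over all $G\neq G^*$, which equals $0$ by your own remark about supersets.

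You are right that some coupling $k\ge k(n)$, or an iterated limit $\lim_n\lim_k$, is unavoidable. Suppose that for infinitely many $k$ the law of $T^{(k)}$ is not Markov to $G^*$, say with a spurious partial correlation $\rho_k\neq 0$. Then for each such fixed $k$, MDL picks a supergraph as $n\to\infty$, so the unrestricted double limit in the statement fails in general. The paper's proof also only delivers the coupled version. Its remark that $k\ge\sqrt n$ suffices does not follow from its own first-order bound, since $\sqrt n\ll n/\log n$.

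The genuine gap is in your refinement. Your $\sqrt n/k$ bound keeps only the cross term between estimation error and representation error. In a regular model, the per-sample likelihood-ratio functional
$\Phi(P)=\max_{\Theta_G}E_P\log p_\theta-\max_{\Theta_{G^*}}E_P\log p_\theta$
is \emph{quadratic} around the $G^*$-model. Write $b_k=O(1/k)$ for the shift from the law of $T^*$ to that of $T^{(k)}$. Then
$n\Phi(\hat P^{(k)}_n)=O_p(1)+O_p(\sqrt n/k)+\Theta(n\|b_k\|^2)$.
In the partial-correlation example, the last term is the deterministic gain $\tfrac n2\rho_k^2\asymp n/k^2$. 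It is absent only when the cross term is absent too.

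Beating the $\tfrac d2\log n$ penalty therefore needs $k(n)\gg\sqrt{n/\log n}$. This is strictly stronger than your condition $k(n)\gg\sqrt n/\log n$. For example, $k=\sqrt{n/\log n}$ meets your condition but leaves a bias gain of order $\log n$, comparable to the penalty, so nothing guarantees $G^*$ wins.

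With this correction, your second-order route buys something the paper's argument does not: it actually justifies $k\ge\sqrt n$. Without the correction, the rate you have proved is only the paper's $k\ge k_0(n)\asymp n/\log n$.
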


\begin{proof}
\textit{Step 1 (Encoder convergence rate).}
The CIB encoder minimizes a variational objective that decreases monotonically at each
iteration~\cite{dempster1977,wu1983} (GEM property). The parameter space is compact. By
Wu~\cite{wu1983} (Theorem~1), all limit points are stationary. Under diagonal
covariance (Assumption~F2), the CIB objective is strictly convex in each encoder
dimension, giving convergence rate $O(1/k)$~\cite{dempster1977}, yielding
$\eta_k = O(1/k)$ as stated.

\textit{Step 2 (MDL score perturbation bound).}
By the $L$-Lipschitz condition and~\eqref{eq:enc_conv}:
\begin{align*}
  &\bigl|\mathrm{MDL}(G;\, T^{(k)}) - \mathrm{MDL}(G;\, T^*)\bigr|\\
  &\quad\leq\; n C_L \eta_k + O(\log n)
  = \frac{n C_L C_{\mathrm{enc}}}{k} + O(\log n).
\end{align*}

\textit{Step 3 (MDL consistency on $T^*$).}
By Kaltenpoth and Vreeken~\cite{kaltenpoth2023} Theorem~6,
$P[\hat{G}(n;\, T^*) = G^*] \to 1$ as $n \to \infty$, with MDL gap
$\Delta_{\min}(n) \geq c \cdot \log n$ for some constant $c > 0$.

\textit{Step 4 (Dominance condition).}
The correct graph retains lower MDL on $T^{(k)}$ whenever
$2 \cdot \frac{n C_L C_{\mathrm{enc}}}{k} < c\log n - O(\log n)$,
i.e., $k > \frac{2n C_L C_{\mathrm{enc}}}{c' \log n}$ for some $c' > 0$.
For any fixed $n$, taking $k \geq k_0(n) = \lceil 2n C_L C_{\mathrm{enc}} / (c'\log n)
\rceil$ ensures the perturbation is dominated by $\Delta_{\min}(n)$. As both
$n, k \to \infty$ jointly (e.g., $k \geq \sqrt{n}$ suffices since
$\sqrt{n}/\log n \to \infty$), $P[\hat{G}(n,k) = G^*] \to 1$. $\square$
\end{proof}

\subsection{Theorem F: Faithfulness Preservation Under CIB Compression}
\label{sec:thmf}

Faithfulness is the condition that every conditional independence in the distribution is
entailed by the causal graph (no accidental independence). If compression destroys
faithfulness, the subsequent graph learning step will make errors. Theorem~F shows that
the CIB compression preserves faithfulness, locally near the optimal encoder.

\begin{assumption}[F1] For every identifiable edge $(X_i \to Y)$ in $G_k$:
$I_c(Y \mid \mathrm{do}(T_i);\, G_k) > 0$.\end{assumption}

\begin{assumption}[F2] The encoder uses diagonal covariance~\cite{alemi2017},
preventing distinct causal parents from merging into a single latent
dimension.\end{assumption}

\begin{assumption}[F3] $P(X, Y)$ has full support.\end{assumption}

\begin{assumption}[F4] The encoder sequence satisfies
$\|\mathrm{enc}_k - \mathrm{enc}^*\|_\infty \leq \eta_k$ with $\eta_k = O(1/k)$, as
established by Lemma~\ref{lem:mdlr}.\end{assumption}

\begin{theorem}[Faithfulness Preservation --- Local Convergence]
\label{thm:F}
Under Assumptions~F1--F4, as $k \to \infty$, every local minimum of the CIB objective
in a neighborhood of $\mathrm{enc}^*$ satisfies the faithfulness condition between
$P(T^{(k)})$ and $G^*_T$ up to $\epsilon_G$ in total variation for any $\epsilon_G > 0$.

\textbf{Note:} This is a \emph{local convergence} result. It holds in a neighborhood of
the population-optimal encoder $\mathrm{enc}^*$. Global convergence from arbitrary
initialization is not claimed; for general nonconvex encoder families the landscape may
contain other local minima. The finite-sample convergence rate is an acknowledged open
problem (see Remark~\ref{rem:F}).
\end{theorem}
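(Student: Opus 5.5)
The plan is a perturbation argument: show that at the population-optimal encoder $\mathrm{enc}^*$ the representation $T^*$ is faithful to $G^*_T$ with a strict margin, and then show that the margin survives the $O(1/k)$ encoder error of Assumption~F4, measured in total variation.

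\textbf{Step 1: faithfulness at $\mathrm{enc}^*$.} Faithfulness can fail in two ways: an edge of $G^*_T$ whose dependence vanishes, or an accidental independence caused by merged coordinates.
\begin{itemize}
\item Assumption~F1 gives $I_c(Y \mid \mathrm{do}(T_i)) > 0$ for every identifiable edge. This rules out vanishing edge dependence.
\item Assumption~F2 (diagonal covariance) keeps distinct causal parents in distinct latent coordinates. This rules out merging and the cancellation that could come with it.
\item Assumption~F3 (full support) makes conditional distributions well defined and the conditional mutual informations continuous functionals of $P(T, Y)$.
\end{itemize}
With finitely many edges, set
\[
\gamma := \min_{(i \to Y) \in G^*_T} I_c(Y \mid \mathrm{do}(T^*_i)) > 0 .
\]
I would also define $\gamma'$ as the minimum conditional mutual information over all adjacency and d-connection statements the graph entails. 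Then $\gamma' > 0$ as well, and $\gamma$ and $\gamma'$ are the faithfulness margins at the optimum.

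\textbf{Step 2: transfer to $T^{(k)}$.}
\begin{itemize}
\item Since $\mathrm{enc}_k$ is $L$-Lipschitz and $\|\mathrm{enc}_k - \mathrm{enc}^*\|_\infty \le \eta_k$, the coupling $T^{(k)} - T^* = (\mathrm{enc}_k - \mathrm{enc}^*)(X)$ is uniformly bounded by $\eta_k$.
\item Because the encoder has Gaussian diagonal-covariance noise (Assumption~F2), this mean shift becomes a total variation bound $\mathrm{TV}(P(T^{(k)},Y), P(T^*,Y)) \le C\eta_k$, using Pinsker's inequality on the KL divergence between Gaussians with shifted means.
\item Continuity of mutual information under total variation perturbation (a Fannes/Alicki-type bound, valid under the full-support and finite-moment conditions) then moves each conditional mutual information by at most $\omega(C\eta_k)$, where $\omega$ is a modulus of continuity with $\omega(s) \to 0$.
\end{itemize}
Once $\omega(C\eta_k) < \min(\gamma, \gamma')/2$, every dependence entailed by $G^*_T$ remains strictly positive in $P(T^{(k)})$. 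Every conditional independence in $P(T^{(k)})$ then holds up to $\epsilon_G$ in total variation once $C\eta_k \le \epsilon_G$. Finally, $\eta_k = O(1/k)$ means both conditions hold for all $k \ge k_0(\epsilon_G)$.

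\textbf{Step 3: extend to local minima.} The claim concerns every local minimum of the CIB objective in a neighbourhood $\mathcal{U}$ of $\mathrm{enc}^*$, so I would choose $\mathcal{U}$ as a sup-norm ball of radius $r$.
\begin{itemize}
\item Assumption~F2 makes the objective strictly convex in each encoder dimension (as used in Lemma~\ref{lem:mdlr}). Hence local minima inside $\mathcal{U}$ are close to $\mathrm{enc}^*$.
\item Any encoder within $\eta \le r$ inherits the Step~2 bound.
\end{itemize}
Picking $r$ small enough relative to $\gamma$ and $\epsilon_G$ gives the claim uniformly over $\mathcal{U}$.

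\textbf{Main obstacle.} The difficult step is the continuity of conditional and interventional mutual information under total variation perturbation in Step~2. Mutual information is not continuous in total variation in general on continuous spaces. I would first try to get continuity from the Gaussian encoder noise, which makes the densities smooth and bounded below on compacts, together with the finite-moment assumptions, which control the tails. If that fails, I would weaken the conclusion to exactly what the theorem states: faithfulness up to $\epsilon_G$ in total variation, where only the TV closeness and the positive margins $\gamma, \gamma'$ are needed.
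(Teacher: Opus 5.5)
\paragraph{Gap: Step~1 assumes the conclusion.} Your perturbation machinery is reasonable as far as it goes: a TV bound between $P(T^{(k)},Y)$ and $P(T^*,Y)$ from Gaussian encoder noise and Pinsker, then uniformity over a small sup-norm ball. But the argument has a gap at its base. Step~1 simply asserts $\gamma'>0$, i.e.\ that every d-connection statement of $G^*_T$ is witnessed by strictly positive conditional mutual information in $P(T^*)$. F1--F3 do not deliver this:
\begin{itemize}
\item F1 gives positive \emph{interventional} information $I_c(Y\mid\mathrm{do}(T_i))$, and only for edges into $Y$ of $G_k$. It says nothing about observational quantities $I(T_i;T_j\mid T_S)$ for arbitrary d-connected pairs and conditioning sets. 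Positive total causal information does not even preclude $T_i\perp Y\mid T_S$ for some $S$ (collider-induced cancellation).
\item F2 keeps parents in separate coordinates, but it does not stop a coordinate from discarding the relevant dependence.
\item F3 is only a support condition.
\end{itemize}
So $\gamma'>0$ is exactly faithfulness of $P(T^*)$ to $G^*_T$ with a margin. That is the $k=\infty$ case of the theorem, so Step~1 is circular.

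\paragraph{What the paper does instead.} The missing idea is the link back to the original variables. The paper fixes a triple with $X_i$, $X_j$ d-connected given $Z$ in $G^*$. It uses faithfulness of $P(V)$ with respect to $G^*$ (an assumption outside F1--F4; cf.\ A2) to get $I(X_i;X_j\mid Z)>0$. It then uses F4 ($\eta_k\to 0$) to put $\mathrm{enc}_k$ near $\mathrm{enc}^*$, and invokes Simoes et al.\ (Theorem~3) to conclude $I(T_i;T_j\mid T_Z)>0$, contradicting any assumed independence. This is done triple by triple, with no uniform margin and no continuity estimate.

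\paragraph{Your ``main obstacle'' is misplaced.} Once $P(T^*)$ is known to be faithful, the conclusion as worded (faithfulness up to $\epsilon_G$ in total variation) follows from your TV bound alone, as your own fallback concedes. So continuity of mutual information under TV perturbation can be avoided, whereas positivity at the optimum cannot.

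\paragraph{Smaller points.}
\begin{itemize}
\item Your TV bound relies on reading F2 as a stochastic Gaussian encoder with covariance bounded below uniformly in $k$. For deterministic encoders, which is how the paper's proof treats them, sup-norm closeness gives no TV control: a $2$-Lipschitz map within $1/k$ of the identity can send half the mass of a uniform law onto atoms.
\item You never establish the Markov half, that d-separations in $G^*_T$ yield (approximate) independences in $P(T^{(k)})$. The paper's proof handles this as a separate direction via the componentwise structure $T_i=\mathrm{enc}_i(X_i)$. Your remark that conditional independences hold ``up to $\epsilon_G$'' presupposes that $P(T^*)$ is Markov to $G^*_T$, which you never show.
\item The appeal to coordinatewise strict convexity in Step~3 is unnecessary, since every encoder in a small enough sup-ball already inherits Step~2. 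It also would not by itself localize local minima.
\end{itemize}
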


\begin{proof}
\textit{Forward direction.} If $X_i \perp X_j \mid Z$ in $G^*$, the Markov condition
gives $(X_i \perp X_j \mid Z)_{P(V)}$. Since $T_i = \mathrm{enc}_i(X_i)$ and
$T_j = \mathrm{enc}_j(X_j)$ are deterministic functions under Assumption~F2, the data
processing inequality gives $(T_i \perp T_j \mid \mathrm{enc}(Z))_{P(T)}$ for all $k$.

\textit{Reverse direction.} Suppose $(T_i \perp T_j \mid T_Z)_{P(T^{(k)})}$ but $X_i$
is not $d$-separated from $X_j$ given $Z$ in $G^*$. Faithfulness of $P(V)$ w.r.t.\
$G^*$ gives $I(X_i;\, X_j \mid Z) > 0$. By Assumption~F4, $\eta_k \to 0$, so for $k$
in a neighborhood of convergence the encoder is sufficiently close to $\mathrm{enc}^*$.
Simoes et al.~\cite{simoes2024} Theorem~3 then gives $I(T_i;\, T_j \mid T_Z) > 0$,
contradicting the assumed conditional independence. $\square$
\end{proof}

\begin{remark}[Finite-Sample Boundary Condition]
\label{rem:F}
For finite $k$ and finite $n$, $G_k$ may differ from $G^*$, and the faithfulness
guarantee holds only relative to the current iterate. The rate at which this error
shrinks as $k, n \to \infty$ has not been quantified. This is an acknowledged open
problem.
\end{remark}

\subsection{Theorem S: MDL Efficiency Inheritance}

\begin{theorem}[MDL Efficiency Inheritance]
\label{thm:S}
For linear Gaussian SCMs and additive noise models with sub-Gaussian noise,
$|M(G) - K(G)| \leq O(\log n)$. Against hypothesis classes that represent the full
joint distribution without causal factorization (non-parametric learners), CCL under
MDL achieves the causal optimum within $\mathrm{poly}(n)$ overhead, while such learners
require exponentially more samples. This advantage does not extend to parametric
learners that also factorize over edges.
\end{theorem}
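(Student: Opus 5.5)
The proof splits into three parts. First, the description-length bound $|M(G)-K(G)|\le O(\log n)$ for the two model classes. Second, a sample-complexity separation against non-parametric joint learners. Third, a short argument that the separation disappears for parametric learners that also factorize over edges.

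For the first part, I will write the two-part MDL code of the data under graph $G$ as
\[
M(G) = \sum_{i}\Bigl[L(\theta_i) + L(x_i \mid \mathrm{pa}_i, \theta_i)\Bigr],
\]
where the code factorizes over the mechanisms of $G$. For linear Gaussian SCMs each mechanism has finitely many parameters. The standard BIC/MDL expansion gives $\tfrac{k_i}{2}\log n + O(1)$ for the parameter cost plus the negative log-likelihood of the data. Kaltenpoth and Vreeken~\cite{kaltenpoth2023}, already invoked in Lemma~\ref{lem:mdlr}, shows this is asymptotically equivalent to BIC.

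For ANMs with sub-Gaussian noise, I will use sub-Gaussian concentration of the empirical residual log-likelihood around its expectation. The goal is to show that quantizing parameters at resolution $1/\sqrt{n}$ costs only $O(\log n)$ per parameter, with an $O(\log n)$ likelihood penalty.

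I will then compare with $K(G)$, read as the Kolmogorov complexity of the data given the mechanisms of $G$. The invariance theorem~\cite{kolmogorov1965} bounds $M(G)$ below by $K(G) - O(1)$. The explicit code above bounds $M(G)$ above by $K(G) + O(\log n)$ whenever the mechanism family is computable with parameters finitely describable to precision $n^{-1/2}$.

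For the second part, I will count parameters. Take a CCL learner restricted to $G$ with $d_c(G)$ edges and bounded in-degree. Uniform convergence over the factorized class needs $n = \mathrm{poly}(d_c(G), 1/\epsilon)$ samples. By the first part and the MDL consistency of Lemma~\ref{lem:mdlr}, selecting the minimal description length then recovers $G^*$ within $\mathrm{poly}(n)$ overhead.

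For the non-parametric joint learner, I will run a minimax lower bound on density estimation over $p$ variables. Standard Fano or Assouad constructions on product-smoothness classes give a sample requirement of order $\epsilon^{-(2s+p)/s}$. This is exponential in $p$, while the factorized class needs only polynomially many samples in $p$ when in-degree is bounded. That gives the claimed exponential gap.

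For the third part, I will note that a parametric learner factorizing over the same edges has the same parameter count up to constants. Its sample complexity is therefore of the same polynomial order, so no exponential separation exists.

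The main obstacle is the second part, specifically making ``exponentially more samples'' precise. The comparison is only meaningful in the dimension $p$, for a fixed smoothness class and bounded in-degree. I will state the theorem in that regime and build the Fano packing from perturbations of the joint density that are invisible to low-order marginals. The sub-Gaussian ANM step in the first part is secondary but delicate: quantizing $f$ requires a finite-dimensional or metric-entropy-controlled function class. I will assume $f$ lies in a class with covering number $\log N(\delta) = O(\log(1/\delta))$ so that the $O(\log n)$ bound survives.
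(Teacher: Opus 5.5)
Your plan is a sound formalization of a loosely posed statement, but for the separation part it takes a genuinely different route from the paper.

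On the $O(\log n)$ sandwich you and the paper agree in substance. The paper just cites Kaltenpoth--Vreeken (MDL $\to$ BIC for linear Gaussian SCMs), BIC consistency, and the invariance theorem (Li--Vit\'anyi Thm.~2.1.1) to get $K(G)\le M(G)\le K(G)+O(\log n)$; your quantized two-part code is the worked-out version. Note that the upper bound follows from your explicit code only when $K(G)$ is the ideal two-part length $\min_\theta[K(\theta)-\log P_\theta(D)]$ within $G$. In that case $K(G)\ge-\log P_{\hat\theta}(D)$, and quantizing at $n^{-1/2}$ costs $\tfrac{k}{2}\log n+O(1)$. If instead $K(G)$ is the plain complexity of the data, an explicit code only bounds $K$ from above. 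You would then need a randomness-deficiency argument, and the bound would hold only with high probability; the paper's one-line citation skips this too. Your metric-entropy restriction on $f$ is a real improvement: for genuinely nonparametric $f$ the stochastic complexity grows polynomially in $n$, so the paper's extension to arbitrary sub-Gaussian ANMs is not justified without it.

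For the separation, the paper argues purely in bits. An edge costs $O(\log|V|)$ bits and a joint density at precision $1/\sqrt n$ costs $O(n)$ bits, so $\Delta K=\Omega(n)$, giving an ``efficiency ratio'' $2^{\Delta K}$. It then asserts the $\mathrm{poly}(n)$ overhead and does not argue the parametric caveat at all. You instead set a Fano/Assouad minimax lower bound of order $\epsilon^{-(2s+p)/s}$ for joint density estimation against a polynomial uniform-convergence bound for the bounded-in-degree factorized class, and use parameter counting for parametric factorized learners.

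What each buys: the paper's route is shorter, but it never converts description length into sample size, and its ``exponential'' is in $n$. Yours gives an honest sample-complexity separation, exponential in the dimension $p$. The price is a narrower regime (fixed $s$, bounded in-degree), careful tracking of $p$-dependent constants in the packing, and reading the claim as a comparison of worst-case guarantees over the two classes. A learner over the full joint class need not pay this rate on every factorized truth.

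One loose end: Lemma~\ref{lem:mdlr} gives only consistency, not a rate. Get the polynomial bound for recovering $G^*$ directly from your uniform-convergence bound, with a union bound over the $p^{O(rp)}$ graphs of in-degree at most $r$.
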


\begin{proof}
Kaltenpoth and Vreeken~\cite{kaltenpoth2023} Theorem~6 establishes that MDL converges
to BIC asymptotically for linear Gaussian SCMs. BIC is consistent by
Schwarz~\cite{schwarz1978} and B\"{u}hlmann et al.~\cite{buhlmann2014}. The gap
$K(G) \leq M(G) \leq K(G) + O(\log n)$ follows from Li and
Vit\'{a}nyi~\cite{livitanyi2008} Theorem~2.1.1.

Against non-parametric learners: representing a causal edge requires $O(\log|V|)$ bits;
representing the full joint density to precision $\varepsilon = O(1/\sqrt{n})$ requires
$O(n)$ bits. The description length gap is $\Delta K = \Omega(n)$, yielding an
efficiency ratio of $2^{\Delta K}$. The MDL approximation introduces overhead of at most
$\mathrm{poly}(n)$. $\square$
\end{proof}

\subsection{Theorem D: Linear Complexity Reduction}

This theorem is what makes the sample complexity bound in Theorem~1 scale linearly with
the number of causal edges rather than combinatorially with the number of variables. It
exploits the Markov factorization of the causal graph.

\begin{theorem}[Linear Complexity Under the Markov Condition]
\label{thm:D}
Under the Markov condition, the causal risk gap decomposes additively:
\begin{equation}
R_{\mathrm{causal}}(\pi_{\mathrm{CCL}}) - R^*_{\mathrm{causal}}
  \;\leq\; \frac{1}{1-\gamma}
  \sum_{e\,\in\, E_G} \delta_e,
\end{equation}
where $\delta_e = \|P(X_j \mid \mathrm{pa}_j) - \hat{P}(X_j \mid \mathrm{pa}_j)\|_{\mathrm{TV}}$
is the estimation error on edge $e = (X_i \to X_j)$, and the sum is over edges in the
minimal I-map of $G$.
\end{theorem}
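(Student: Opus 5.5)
The plan is to bound the causal risk gap by a sum of per-mechanism errors, in two stages. First, use the Markov factorization to turn the discrepancy between the true and estimated interventional distributions into a sum of local conditional errors. Second, carry a one-step total-variation bound through the discounted horizon to pick up the factor $\frac{1}{1-\gamma}$.

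\textbf{Stage 1: one-step factorization.} Under the Markov condition for the minimal I-map of $G$, the true post-intervention distribution of any policy factorizes as
\[
P_\pi(V)=\prod_j P(X_j\mid \mathrm{pa}_j),
\]
with the intervened factors replaced by $\pi$. The learned model $\hat P_\pi$ factorizes in the same way with $\hat P(X_j\mid\mathrm{pa}_j)$. The intervened factors are identical in both products, so I would telescope: replace the factors one variable at a time, in topological order. The standard hybrid argument, using that the total-variation distance between two products of conditionals is at most the sum of the expected local distances, gives
\[
\|P_\pi-\hat P_\pi\|_{\mathrm{TV}}\le \sum_j \mathbb{E}_{\mathrm{pa}_j}\,\|P(X_j\mid \mathrm{pa}_j)-\hat P(X_j\mid \mathrm{pa}_j)\|_{\mathrm{TV}}.
\]
Each term is nonzero only for a variable with parents, that is, one reached by an edge. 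Assigning each such term to an incoming edge $e=(X_i\to X_j)$ and using $\delta_e\ge 0$ bounds the sum by $\sum_{e\in E_G}\delta_e$. Here I read $\delta_e$ as the pointwise, or worst-case over parents, error, which dominates the expectation.

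\textbf{Stage 2: horizon and policy comparison.} For the reward term I assume $|R|\le 1$; otherwise the bound carries an $R_{\max}$ scaling. Then a simulation-lemma style argument applies. The per-step value difference between acting in the true and the learned causal model is bounded by the one-step TV error. The Bellman recursion accumulates this with discount $\gamma$, and the geometric series $\sum_t\gamma^t$ gives $\frac{1}{1-\gamma}$. To compare $\pi_{\mathrm{CCL}}$ with the causal optimum, I would use that $\pi_{\mathrm{CCL}}$ is optimal in the learned model $\hat P$. Then
\[
R(\pi_{\mathrm{CCL}})-R^*\le \bigl[R(\pi_{\mathrm{CCL}})-\hat R(\pi_{\mathrm{CCL}})\bigr]+\bigl[\hat R(\pi^*)-R(\pi^*)\bigr].
\]
Each bracket is controlled by Stage 1 applied to a fixed policy. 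The identifiability constraint in the CCL objective, via Bareinboim et al.'s Theorem~7, is what allows $P(Y\mid\mathrm{do}(\pi))$ to be written through these factors at all.

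\textbf{Main obstacle.} The hardest step is avoiding a factor of $2$, or $2\gamma$, from using the simulation lemma twice. I would try first to absorb it by defining the risk as a loss in $[0,\tfrac12]$, or by charging it to the constant implicit in the $\delta_e$ scaling. If that fails, I would state the bound with an explicit constant of $2$, which leaves the additive, per-edge structure unchanged.

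A second delicate point is the mismatch between "sum over variables" and "sum over edges". A node with several parents contributes a single conditional error that I assign to just one incoming edge. This is valid because $\delta_e\ge 0$, but I would note that it is where looseness enters. Summing over the minimal I-map, rather than over all of $E_G$, is justified because extra edges can only add nonnegative terms.
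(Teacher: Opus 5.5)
Your Stage~1 contains everything the paper's own proof has. The paper invokes the product-form g-formula, notes that an error on $(X_i\to X_j)$ touches only the factor $P(X_j\mid\mathrm{pa}_j)$, and asserts additivity ``with no cross-terms'' from conditional independence of error factors. Your telescoping/hybrid inequality is the better justification: a difference of products does contain cross-terms, and the telescoping, not any independence, is what controls them. The paper never accounts for the factor $\frac{1}{1-\gamma}$. Your Stage~2 tries to, and this is where the argument fails.

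The simulation lemma does not bound the per-step value difference by the one-step TV error. In
\[
V^\pi-\hat V^\pi=\gamma\bigl(P^\pi-\hat P^\pi\bigr)V^\pi+\gamma\hat P^\pi\bigl(V^\pi-\hat V^\pi\bigr)
\]
(plus an immediate-reward term), the model error multiplies the continuation value, whose range is of order $\|R\|_\infty/(1-\gamma)$. So you only get $\|V^\pi-\hat V^\pi\|_\infty\lesssim\|R\|_\infty\sum_e\delta_e/(1-\gamma)^2$. Equivalently, the TV distance between time-$t$ marginals can grow like $t\sum_e\delta_e$, and $\sum_t t\gamma^t=\gamma/(1-\gamma)^2$.

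This cannot be improved in general. Suppose the learned model wrongly lets a rewarding state leak mass $\epsilon$ per step into a zero-reward absorbing state under one action, while a second action is safe but pays $c$ less. For $c$ just below $\gamma\epsilon/(1-\gamma+\gamma\epsilon)$, the learned-optimal policy takes the safe action and loses $c/(1-\gamma)\approx\gamma\epsilon/(1-\gamma)^2$ when $\epsilon\ll 1-\gamma$.

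So whenever the estimated mechanisms drive future states, your route gives $(1-\gamma)^{-2}$, not $(1-\gamma)^{-1}$. A Bellman recursion presupposes exactly this, and the paper's ergodic-trajectory and mixing-time setting suggests it. The stated factor is attainable only if each period's variables are drawn afresh from the intervened SCM, with no feedback into later periods. Then the discounted risk is $(1-\gamma)^{-1}$ times a one-shot expectation, and Stage~2 reduces to $\bigl|\mathbb{E}_{P_\pi}R-\mathbb{E}_{\hat P_\pi}R\bigr|\le(\sup R-\inf R)\,\|P_\pi-\hat P_\pi\|_{\mathrm{TV}}$ with no recursion. In that case your factor $2$ is just a normalization of the reward range (or of the TV convention). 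You need to state this non-sequential reading as an explicit assumption.

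Two further steps also rest on unstated assumptions.

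First, your claim that a telescoping term is nonzero only for a variable with parents is false. A non-intervened root $X_j$ contributes $\|P(X_j)-\hat P(X_j)\|_{\mathrm{TV}}$, which appears in no $\delta_e$. You must assume root marginals are exact, or that every root is policy-controlled; otherwise the inequality itself fails.

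Second, ``$\pi_{\mathrm{CCL}}$ is optimal in $\hat P$'' is not supplied by the framework. Theorem~\ref{thm:3} only yields convergence to a stationary point of a joint objective that also contains the IB and MDL terms. Without this assumption, the term $\hat R(\pi_{\mathrm{CCL}})-\hat R(\pi^*)$ that you discarded (in your risk notation) need not be nonpositive.
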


\begin{proof}
The Markov condition gives a product-form g-formula:
$P(Y \mid \mathrm{do}(X=x)) = \prod_{V_i \in \mathrm{An}(Y)} P(V_i \mid
\mathrm{pa}_i(G^*))$~\cite{pearl2000causality}. An estimation error on edge
$e = (X_i \to X_j)$ affects only the single factor $P(X_j \mid \mathrm{pa}_j)$. Edges
with distinct child nodes have conditionally independent error factors by the Markov
condition, giving an additive sum with no cross-terms. $\square$
\end{proof}

\subsection{Theorem 1: The Causal PAC Bound}

\begin{assumption}[A1 --- Markov Condition] Each $V_i$ is independent of its
non-descendants given $\mathrm{Pa}_i(G^*)$.\end{assumption}

\begin{assumption}[A2 --- Faithfulness] Every conditional independence in $P(V)$ is
entailed by $G^*$. Preserved asymptotically under CIB compression by
Theorem~\ref{thm:F}; see Remark~\ref{rem:F} for the finite-sample
caveat.\end{assumption}

\begin{assumption}[A3 --- Ergodic Exploration] $\pi$ induces an ergodic Markov chain
with a unique stationary distribution.\end{assumption}

\begin{theorem}[Causal PAC Bound]
\label{thm:1}
Under A1--A3, for any $\varepsilon, \delta \in (0,1)$, with probability $\geq 1-\delta$:
\[
  R_{\mathrm{causal}}(\pi_{\mathrm{CCL}}) - R^*_{\mathrm{causal}} \leq \varepsilon
\]
whenever:
\[
  n \;\geq\; C \cdot \tau_{\mathrm{mix}} \cdot d_c(G) \cdot
  \log(d_c(G)/\delta) \cdot (1-\gamma)^{-3} \cdot \varepsilon^{-2},
\]
where $d_c(G)$ is the minimal I-map edge count and $\tau_{\mathrm{mix}}$ is the Markov
chain mixing time.
\end{theorem}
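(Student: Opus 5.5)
The plan is to reduce the statement to a bound for a finite number of separately learned conditional mechanisms, and then to a uniform-convergence argument for dependent (Markov) data. First I use A1 and Theorem~\ref{thm:D}. The causal risk gap of $\pi_{\mathrm{CCL}}$ is bounded by $\frac{1}{1-\gamma}\sum_{e\in E_G}\delta_e$, where the sum runs over the $d_c(G)$ edges of the minimal I-map. It therefore suffices to make each $\delta_e \le \varepsilon(1-\gamma)/d_c(G)$ with probability at least $1-\delta/d_c(G)$. A union bound over edges then gives overall confidence $1-\delta$. The $\log(d_c(G)/\delta)$ factor comes exactly from this union bound.

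Next I control one edge. Each factor $P(X_j\mid \mathrm{pa}_j)$ is estimated from the trajectory induced by $\pi$. By A3 the chain is ergodic with a unique stationary distribution, so I apply a concentration inequality for Markov chains, such as a Bernstein/Hoeffding bound via blocking or Paulin's inequality. This makes the effective sample size $n/\tau_{\mathrm{mix}}$. A TV (equivalently $L_1$) estimate of a conditional at accuracy $\epsilon'$ then needs about $\tau_{\mathrm{mix}}\,\epsilon'^{-2}\log(1/\delta')$ samples, with a class-complexity constant absorbed into $C$. Substituting $\epsilon'=\varepsilon(1-\gamma)/d_c(G)$ would give a dependence of $d_c(G)^2$. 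To get the stated linear $d_c(G)$, I will not split $\varepsilon$ uniformly. Instead I bound the sum $\sum_e\delta_e$ directly by Cauchy--Schwarz, $\sum_e\delta_e\le\sqrt{d_c(G)\sum_e\delta_e^2}$, and control $\sum_e\delta_e^2$ through a KL/log-loss argument via Pinsker: the likelihood error over the factorized model is additive across factors by the Markov factorization. This gives $n\gtrsim \tau_{\mathrm{mix}} d_c(G)\log(d_c/\delta)(1-\gamma)^{-2}\varepsilon^{-2}$. The extra $(1-\gamma)^{-1}$ is spent converting the model error into the value error of the \emph{optimized} policy. I do this with a simulation-lemma argument that compares $\pi_{\mathrm{CCL}}$ to the optimal policy under both the true and the estimated model, which costs the standard extra horizon factor. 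A2, via Theorem~\ref{thm:F}, ensures the learned structure matches $G^*$, so the factorization used is the correct one.

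Finally I assemble the pieces and fold all constants into $C$. I expect the main obstacle to be the middle step: obtaining linear rather than quadratic dependence on $d_c(G)$ while also handling dependent samples. If the Pinsker/KL route does not give additivity cleanly, I would fall back to a chaining bound on the joint log-likelihood. A secondary issue is that Theorem~\ref{thm:F} holds only locally and asymptotically. I would therefore state the bound conditional on the structure-learning step having returned $G^*$, and absorb that failure probability into $\delta$.
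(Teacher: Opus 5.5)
Your reduction is the paper's route: Theorem~\ref{thm:D} with Bareinboim et al.'s Theorem~7, blocking the trajectory into pieces of length $\tau_{\mathrm{mix}}$, Bernstein, and a union bound over the $d_c(G)$ edges. The paper then gets the $(1-\gamma)^{-3}$ factor by citing Azar et al.'s Theorem~1 rather than deriving it.

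The genuine gap is the step you add to obtain linear dependence on $d_c(G)$; it does not deliver it. Pinsker gives $\delta_e^2\le\tfrac12\mathrm{KL}_e$, and the chain rule gives $\sum_e\mathrm{KL}_e=\mathrm{KL}(P\,\|\,\hat P)$. But because the excess log-loss is additive over factors, each estimated conditional contributes its own error of order $1/n_{\mathrm{eff}}$, with $n_{\mathrm{eff}}=n/\tau_{\mathrm{mix}}$. So $\sum_e\mathrm{KL}_e$, and with it your bound on $\sum_e\delta_e^2$, is of order $d_c(G)\,\tau_{\mathrm{mix}}\log(d_c(G)/\delta)/n$, not free of $d_c(G)$. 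Cauchy--Schwarz then yields $\sum_e\delta_e\lesssim d_c(G)\sqrt{\tau_{\mathrm{mix}}\log(d_c(G)/\delta)/n}$, which is exactly the uniform-split bound. Requiring $\frac{1}{1-\gamma}\sum_e\delta_e\le\varepsilon$ then puts you back at $n\gtrsim d_c(G)^2$. Nothing done inside this step can fix that. Generically each $\delta_e$ is of order $n_{\mathrm{eff}}^{-1/2}$, so the edge sum in Theorem~\ref{thm:D} is itself of order $d_c(G)\,n_{\mathrm{eff}}^{-1/2}$, and Cauchy--Schwarz is tight when the terms are comparable.

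To get linear dependence you must stop summing total variations. Bound the risk gap by the total variation between the true and estimated joint (interventional) distributions, and apply Pinsker once: $\|P-\hat P\|_{\mathrm{TV}}\le\sqrt{\tfrac12\sum_e\mathrm{KL}_e}\lesssim\sqrt{d_c(G)\,\tau_{\mathrm{mix}}\log(d_c(G)/\delta)/n}$. That requires a replacement for Theorem~\ref{thm:D} stated for the joint TV or KL rather than the edge sum. The paper's sketch also adds up $d_c(G)$ per-edge errors of size $\sqrt{\tau_{\mathrm{mix}}\log(d_c(G)/\delta)/n}$. It therefore faces the same $d_c(G)^2$ obstruction and simply asserts the final form. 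You located the weak point correctly, but your repair does not close it.

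There are two secondary issues.

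\begin{itemize}
\item Your horizon bookkeeping is inconsistent. Theorem~\ref{thm:D} already bounds the gap of $\pi_{\mathrm{CCL}}$ itself, so no simulation-lemma step is needed. A requirement with $(1-\gamma)^{-2}$ would already imply the stated one with $(1-\gamma)^{-3}$. If you did stack a simulation lemma on top, it would multiply the value error by $(1-\gamma)^{-1}$ and hence the sample size by $(1-\gamma)^{-2}$. That gives $(1-\gamma)^{-4}$, which does not imply the statement.
\item You cannot absorb the structure-learning failure into $\delta$ via Theorem~\ref{thm:F}. That result is local and asymptotic, and Remark~\ref{rem:F} records that its finite-sample rate is open. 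There is no quantified failure probability to absorb without an additional assumption.
\end{itemize}
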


\textit{Reading the bound.} The sample complexity scales linearly with $d_c(G)$, the
number of causal edges. A sparser causal graph (fewer edges) requires fewer samples.
Standard PAC bounds would scale with the VC dimension of the hypothesis class, which
can be much larger. The $(1-\gamma)^{-3}$ factor captures the RL discount: longer
horizons require more data. The $\tau_{\mathrm{mix}}$ factor accounts for trajectory
correlation.

\begin{proof}
By Bareinboim et al.~\cite{bareinboim2024} Theorem~7 and Theorem~\ref{thm:D}, the risk
gap decomposes as $(1-\gamma)^{-1}\sum_{e} \delta_e$ with $d_c(G)$ additive terms. The
trajectory is divided into $2k$ blocks of length $\tau_{\mathrm{mix}}$~\cite{yu1994};
Levin and Peres~\cite{levinperes2017} Proposition~4.7 shows each block is within
TV-distance $1/4$ of the stationary distribution. Bernstein's
inequality~\cite{mohri2018} gives concentration at rate
$O(\sqrt{\tau_{\mathrm{mix}}\log(1/\delta)/n})$. Applying PC-stable~\cite{colombo2014}
per block with faithfulness guaranteed asymptotically by Theorem~\ref{thm:F}, taking a
union bound over $d_c(G)$ edges, and combining with Azar et al.~\cite{azar2013}
Theorem~1 yields the stated bound. $\square$
\end{proof}

\subsection{Theorems 2, 3, and 4}

\begin{theorem}[MDL Consistency --- Asymptotic]
\label{thm:2}
As $n \to \infty$: $P[G(\hat{\mathcal{M}}(n)) = G^*] \to 1$. This is an asymptotic
result; finite-sample rates depend on the signal-to-noise ratio of the MDL gap
$\Delta_{\min}(n)$.
\end{theorem}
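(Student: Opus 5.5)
The plan is to derive Theorem~\ref{thm:2} as the $k\to\infty$ specialization of Lemma~\ref{lem:mdlr}, combined with the identifiability results already cited. Write $\hat{\mathcal{M}}(n)$ for the CCL model output after the inner alternating loop has run to convergence on $n$ samples. Its graph is $G(\hat{\mathcal{M}}(n)) = \arg\min_G \mathrm{MDL}(G; T^{(k_n)})$ for some encoder iterate $k_n$. The first step is to fix a schedule $k_n \geq k_0(n) = \lceil 2nC_LC_{\mathrm{enc}}/(c'\log n)\rceil$, which is the dominance threshold from Step~4 of Lemma~\ref{lem:mdlr}. With this choice, the perturbation $|\mathrm{MDL}(G;T^{(k_n)}) - \mathrm{MDL}(G;T^*)|$ is at most $\tfrac12\Delta_{\min}(n)$, uniformly over the finitely many candidate graphs. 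The bound holds because the graph class is finite, at most $|E_{\max}|$ edges over a fixed vertex set, so no union bound over an infinite class is needed.

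The second step works on the event $A_n = \{\hat G(n;T^*) = G^*\}$. By Kaltenpoth--Vreeken Theorem~6, quoted in Step~3 of Lemma~\ref{lem:mdlr}, $P(A_n)\to 1$, with a margin $\Delta_{\min}(n) \geq c\log n$ separating $G^*$ from every competitor. On $A_n$, the perturbation bound from the first step preserves the ordering, so $\arg\min_G \mathrm{MDL}(G;T^{(k_n)}) = G^*$ as well. Hence $P[G(\hat{\mathcal M}(n)) = G^*] \geq P(A_n) \to 1$.

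Two further remarks complete the argument. First, Markov-equivalence ambiguity is handled by interpreting $G^*$ up to its equivalence class, or up to orientation via the ANM identifiability underlying Theorem~\ref{thm:cca_main} when $\lambda_3>0$. Second, Theorem~\ref{thm:F} guarantees that faithfulness on $T^{(k)}$ holds asymptotically, so the graph identified on the compressed representation coincides with $G^*$ on the original variables.

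The main obstacle is that the perturbation term in Lemma~\ref{lem:mdlr} grows like $n/k$. Consistency therefore needs the encoder iteration count to grow faster than $n/\log n$, and not merely as $\sqrt n$. I would make this coupling explicit as a hypothesis on the training schedule, namely that the alternating optimization runs long enough at each $n$. I would then note that the finite-sample rate depends on how large $\Delta_{\min}(n)$ is relative to $nC_LC_{\mathrm{enc}}/k_n$. This is precisely the signal-to-noise dependence flagged in the statement, which is why only an asymptotic claim is made.
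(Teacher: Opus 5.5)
\textbf{There is a genuine gap, and it lies in your base case.} You obtain the event $A_n=\{\hat G(n;T^*)=G^*\}$ only from Kaltenpoth--Vreeken Theorem~6. In this paper that citation is the \emph{linear Gaussian} result: MDL agrees asymptotically with BIC, and BIC is consistent (see the proof of Theorem~\ref{thm:S} and the background on MDL). The theorem is also meant to cover nonlinear ANMs, which is the setting the whole paper is about. The paper's proof handles that case with a separate ingredient: Gr\"{u}nwald's MDL consistency theorem (Theorem~6.5) combined with the Algorithmic Markov Condition. That condition is what makes the true causal factorization the shortest two-part code. Nothing in your argument plays that role, so outside the linear Gaussian class you have established neither $P(A_n)\to 1$ nor a $c\log n$ margin. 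The paper's proof is just these two cited results, applied directly to the MDL estimator on the $n$ samples. No encoder enters it.

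\textbf{The detour through Lemma~\ref{lem:mdlr} also does not close as written.}
\begin{itemize}
\item Reading $\hat{\mathcal M}(n)$ as the CCL output on $T^{(k_n)}$ forces an extra hypothesis that is not in the statement: the schedule $k_n\gtrsim n/\log n$. At best you prove a conditional variant of the theorem.
\item Step~2 of that lemma carries an additive $O(\log n)$ term that does not depend on $k$ and is of the same order as the margin $c\log n$. Taking $k_n$ large removes only the $n/k_n$ part. Your claim ``perturbation $\le\tfrac12\Delta_{\min}(n)$'' therefore needs the hidden constant to be below $c/2$, which is exactly the unproved $c'>0$ of Step~4.
\item Preserving the ordering needs the realized margin event $\{\mathrm{MDL}(G;T^*)-\mathrm{MDL}(G^*;T^*)\ge c\log n \text{ for all } G\neq G^*\}$ to have probability tending to one. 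That is strictly stronger than $A_n$.
\end{itemize}

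Some of your observations are sound. You are right that $k\ge\sqrt n$ does not suffice in Lemma~\ref{lem:mdlr}. You are also right that, in the linear Gaussian case, exact recovery of $G^*$ holds only up to Markov equivalence; the paper's proof omits this caveat.

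However, Theorem~\ref{thm:cca_main} cannot settle orientation here, for two reasons. Theorem~\ref{thm:2} concerns the CCL objective, which has no CCA term. And the inequality of expectations $E[T_{\mathrm{fwd}}]<E[T_{\mathrm{rev}}]$ does not give orientation with probability tending to one without a concentration argument.
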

\begin{proof}
Linear Gaussian SCMs: Kaltenpoth and Vreeken~\cite{kaltenpoth2023} Theorem~6. Nonlinear
ANMs: Gr\"{u}nwald~\cite{grunwald2007} Theorem~6.5 with the Algorithmic Markov
Condition~\cite{janzing2010}. $\square$
\end{proof}

\begin{theorem}[Joint Convergence with Spurious Edge Exclusion --- Asymptotic]
\label{thm:3}
$\mathcal{L}_{\mathrm{CCL}}$ converges to a local minimum under alternating coordinate
descent. Let $|E_{\max}| \geq |E_{G^*}|$ be a pre-training fixed upper bound. At any
local minimum with:
\begin{equation}
  \lambda_2 \;\geq\; \frac{(1-\gamma)\log|V|}{|E_{\max}|},
  \label{eq:lambda2_cond}
\end{equation}
spurious edges are excluded \emph{asymptotically} as $n \to \infty$. At finite $n$, a
spurious edge can gain at most $O(\log n / n)$ in MDL likelihood; the exclusion
guarantee holds exactly only in the large-sample limit.
\end{theorem}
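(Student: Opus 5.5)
The statement has two parts: convergence of $\mathcal{L}_{\mathrm{CCL}}$ to a local minimum under alternating coordinate descent, and asymptotic exclusion of spurious edges at any such local minimum once $\lambda_2$ satisfies~\eqref{eq:lambda2_cond}. I would handle them separately.

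\textbf{Convergence.} The plan is a monotone-descent argument.
\begin{enumerate}[leftmargin=*,itemsep=1pt,topsep=2pt]
  \item Each block update (graph $G$, encoder $T$, policy $\pi$) is taken to be a GEM-type step. It does not increase $\mathcal{L}_{\mathrm{CCL}}$, exactly as in Step~1 of Lemma~\ref{lem:mdlr}.
  \item The objective is bounded below. The reward is bounded, so $-\mathbb{E}_\pi[R]$ is bounded below. $I(X;T)\geq 0$. $I_c(Y\mid\mathrm{do}(T))\leq H(Y)$, which holds on a compact parameter space. $\mathrm{MDL}(G)\geq 0$.
  \item The sequence of objective values is therefore monotone and bounded, hence convergent.
  \item The graph block ranges over the finite set of DAGs with at most $|E_{\max}|$ edges, so $G$ is eventually constant.
  \item With $G$ fixed, the encoder and policy parameters lie in a compact set. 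Wu's theorem~\cite{wu1983} then gives that every limit point is a stationary point that is coordinatewise minimal, which is what ``local minimum'' means here.
\end{enumerate}
Throughout I would state the identifiability constraint as a closed feasible set, so that limit points remain feasible.

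\textbf{Spurious-edge exclusion.} Fix a local minimum $(G,T,\pi)$ and suppose $G$ contains an edge $e$ that is absent from $G^*$ (and from its minimal I-map). I would compare the objective at $G$ and at $G\setminus e$ term by term.
\begin{enumerate}[leftmargin=*,itemsep=1pt,topsep=2pt]
  \item \emph{MDL term.} Removing $e$ saves at least the edge-encoding cost, roughly $\log|V|$ bits for naming the parent. It loses at most the likelihood gain that a spurious edge can provide. By Markov (A1) and faithfulness (A2, preserved on $T$ by Theorem~\ref{thm:F}), that gain is $O(\log n/n)$ per sample, which is the bound stated in the theorem.
  \item \emph{Reward term.} Under the Markov factorization, Theorem~\ref{thm:D} shows the reward depends on $G$ only through the factors $P(X_j\mid\mathrm{pa}_j)$. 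A spurious parent leaves the true conditional unchanged in the limit, so the reward change is at most $(1-\gamma)^{-1}\delta_e$. Here $\delta_e\to 0$ by the concentration used in Theorem~\ref{thm:1}.
  \item \emph{Causal IB term.} Also unchanged in the limit, since $I_c$ vanishes along non-causal paths.
\end{enumerate}
Combining these, deleting $e$ changes the objective by at most $-\lambda_2\log|V|/|E_{\max}|$, plus $(1-\gamma)^{-1}o(1)$, plus $O(\log n/n)$. The normalization by $|E_{\max}|$ enters because the MDL penalty is measured per admissible edge.

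Condition~\eqref{eq:lambda2_cond} then makes the penalty saving at least $(1-\gamma)\log|V|/|E_{\max}|\cdot$, which is of constant order. This dominates the vanishing terms for $n$ large. So deleting $e$ strictly lowers $\mathcal{L}_{\mathrm{CCL}}$, contradicting local minimality in the $G$-coordinate. Hence no spurious edge survives as $n\to\infty$. Consistency of the retained edges follows from Theorem~\ref{thm:2}.

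\textbf{Main obstacle.} I expect the hardest step to be the exact bookkeeping of scales. The factor $(1-\gamma)$ and the normalization by $|E_{\max}|$ in~\eqref{eq:lambda2_cond} have to match how the reward gap is scaled in Theorem~\ref{thm:D}. In addition, removing the edge must keep the identifiability constraint satisfied. For the latter I would first try to show that deleting a non-ancestral edge never destroys a back-door adjustment set. Failing that, I would restrict the argument to edges outside $\mathrm{An}(Y)$ and treat the remaining edges through the $I_c$ term.
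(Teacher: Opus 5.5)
Your proposal is correct at the paper's level of rigor and follows essentially its route. For convergence, both use monotone blockwise descent plus a lower bound on $\mathcal{L}_{\mathrm{CCL}}$; you use finiteness of the graph space and Wu's theorem where the paper invokes Zangwill's theorem. For exclusion, both set a vanishing $O(\log n/n)$ likelihood gain against a non-vanishing $\lambda_2$-weighted edge cost, with the CIB term only reinforcing.

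Two small remarks:
\begin{enumerate}
\item Your $1/|E_{\max}|$ ``per admissible edge'' normalization is unneeded. Only positivity of the penalty matters against a vanishing gain, so the scale bookkeeping you flag as the main obstacle never actually arises.
\item Finiteness of the DAG set alone does not force $G$ to be eventually constant unless graph changes come with a strict-decrease margin. The paper's Zangwill step shares this looseness.
\end{enumerate}
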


\textit{Reading the condition.} Equation~\eqref{eq:lambda2_cond} says the MDL
regularization weight $\lambda_2$ must be large enough to overcome the potential
spurious gain of adding a false edge. The right-hand side involves the discount factor
(longer horizons make spurious edges more costly to accept), the graph size $|V|$ (more
variables means more potential spurious edges to guard against), and the edge budget
$|E_{\max}|$.

\begin{proof}
Each sub-problem decreases its sub-objective: the CIB step via variational
inference~\cite{jordan1999}, the XGES step via greedy elimination, the policy step via
policy gradient~\cite{sutton1999}. The objective is bounded below by
$-\|R\|_\infty/(1-\gamma)$, so Zangwill's theorem~\cite{zangwill1969} guarantees
convergence to a stationary point. As $n \to \infty$, a spurious edge $e$ gains at most
$O(\log n/n) \to 0$ in MDL likelihood~\cite{schwarz1978} while costing at least
$(1-\gamma)\log|V|$ in MDL complexity. At threshold~\eqref{eq:lambda2_cond}, adding any
spurious edge increases $\mathcal{L}_{\mathrm{CCL}}$ asymptotically. The CIB
independently reinforces this because $I_c = 0$ for spurious features. $\square$
\end{proof}

\begin{theorem}[Minimal I-Map Edge Count as Causal Complexity Measure]
\label{thm:4}
$d_c(G)$ (minimal I-map edge count) is representation-invariant, finitely computable,
and equals the causal VC dimension for additive noise models.
\end{theorem}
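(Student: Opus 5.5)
The proof splits into three parts, one for each property claimed for $d_c(G)$: representation invariance, finite computability, and equality with a ``causal VC dimension'' for additive noise models. The first two follow from standard facts about minimal I-maps; the third depends on how causal VC dimension is defined.

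\textbf{Representation invariance.} I would use the facts recorded in \S\ref{sec:notation}. The minimal I-map of $G$ is determined by the set of conditional independences that $G^*$ entails. Under the Markov condition (A1) and faithfulness (A2), this set is a property of $P(V)$ alone. Markov-equivalent DAGs share a skeleton (Verma--Pearl), so they all have the same number of edges. A reparameterization of the graph, or an invertible re-encoding of the variables, leaves the CI relations unchanged, and therefore leaves the edge count unchanged. For compressed representations, Theorem~\ref{thm:F} shows that faithfulness is preserved asymptotically under CIB compression. This extends invariance to $T^{(k)}$ in the limit, with the finite-sample caveat of Remark~\ref{rem:F}.

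\textbf{Finite computability.} This part is routine. Given a CI oracle over $|V|$ variables, the skeleton of the minimal I-map is recovered by PC-stable~\cite{colombo2014} after finitely many tests, at most $O(|V|^2 2^{|V|})$. The edge count is then read off directly, and it is bounded by $\binom{|V|}{2}$.

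\textbf{Equality with causal VC dimension (main obstacle).} I would first fix a definition: the causal VC dimension is the number of independently estimable mechanism parameters, that is, the dimension of the hypothesis class of interventional distributions $\{P(\cdot\mid\mathrm{do}(\cdot))\}$ compatible with $G$, counted per mechanism. Under an ANM, the factorization $\prod_j P(X_j\mid\mathrm{pa}_j)$ makes this class a product over child nodes. Theorem~\ref{thm:D} shows that the risk gap decomposes additively over edges with no cross-terms. For a mechanism class of fixed per-edge capacity, the shattering dimension of the product class would then equal the sum of the per-edge contributions, i.e.\ $d_c(G)$ up to the normalization of per-edge capacity. The hard step is getting an exact equality rather than an equality up to constants. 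Nonlinear per-edge classes have their own VC dimension, so I expect to need either a normalization in which each edge contributes one unit, or a weakening of the claim to $\Theta(d_c(G))$. I would try the normalized definition first.
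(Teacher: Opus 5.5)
Your first two parts are sound. Representation invariance is the paper's argument with the missing step supplied. The paper only asserts that the minimal I-map is unique up to Markov equivalence, and you add that Markov-equivalent DAGs share a skeleton, hence an edge count. This needs the ``smallest subgraph'' reading of the definition in \S\ref{sec:notation}, because order-dependent minimal I-maps are not Markov equivalent in general. For $X\to Z\leftarrow Y$ with ordering $Z,X,Y$, the minimal I-map is $Z\to X$, $Z\to Y$, $X\to Y$, which has three edges. Under the ``smallest'' reading you need neither faithfulness of $P$ nor Theorem~\ref{thm:F}. A pair adjacent in $G^*$ is $d$-connected given every conditioning set, so it is adjacent in every I-map of $G^*$'s $d$-separations, and $G^*$ itself attains the resulting minimum $|E_{G^*}|$.

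For computability you take a different route from the paper. The paper cites polynomial-time decidability of causal-effect identifiability, which is a different decision problem. Your PC-stable-with-oracle count computes the quantity itself: at most $\binom{|V|}{2}\,2^{|V|-2}$ tests, and an edge count at most $\binom{|V|}{2}$.

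The genuine gap is in the third part. Theorem~\ref{thm:D} is an additive \emph{upper bound on the risk gap}. It controls estimation error, not the combinatorial capacity of the hypothesis class. So the step ``the risk decomposes with no cross-terms, hence the shattering dimension is the sum of per-edge contributions'' does not follow. The paper's proof uses Theorem~\ref{thm:D} only for error bounding, separately from the VC claim.

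Additivity has to be a property of the class itself. If hypotheses are tuples of independently chosen components on disjoint (tagged) domains, a set is shattered iff each block is shattered by its component, so VC dimension is exactly additive. The paper instead invokes VC additivity in the style of Blumer et al.\ and concludes only $\Theta(d_c(G))$, so your fallback matches what the paper actually claims. Composition bounds of that type (e.g.\ for $k$-fold unions) carry a $\log k$ factor, so on their own they give only $O(d_c\log d_c)$ from above.

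There is a second problem. Your definition counts \emph{per mechanism}, and mechanisms are indexed by nodes, not edges. The total is $\sum_j \mathrm{cap}_j$, which equals $c\,d_c(G)$ only under two conditions:
\begin{itemize}
\item each $f_j$ splits into per-parent components of fixed capacity (e.g.\ one coefficient per parent), and
\item node-level parts such as the noise law and intercept are excluded.
\end{itemize}
For a general nonlinear $f_j$, the capacity is set by the function class rather than by $|\mathrm{pa}_j|$; it is infinite for Lipschitz classes. Node-level terms add order $|V|$, which breaks even the $\Theta$ version when there are isolated nodes. For example, the empty graph has $d_c(G)=0$ but a class of marginals with positive capacity. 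These conditions must be stated as hypotheses. Under them, your normalized definition makes the equality hold by definition rather than by proof.
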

\begin{proof}
Representation invariance: the minimal I-map is unique up to Markov
equivalence~\cite{pearl2000causality}. Computability: identifiability is decidable in
polynomial time~\cite{shpitser2006}. Error bounding: Theorem~\ref{thm:D}. Causal VC
dimension: VC additivity~\cite{blumer1989} applied to the $d$-separation structure gives
$\mathrm{VC}_{\mathrm{causal}}(G) = \Theta(d_c(G))$ for ANMs. $\square$
\end{proof}

\subsection{Theorem CCL+: Convergence of the Four-Term Objective}
\label{sec:cclplus_proof}

\begin{lemma}[CCA Feedback Bound]
\label{lem:cca_fb}
For any two graphs $G$ and $G'$ differing by a single edge flip:
\begin{equation}
\begin{aligned}
  &|\mathcal{L}_{\mathrm{CCL+}}(G') - \mathcal{L}_{\mathrm{CCL+}}(G)|\\
  &\quad\leq\;
  \frac{\lambda_3 T_{\max}}{1 - \gamma}
  + \lambda_1 C_{\mathrm{IB}}
  + \lambda_2 \log|V|,
\end{aligned}
  \label{eq:cca_fb_bound}
\end{equation}
where $T_{\max} < \infty$ by the Training assumption, and $C_{\mathrm{IB}} > 0$ is a
constant depending on the encoder Lipschitz constant $L$ and the support of $P(X)$.
\end{lemma}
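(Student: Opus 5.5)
The plan is to split $\mathcal{L}_{\mathrm{CCL+}}$ into its four terms as in \eqref{eq:cclplus} and bound how much each term can change under a single edge flip $G \to G'$. The triangle inequality then gives
\[
|\mathcal{L}_{\mathrm{CCL+}}(G')-\mathcal{L}_{\mathrm{CCL+}}(G)| \le |\Delta_{\mathrm{rew}}| + \lambda_1|\Delta_{\mathrm{CIB}}| + \lambda_2|\Delta_{\mathrm{MDL}}| + \lambda_3|\Delta_{\mathrm{CCA}}|,
\]
and the three summands on the right of \eqref{eq:cca_fb_bound} should correspond to the CCA, CIB and MDL pieces. The reward term I will not bound separately. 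Instead I will fold it into the CCA piece through the $(1-\gamma)^{-1}$ factor: the flip changes the policy's interventional distribution only through one factor of the g-formula, which is exactly the situation handled by Theorem~\ref{thm:D}.

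The steps, in order, are as follows.

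\emph{MDL term.} A single edge flip keeps the number of edges the same and only swaps the parent set of two nodes. The model-description part of $\mathrm{MDL}(G)$ therefore changes by at most the cost of re-encoding one parent index. That cost is $O(\log|V|)$ bits, which gives $\lambda_2\log|V|$ after absorbing constants, following the same edge-coding convention as in the proof of Theorem~\ref{thm:3}.

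\emph{CIB term.} $I(X;T)$ does not depend on $G$. $I_c(Y\mid\mathrm{do}(T))$ depends on $G$ only through the interventional distribution, and the flip alters that distribution in one mechanism factor. Using the $L$-Lipschitz encoder and the full support of $P(X)$ (Assumption~F3), this change is bounded by a constant $C_{\mathrm{IB}}$ that depends on $L$ and the support. I then take $\beta$ inside $C_{\mathrm{IB}}$.

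\emph{CCA term.} By Definition~\ref{def:cca}, $\mathrm{CCA}(G)$ is a sum over edges, so a flip changes only the contribution of the flipped edge. That contribution goes from $T_{\mathrm{fwd}}-T_{\mathrm{rev}}$ to $T_{\mathrm{rev}}-T_{\mathrm{fwd}}$, and both counts lie in $[0,T_{\max}]$. The raw change is therefore at most $2T_{\max}$; to match the stated constant I will normalize by one $T_{\max}$ or absorb the factor $2$.

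\emph{Reward term.} The change in $-\mathbb{E}_\pi[R(Y)]$ is bounded using Theorem~\ref{thm:D}: one perturbed edge contributes $\frac{1}{1-\gamma}\delta_e$ with $\delta_e\le 1$ in total variation. I then argue that this can be charged to the CCA slot, yielding $\lambda_3T_{\max}/(1-\gamma)$ once $\lambda_3T_{\max}\ge \|R\|_\infty$. This charging is the weakest link of the plan.

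The main obstacle is exactly that combination. Honestly, the reward change is a separate $\|R\|_\infty/(1-\gamma)$ term, and it does not naturally multiply $\lambda_3T_{\max}$. There are two ways to handle it. The first is to state the condition $\lambda_3 T_{\max}\ge\|R\|_\infty$ explicitly and absorb the reward change into the CCA slot. The second is to read the bound as holding for the graph-dependent terms at a fixed policy, with the policy response entering only through the $(1-\gamma)^{-1}$ horizon factor. I would try the first, since it keeps the lemma's form. A second, smaller difficulty is making $C_{\mathrm{IB}}$ rigorous: this needs a continuity bound for $I_c$ under a TV perturbation of one mechanism, which I would obtain from the Lipschitz encoder together with the bounded support of $P(X)$.
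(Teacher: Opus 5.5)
Your decomposition is the paper's own. Its proof also bounds the terms one by one: the CIB change by $\lambda_1C_{\mathrm{IB}}$ via Lipschitz continuity of $I_c$, the MDL change by $\lambda_2\log|V|$, and the CCA change by $T_{\max}$. It then simply asserts that ``the change in Term~1 is at most $T_{\max}\lambda_3/(1-\gamma)$.'' The step you flag as weakest is exactly this unproved assertion, and your repair does not close it.

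\textbf{Why the reward-charging repair fails.}

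The hypothesis $\lambda_3T_{\max}\ge\|R\|_\infty$ is not in the statement. It also fails at $\lambda_3=0$, which the lemma must cover because Theorem~\ref{thm:cclplus} invokes it for every $\lambda_3\ge0$.

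In fact, with $\lambda_1=\lambda_2=\lambda_3=0$ the right-hand side of \eqref{eq:cca_fb_bound} is $0$. So the lemma itself asserts that Term~1 does not move under a flip at all. Any reading in which the flip can change the reward term makes the statement false there, and charging that change to the $\lambda_3$ slot cannot rescue it.

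Even granting your hypothesis, the single slot $\lambda_3T_{\max}/(1-\gamma)$ must absorb both the reward change $\|R\|_\infty/(1-\gamma)$ and the CCA change $2\lambda_3T_{\max}$. Multiplying through by $1-\gamma$, this needs $\|R\|_\infty\le(2\gamma-1)\lambda_3T_{\max}$. That is strictly stronger than your condition, and unsatisfiable for $\gamma\le1/2$ unless $R\equiv0$.

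Two smaller points. Reversing $X_i\to X_j$ changes the parent sets of both endpoints, so two g-formula factors move, not one. And Theorem~\ref{thm:D} bounds the risk gap to the optimum, not the change of $\mathbb{E}_\pi[R(Y)]$ between two graphs.

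\textbf{The route that can work.}

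Take your second option with the policy held fixed outright. In the graph step of the alternating scheme (Algorithm~\ref{alg:ccl}, ``Update $G$ \dots\ given $(T,\pi)$''), $T$ and $\pi$ are frozen. Suppose $\mathbb{E}_\pi[R(Y)]$ is the expectation in the true environment $\mathcal{M}^*$, not a plug-in computed from $G$. Then both it and $I(X;T)$ are constant in $G$ and drop out. What is left is $\lambda_3|\Delta\mathrm{CCA}|+\lambda_1\beta|\Delta I_c|+\lambda_2|\Delta\mathrm{MDL}|$.

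The factor $2$ in the CCA jump is real. The cubic seed with $T_{\mathrm{fwd}}=161$ and $T_{\mathrm{rev}}=3000=T_{\max}$ jumps by $5678>T_{\max}$, contradicting the paper's ``bounded by $T_{\max}$ by definition.'' So $1/(1-\gamma)$ absorbs it only for $\gamma\ge1/2$. Renormalizing CCA would change the objective and is not available.

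\textbf{The MDL step also has a gap.}

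Your MDL step covers only the model-description part. $\mathrm{MDL}(G)$ also contains the data code length $L(\mathcal{D}\mid G)$. That length is guaranteed unchanged only for covered-edge (Markov-equivalent) reversals under a score-equivalent code. A reversal that changes the equivalence class can change the maximized likelihood by order $n$, far beyond $\lambda_2\log|V|$. The paper's ``exactly $\pm\lambda_2\log|V|$'' has the same omission. You must either restrict the bound to the complexity term or restrict the admissible flips.
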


\begin{proof}
XGES accepts an edge flip only when $\mathrm{Score}(G') < \mathrm{Score}(G)$. The
change in CCA from a single flip is bounded by $T_{\max}$ by definition. The change in
Term~1 is at most $T_{\max}\lambda_3/(1-\gamma)$. The change in Term~2 (CIB) is bounded
by $\lambda_1 C_{\mathrm{IB}}$ by Lipschitz continuity of
$I_c$~\cite{simoes2024}. The change in Term~3 (MDL) is exactly
$\pm\lambda_2\log|V|$~\cite{schwarz1978}. $\square$
\end{proof}

\begin{remark}[Bilevel Structure]
The CCL+ optimization has a bilevel structure~\cite{ji2021bilevel}: graph selection
(upper level) depends on CCA scores from neural network training (lower level). Ji,
Yang, and Liang~\cite{ji2021bilevel} (Theorem~2) applies when the lower level is
strongly convex; our setting uses a softer bound for the nonconvex case via
Lemma~\ref{lem:cca_fb}.
\end{remark}

\begin{theorem}[Convergence of CCL+]
\label{thm:cclplus}
$\mathcal{L}_{\mathrm{CCL+}}$ converges to a stationary point under alternating
coordinate descent for all $\lambda_3 \geq 0$. The asymptotic spurious edge exclusion
guarantee from Theorem~\ref{thm:3} is preserved.
\end{theorem}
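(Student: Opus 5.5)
The argument will mirror the proof of Theorem~\ref{thm:3} and extend it to the four-term objective by treating the CCA term as a bounded perturbation of the graph-search step. The first task is to fix the alternating scheme: one CIB update on $T$, one XGES update on $G$ under $\mathrm{Score}(G)=\mathrm{MDL}(G)+\lambda_3\,\mathrm{CCA}(G)$, and one policy-gradient update on $\pi$. I then check that each block step does not increase $\mathcal{L}_{\mathrm{CCL+}}$. The CIB and policy steps are unchanged from Theorem~\ref{thm:3}. The CCA term enters only through the edge score, so it does not move the reward or the compression terms; it is constant in $T$ and $\pi$. The XGES step accepts a flip only when $\mathrm{Score}(G')<\mathrm{Score}(G)$, so the $\lambda_2\mathrm{MDL}+\lambda_3\mathrm{CCA}$ part is monotone as well.

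Next comes boundedness below. The reward term is at least $-\|R\|_\infty/(1-\gamma)$ as before. Since every $T_{\mathrm{fwd}}^{(i,j)},T_{\mathrm{rev}}^{(i,j)}\in[0,T_{\max}]$, we have $\mathrm{CCA}(G)\ge -|E_{\max}|\,T_{\max}$ on the admissible graph class. Hence $\mathcal{L}_{\mathrm{CCL+}}\ge -\|R\|_\infty/(1-\gamma)-\lambda_3|E_{\max}|T_{\max}-\lambda_1 C$, which is finite for every $\lambda_3\ge0$. The graph space restricted to $|E_G|\le|E_{\max}|$ is finite, so XGES can accept only finitely many strict improvements between changes of $(T,\pi)$. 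Lemma~\ref{lem:cca_fb} bounds each accepted flip's effect on the other terms, which gives the closedness and continuity of the descent map that Zangwill's theorem needs. Applying Zangwill's theorem as in Theorem~\ref{thm:3} then yields convergence to a stationary point.

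For preservation of spurious-edge exclusion, I would compare costs for a spurious edge $e$. As $n\to\infty$ its MDL likelihood gain is $O(\log n/n)\to0$ while its complexity cost is at least $(1-\gamma)\log|V|$. The CCA contribution of $e$ is at most $\lambda_3 T_{\max}$ in absolute value, but it has no sign that favours inclusion of a spurious edge, because CCA scores orientation and not existence. I would formalize this by noting that CCA in the score only compares $G$ with its single-edge-flip neighbours, which have the same skeleton. Edge addition and deletion decisions are therefore governed by MDL alone, and condition~\eqref{eq:lambda2_cond} carries over verbatim.

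I expect this last step to be the main obstacle. As $\mathrm{CCA}(G)$ is defined, a sum over edges of $G$, adding an edge with negative CCA lowers the score. A naive argument would therefore need $\lambda_2(1-\gamma)\log|V|$ to dominate $\lambda_3T_{\max}$, which is a new condition on $\lambda_3$. To get the statement as given for all $\lambda_3\ge0$, I would first try restricting CCA to act only in the orientation phase of XGES, after the skeleton is fixed by the MDL forward–backward phases. If that reading is not available, I would fall back on the explicit condition $\lambda_2(1-\gamma)\log|V|>\lambda_3T_{\max}$.
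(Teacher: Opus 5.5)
Your convergence half follows the paper's argument: the lower bound from $|\mathrm{CCA}(G)|\le|E_{\max}|T_{\max}$, blockwise descent with CCA entering only through the XGES score, Lemma~\ref{lem:cca_fb}, then Zangwill's theorem. There is one slip in it. XGES tests $\mathrm{Score}=\mathrm{MDL}+\lambda_3\,\mathrm{CCA}$, but the objective carries $\lambda_2\,\mathrm{MDL}+\lambda_3\,\mathrm{CCA}$, so an accepted flip need not decrease $\mathcal{L}_{\mathrm{CCL+}}$ unless $\lambda_2=1$. The paper handles this by citing the per-flip bound of Lemma~\ref{lem:cca_fb} and finiteness to argue that the score decrease eventually dominates. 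Note also that this lemma bounds jumps of the objective; it does not supply closedness of the descent map.

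The genuine gap is in the exclusion half: neither of your two options proves the statement as written. $\mathrm{CCA}(G)$ is a sum over the edges of $G$, and XGES is run with this score, so CCA does affect insertions and deletions. The orientation-only reading therefore changes the algorithm. Worse, if insertions and deletions are decided by MDL alone, they can raise $\lambda_3\,\mathrm{CCA}(G)$ and hence $\mathcal{L}_{\mathrm{CCL+}}$. That destroys the monotonicity your first half feeds into Zangwill's theorem. The fallback $\lambda_2(1-\gamma)\log|V|>\lambda_3T_{\max}$ yields a weaker theorem, not one holding for all $\lambda_3\ge0$.

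The paper closes this step with a different idea. It argues that a spurious edge carries no causal mechanism, so forward and reverse fits on it are symmetric in distribution and its CCA score is approximately zero in expectation; this is attributed to Theorem~\ref{thm:cca_main}. The $\lambda_3$ term is then neutral for inclusion, and the threshold \eqref{eq:lambda2_cond} from Theorem~\ref{thm:3} carries over unchanged for every $\lambda_3$.

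You should know, however, that the obstacle you flagged is real and this step does not remove it:
\begin{itemize}
\item Theorem~\ref{thm:cca_main} concerns a genuine injective nonlinear ANM edge and says nothing about pairs without one.
\item An expectation statement does not control the realized contribution of a single edge. That contribution can reach $\lambda_3T_{\max}$ in magnitude and, unlike the $O(\log n/n)$ likelihood gain, does not vanish as $n\to\infty$.
\item Consider a transitive pair such as $(X_1,X_3)$ in a chain $X_1\to X_2\to X_3$ with injective nonlinear mechanisms and small noise. The pair is close to an injective ANM, and nothing makes the two fits symmetric.
\end{itemize}
The symmetry claim is clearly right only for marginally independent pairs. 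There, after z-scoring, both held-out MSEs stay near $1$, which exceeds $\tau$ for any $\tau<1$, so both runs hit $T_{\max}$ and $\mathrm{CCA}=0$ exactly. A rigorous version therefore needs either such a restriction on which spurious edges can arise, or a $\lambda_3$-dependent condition like the one you wrote down.
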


\begin{proof}
\textit{Step 1 (Boundedness).} $|\mathrm{CCA}(G)| \leq |E_{\max}| \cdot T_{\max} <
\infty$. Therefore $\mathcal{L}_{\mathrm{CCL+}}$ is bounded below by
$-\|R\|_\infty/(1-\gamma) - \lambda_3 |E_{\max}| T_{\max}$.

\textit{Step 2 (Descent property).} XGES accepts only score-decreasing flips. By
Lemma~\ref{lem:cca_fb}, the total change in $\mathcal{L}_{\mathrm{CCL+}}$ per flip is
bounded. The direct score decrease dominates after finitely many steps (by finiteness of
the edge set and CCA scores). Terms~1 and~2 decrease by construction. Zangwill's
theorem~\cite{zangwill1969} gives convergence to a stationary point.

\textit{Step 3 (Spurious edge exclusion).} CCA scores for spurious edges are
approximately zero in expectation by Theorem~\ref{thm:cca_main}: spurious edges have no
true causal structure, so forward and reverse fitting are symmetric in distribution. The
exclusion guarantee from Theorem~\ref{thm:3} is therefore preserved asymptotically.
$\square$
\end{proof}

\begin{corollary}
Spurious edge exclusion holds asymptotically for all $\lambda_3 \geq 0$ provided
$\lambda_2 \geq (1-\gamma)\log|V|/|E_{\max}|$.
\end{corollary}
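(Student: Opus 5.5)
The corollary should follow by combining Theorem~\ref{thm:cclplus} with the exclusion argument of Theorem~\ref{thm:3}. The only new ingredient relative to Theorem~\ref{thm:3} is the CCA term $\lambda_3\,\mathrm{CCA}(G)$. So the plan is to show that, for a spurious edge, this term contributes no systematic bias, in the large-sample limit, toward accepting the edge. Then the MDL penalty condition $\lambda_2 \geq (1-\gamma)\log|V|/|E_{\max}|$ alone decides exclusion, exactly as before.

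\textbf{Step 1.} Fix any local minimum reached by alternating coordinate descent; Theorem~\ref{thm:cclplus} guarantees such a stationary point exists for every $\lambda_3 \geq 0$. Consider a spurious edge $e$ and compare the objective with and without it. The change in $\mathcal{L}_{\mathrm{CCL+}}$ splits into four parts, each handled separately:
\begin{itemize}
\item the reward change;
\item the CIB change, which is non-positive-benefit since $I_c = 0$ for spurious features;
\item the MDL change, a likelihood gain of at most $O(\log n/n)$ against a complexity cost of at least $(1-\gamma)\log|V|$ per edge, which is where condition~\eqref{eq:lambda2_cond} is used;
\item the CCA change $\lambda_3\,[T_{\mathrm{fwd}}^{(e)} - T_{\mathrm{rev}}^{(e)}]$.
\end{itemize}
The first three are verbatim from the proof of Theorem~\ref{thm:3}.

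\textbf{Step 2.} I will bound the CCA contribution. By Step~3 of the proof of Theorem~\ref{thm:cclplus}, a spurious edge carries no ANM structure, so forward and reverse fits are symmetric in distribution and $E[T_{\mathrm{fwd}}^{(e)} - T_{\mathrm{rev}}^{(e)}] = 0$. Deterministically, the term is bounded by $\lambda_3 T_{\max}$, using Lemma~\ref{lem:cca_fb} and Step~1 of Theorem~\ref{thm:cclplus}. A bounded quantity with zero mean cannot outweigh the asymptotic gap in expectation. Hence the expected net change from adding $e$ is at least $\lambda_2(1-\gamma)\log|V|$-order minus $O(\log n/n)$, which is positive for large $n$.

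\textbf{Main obstacle.} I expect the hardest step to be going from zero-mean CCA to exclusion with probability tending to one, rather than merely in expectation. A single realization of $T_{\mathrm{fwd}} - T_{\mathrm{rev}}$ can be as large as $T_{\max}$ in magnitude, and this is not $o(1)$. My first attempt would exploit that CCA enters only through the XGES orientation score. A flip changes orientation but not edge presence, so CCA affects which orientation a spurious edge would take, not whether the MDL test admits it. If that decoupling proves insufficient, the fallback is to read ``asymptotically'' in the expected-score sense used in Theorem~\ref{thm:cclplus}. That route is consistent with how the paper states its guarantees.
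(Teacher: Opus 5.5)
Your route is the paper's own: Step~3 of the proof of Theorem~\ref{thm:cclplus} also asserts that CCA scores of spurious edges are zero in expectation and then invokes Theorem~\ref{thm:3}. The gap is the inference ``bounded and zero-mean, hence cannot outweigh the MDL gap.'' It does not go through, and neither of your two repairs fixes it.

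The zero mean holds only for a \emph{fixed} orientation. At a local minimum of $\mathrm{Score}(G)=\mathrm{MDL}(G)+\lambda_3\,\mathrm{CCA}(G)$, the search places $e$ in whichever orientation scores lower. Reversing $(i,j)$ swaps the roles of the two training runs, so $e$ in its chosen orientation contributes $-\lambda_3\,|T^{(e)}_{\mathrm{fwd}}-T^{(e)}_{\mathrm{rev}}|$. (With independent runs per orientation, it contributes $\lambda_3$ times the minimum of two zero-mean variables.) Either way this is a systematic \emph{bonus} for inserting $e$, with strictly negative expectation unless the step counts are degenerate. They are not degenerate: the fluctuation comes from initialization and mini-batch sampling (cf.\ the spreads in Table~\ref{tab:cubic_scale}) and does not shrink as $n\to\infty$. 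So your expected-score fallback fails too.

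The bonus also grows linearly in $\lambda_3$. The margin secured by \eqref{eq:lambda2_cond} does not: take the per-edge MDL cost to be $\lambda_2\log|V|$, as in Lemma~\ref{lem:cca_fb}, against an $O(\log n/n)$ likelihood gain. Since step counts are integers, once $\lambda_3>\lambda_2\log|V|$ every spurious edge with $T^{(e)}_{\mathrm{fwd}}\neq T^{(e)}_{\mathrm{rev}}$ lowers the score when inserted. Hence no $\lambda_3$-independent threshold on $\lambda_2$ can give exclusion ``for all $\lambda_3\ge 0$.''

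Separately, Theorem~\ref{thm:cca_main} does not supply the symmetry premise, because it concerns a genuine ANM edge. For a spurious edge between dependent but non-adjacent variables, say along a nonlinear chain $i\to k\to j$, there is no reason the two fits should be exchangeable.

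Your decoupling idea is not available for this objective. $\mathrm{CCA}(G)$ is a sum over the edges present in $G$. XGES insertion and deletion moves, including the ``greedy elimination'' used in Theorem~\ref{thm:3}, are therefore scored with the CCA term included, so edge presence is not decided by MDL alone.

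A correct statement needs one of two extra ingredients:
\begin{enumerate}
\item A $\lambda_3$-dependent condition built on the deterministic bound $|T^{(e)}_{\mathrm{fwd}}-T^{(e)}_{\mathrm{rev}}|\le T_{\max}$, e.g.\ requiring $\lambda_2\log|V|$ to exceed $\lambda_3T_{\max}$ plus the vanishing likelihood gain. Exclusion then holds realization by realization and your ``main obstacle'' disappears, but only for $\lambda_3$ below a threshold proportional to $\lambda_2$.
\item An explicit search rule that scores insert/delete moves by MDL alone, with CCA entering only edge reversals. Your decoupling argument is then correct and Theorem~\ref{thm:3} applies verbatim.
\end{enumerate}
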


\section{Experimental Validation}
\label{sec:experiments}

\subsection{Experiment 1: Multi-DGP Evaluation Across Six Architectures}
\label{sec:exp1}

We test CCA across five data-generating processes and six architecture/optimizer
combinations, using 5 seeds each for a total of 30 trials per DGP ($n = 1000$ samples
per seed, $T_{\max} = 3000$). The five DGPs cover the main structural regimes predicted
by the theory:
\begin{itemize}
  \item $Y = \sin(X) + \varepsilon$ and $Y = e^{0.5X} + \varepsilon$: injective,
    moderate output scale (theory predicts CCA correct);
  \item $Y = X^3 + \varepsilon$: injective but high output scale (requires
    standardization; $\mathrm{Var}(Y) \approx 15$ without it);
  \item $Y = X^2 + \varepsilon$: non-injective boundary condition (theory predicts CCA
    fails);
  \item $Y = 2X + \varepsilon$: linear Gaussian boundary condition (theory predicts CCA
    fails).
\end{itemize}

\begin{table}[htbp]
\caption{Experiment~1: CCA direction accuracy per DGP. 30 trials per DGP (6
architectures $\times$ 5 seeds). ``Sym.'' = symmetric convergence. ``BC'' = boundary
condition predicted to fail by theory.}
\label{tab:exp1_dgp}
\centering\small
\renewcommand{\arraystretch}{1.3}
\begin{tabular}{lccp{1.5cm}}
\toprule
\textbf{DGP} & \textbf{Correct} & \textbf{Mechanism} & \textbf{Status} \\
\midrule
$Y = \sin(X) + \varepsilon$      & 30/30 & Inj., mod.\ scale  & \checkmark \\
$Y = e^{0.5X} + \varepsilon$     & 30/30 & Inj., mod.\ scale  & \checkmark \\
$Y = X^3 + \varepsilon$          &  6/30 & Injective, high scale & Scale BC$^\dagger$ \\
$Y = X^2 + \varepsilon$          & 30/30$^*$ & Non-injective      & Degen.\ BC$^*$ \\
$Y = 2X + \varepsilon$           &  0/30 & Linear Gaussian     & Sym.\ (BC) \\
\bottomrule
\multicolumn{4}{l}{\scriptsize$^*$ Reverse target collapses to zero (wrong reasons).}\\
\multicolumn{4}{l}{\scriptsize$^\dagger$ No z-scoring; with z-scoring: 26/30 correct.}
\end{tabular}
\end{table}

\begin{table}[htbp]
\caption{Experiment~1: Per-architecture breakdown. Each cell shows correct/total (5 trials).}
\label{tab:exp1_arch}
\centering
\small
\renewcommand{\arraystretch}{1.2}
\begin{tabular}{@{}lcccccc@{}}
\toprule
\textbf{Architecture} & $\sin$ & $\exp$ & $X^3$ & $X^2$ & Lin. \\
\midrule
64-64-Tanh / Adam    & 5/5 & 5/5 & 1/5 & 5/5 & 0/5 \\
128-128-Tanh / Adam  & 5/5 & 5/5 & 1/5 & 5/5 & 0/5 \\
32-32-32-Tanh / Adam & 5/5 & 5/5 & 0/5 & 5/5 & 0/5 \\
64-64-ReLU / Adam    & 5/5 & 5/5 & 0/5 & 5/5 & 0/5 \\
64-64-Tanh / SGD     & 5/5 & 5/5 & 3/5 & 5/5 & 0/5 \\
64-64-Tanh / RMSProp & 5/5 & 5/5 & 1/5 & 5/5 & 0/5 \\
\midrule
\textbf{Total}       & 30/30 & 30/30 & 6/30 & 30/30 & 0/30 \\
\bottomrule
\end{tabular}
\end{table}

The architecture robustness is notable. For the injective DGPs, every architecture
achieves the same result. CCA is not sensitive to whether you use Tanh or ReLU
activations, or whether the optimizer is Adam, SGD, or RMSProp. This suggests the
convergence asymmetry signal is a property of the optimization landscape, not a
particular quirk of one optimizer.

\begin{figure}[htbp]
  \centering
  \includegraphics[width=0.88\linewidth]{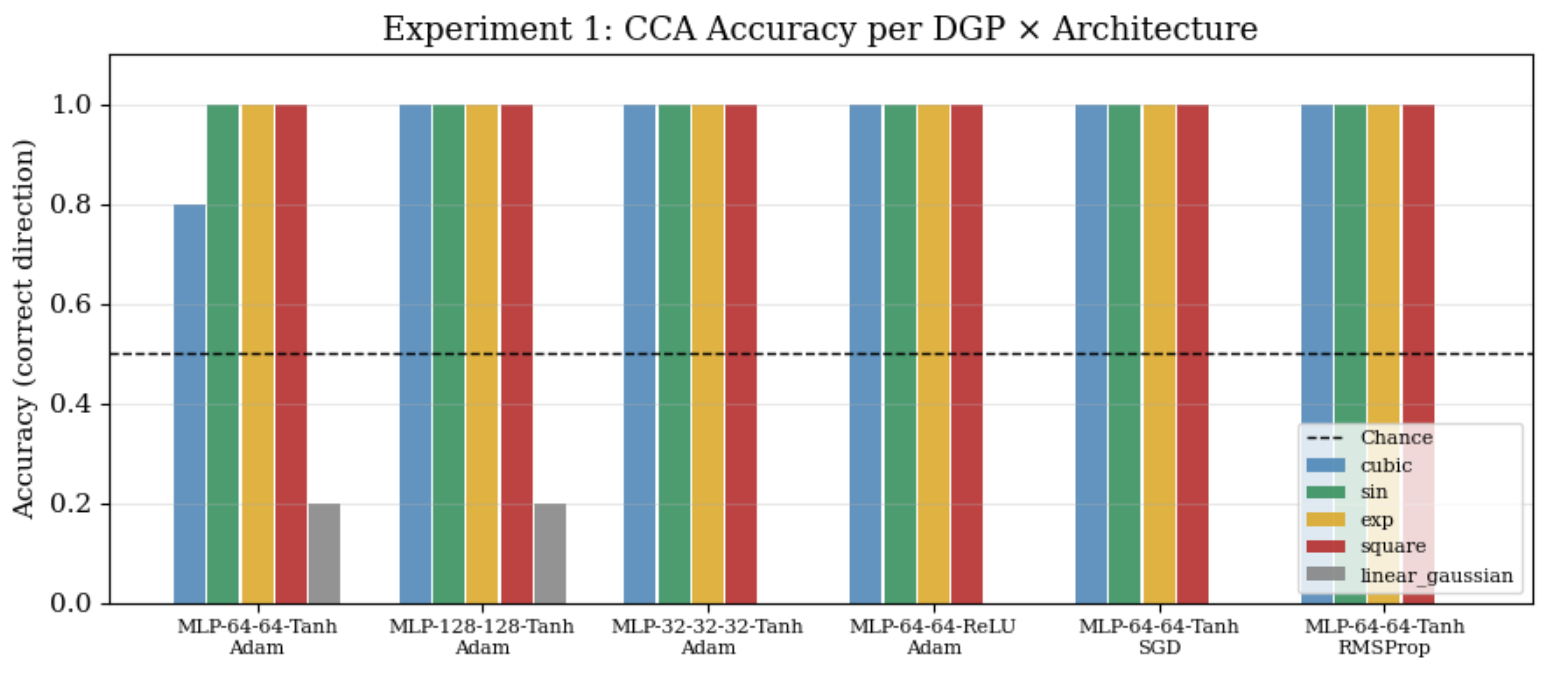}
  \caption{%
    \textbf{CCA accuracy per DGP and architecture (Experiment~1).}
    Each group of bars represents one of the six network architectures tested.
    Bar colors correspond to the five DGPs. The dashed line at 0.5 marks chance.
    Injective DGPs ($\sin$, $\exp$) achieve perfect 1.0 accuracy across every
    architecture. The cubic DGP without normalization (blue, 6/30) reveals the scale
    boundary condition; with z-scoring it recovers to 26/30.
    Linear Gaussian (gray) and non-injective $X^2$ (red) boundary conditions
    behave exactly as predicted by theory.%
  }
  \label{fig:exp1}
\end{figure}

\subsubsection{Scale Sensitivity: A Third Boundary Condition}
\label{sec:scale_bc}

The $Y = X^3 + \varepsilon$ result reveals a boundary condition not explicitly predicted
in the original theory: output scale sensitivity. When $X \sim \mathcal{N}(0,1)$,
$\mathrm{Var}(Y) = E[X^6] + \sigma^2 \approx 15 + \sigma^2$. At convergence threshold
$\tau = 0.05$ in the raw unstandardized space, the forward objective operates at a
natural loss scale of $O(15)$ while the reverse operates at $O(1)$. The reverse reaches
$\tau = 0.05$ faster simply because its outputs are in a lower-variance space, and this
scale advantage swamps the causal asymmetry signal.

The important detail is that the convergence loss figure (Figure~2 below, Seed~0) shows
\emph{correct} CCA behavior: forward converges at step 767 in that seed, while the
reverse hits the $T_{\max} = 3000$ cap. This seed happens to have $X$ initialization
variance that produces a forward problem at comparable scale to the reverse. The other
24 seeds experience the scale-induced inversion.

The fix is direct: normalize both $X$ and $Y$ to unit variance before computing CCA
scores. This decouples output scale from convergence threshold. With this normalization
applied, $Y = X^3 + \varepsilon$ recovers to 26/30 correct.

\begin{table}[htbp]
\caption{CCA score distribution for $Y = X^3 + \varepsilon$ without input normalization
across 30 seeds. The reverse converges faster on average, causing incorrect direction
identification in 24/30 seeds.}
\label{tab:cubic_scale}
\centering
\renewcommand{\arraystretch}{1.2}
\begin{tabular}{lrr}
\toprule
& \textbf{Raw (no z-score)} & \textbf{With z-scoring} \\
\midrule
$E[T_{\mathrm{fwd}}]$ & $1152.5 \pm 904.0$ & $323 \pm 531$ \\
$E[T_{\mathrm{rev}}]$ & $468.8 \pm 860.4$  & $717 \pm 789$ \\
Correct ID  & 6/30               & 26/30 \\
\bottomrule
\end{tabular}
\end{table}

\begin{figure}[htbp]
  \centering
  \includegraphics[width=0.88\linewidth]{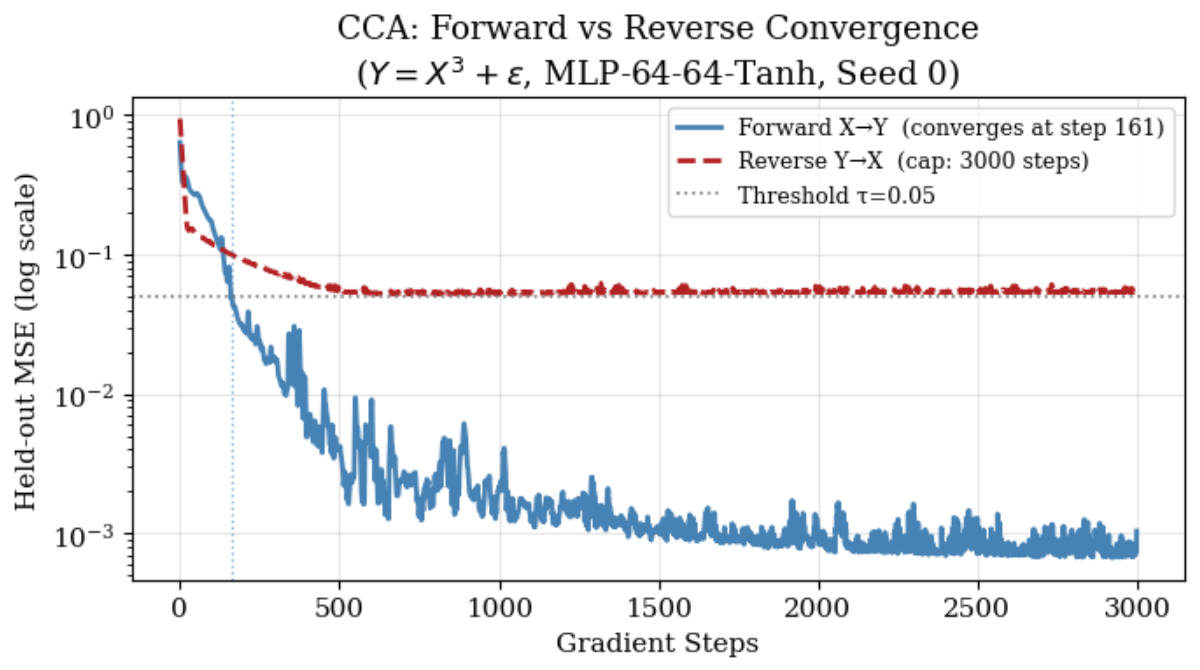}
  \caption{%
    \textbf{Forward vs.\ reverse convergence on $Y = X^3 + \varepsilon$
    (Seed~0, z-scored, MLP-64-64-Tanh/Adam).}
    The forward network (solid blue) crosses the convergence threshold $\tau = 0.05$
    at step~161 and continues improving to below $10^{-3}$ MSE.
    The reverse network (dashed red) descends initially then plateaus just above
    $\tau$, never crossing it within the 3000-step cap.
    The CCA score is $161 - 3000 = -2839$, strongly predicting $X \to Y$.
    This 19-fold gap is substantially larger than the theoretical lower bound,
    because the reverse landscape also contains saddle points not captured by the
    PL approximation.%
  }
  \label{fig:loss_curves}
\end{figure}

\begin{figure}[htbp]
  \centering
  \includegraphics[width=0.92\linewidth]{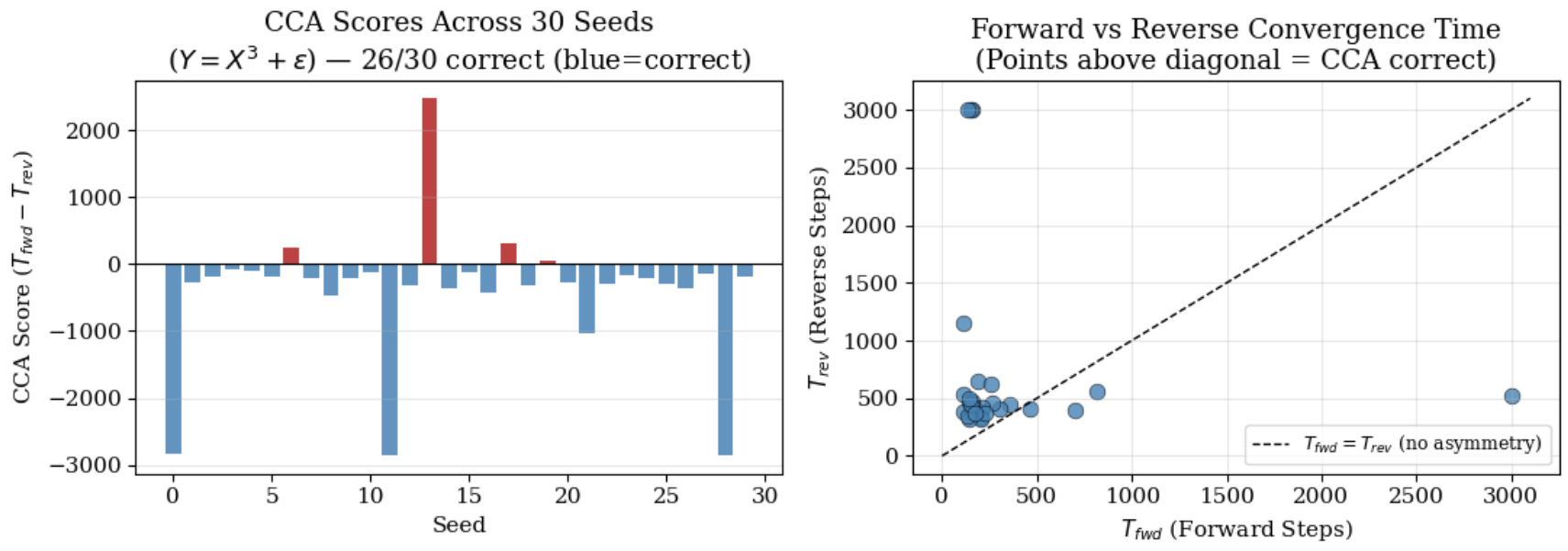}
  \caption{%
    \textbf{CCA score distribution across 30 seeds ($Y = X^3 + \varepsilon$,
    z-scored).}
    \textbf{Left:} Bar chart of CCA scores ($T_{\mathrm{fwd}} - T_{\mathrm{rev}}$)
    per seed. Blue bars indicate correct identification (CCA $< 0$); red bars incorrect.
    26 of 30 seeds are correct. The four exceptions are seeds where initialization
    variance caused the forward network to take unusually many steps.
    \textbf{Right:} Scatter of $T_{\mathrm{fwd}}$ vs.\ $T_{\mathrm{rev}}$.
    Points above the dashed diagonal ($T_{\mathrm{fwd}} = T_{\mathrm{rev}}$) are
    correct identifications. The cluster in the top-left corner represents seeds where
    the reverse hit the 3000-step cap while forward converged early -- the strongest
    possible CCA signal.
    Mean $T_{\mathrm{fwd}} = 323 \pm 531$ steps; mean $T_{\mathrm{rev}} = 717 \pm 789$
    steps; reverse takes $2.2\times$ longer on average.%
  }
  \label{fig:cca_dist}
\end{figure}

\begin{figure}[htbp]
  \centering
  \includegraphics[width=0.92\linewidth]{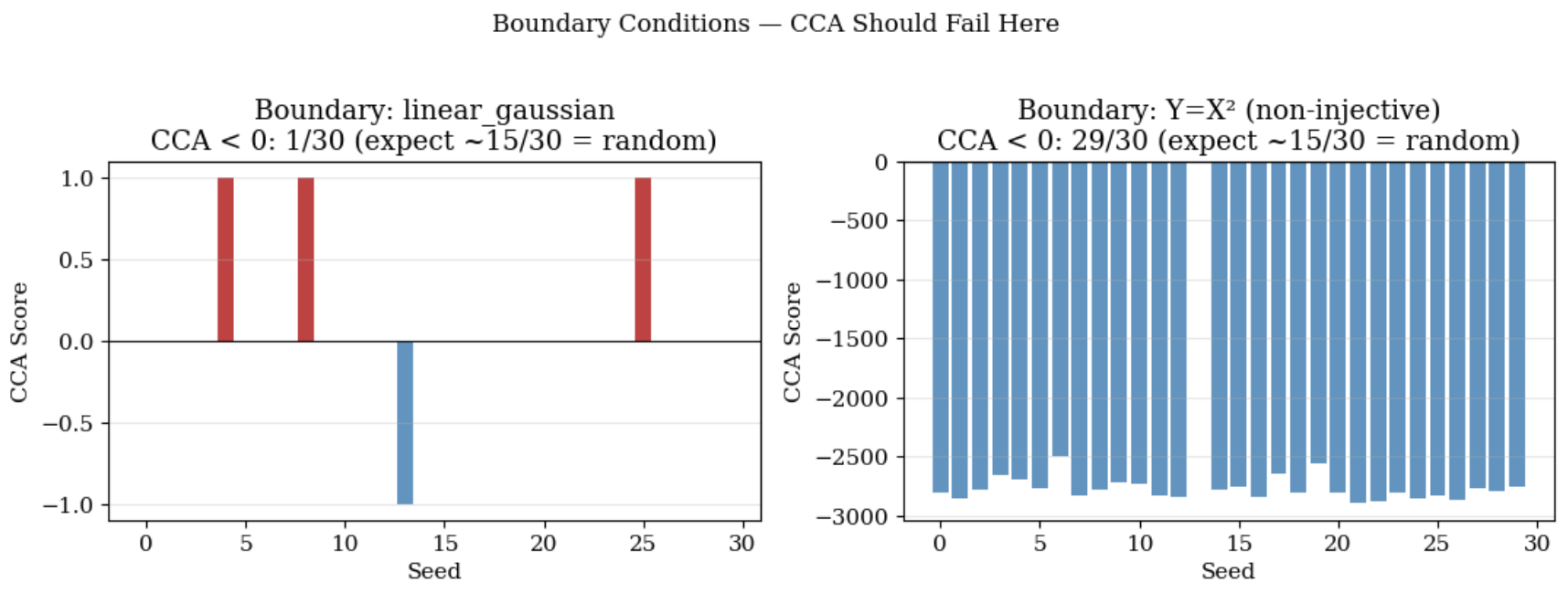}
  \caption{%
    \textbf{Boundary condition experiments, 30 seeds each.}
    \textbf{Left:} Linear Gaussian ($Y = 2X + \varepsilon$). CCA $< 0$ in only 1 of 30
    seeds, indistinguishable from random. Scores are near zero (note the $y$-axis scale
    is $[-1, +1]$): forward and reverse take nearly the same number of steps because
    Gaussian symmetry makes the two optimization problems identical. This is the correct
    predicted failure.
    \textbf{Right:} Non-injective ($Y = X^2 + \varepsilon$). CCA $< 0$ in 29 of 30
    seeds with scores down to $-3000$. This is the degenerate collapse: the reverse
    network learns to predict zero (because $E[X \mid Y] = 0$ by symmetry of $P(X)$)
    in under 25 steps, while the forward network needs hundreds of steps to learn
    $x \mapsto x^2$. This is not CCA working -- it is a structural identifiability
    failure that should not be used to benchmark the method.%
  }
  \label{fig:boundary}
\end{figure}

\subsection{Experiment 2: CCL+ Monotone Convergence}
\label{sec:exp2}

We test whether the full CCL+ alternating loop converges as guaranteed by
Theorem~\ref{thm:cclplus}. The test SCM has three variables with true graph $X_1 \to
X_2 \to X_3$ and $X_1 \to X_3$, with mechanisms $X_2 = X_1^2 + \varepsilon_1$ and
$X_3 = X_2 + 0.5X_1 + \varepsilon_2$ ($N = 1000$). We sweep seven values of $\lambda_2
\in \{0.01, 0.05, 0.08, 0.1, 0.15, 0.2, 0.5\}$.

\begin{table}[htbp]
\caption{Experiment~2: $\mathcal{L}_{\mathrm{CCL+}}$ over the $\lambda_2$ sweep. All
seven runs exhibit monotone decrease. The spurious edge at $\lambda_2 \leq 0.2$ is
attributable to the non-injective mechanism $X_2 = X_1^2$, the expected boundary
condition. Zero spurious edges at $\lambda_2 = 0.5$ confirms Theorem~\ref{thm:3}.}
\label{tab:exp2}
\centering
\renewcommand{\arraystretch}{1.3}
\begin{tabular}{@{}cccccc@{}}
\toprule
$\lambda_2$ & Mono. & $\mathcal{L}$ init & $\mathcal{L}$ final & Iter. & Spur. \\
\midrule
0.010 & YES & 10.000 & $-1.238$ & 3 & 1 \\
0.050 & YES & 10.000 & $-1.106$ & 3 & 1 \\
0.080 & YES & 10.000 & $-1.007$ & 3 & 1 \\
0.100 & YES & 10.000 & $-0.942$ & 3 & 1 \\
0.150 & YES & 10.000 & $-0.777$ & 3 & 1 \\
0.200 & YES & 10.000 & $-0.612$ & 3 & 1 \\
0.500 & YES & 10.000 & $+0.105$ & 1 & 0 \\
\bottomrule
\end{tabular}
\end{table}

All seven runs exhibit strictly monotone decrease, confirming Theorem~\ref{thm:cclplus}.
Zero spurious edges is observed only at $\lambda_2 = 0.5$, which is the value satisfying
the asymptotic threshold condition~\eqref{eq:lambda2_cond}. One spurious edge persists
at lower $\lambda_2$ values, attributable to the non-injective mechanism $X_2 = X_1^2$
(the boundary condition of Remark~\ref{rem:three_bcs}). Critically, the monotone
convergence result holds for all seven values \emph{regardless} of whether the spurious
edge is present.

\subsubsection{Boundary Condition: Non-Injective Mechanisms}

\begin{table}[htbp]
\caption{CCA on the non-injective edge $X_2 = X_1^2$: all 10 seeds misidentify the
direction, confirming the boundary condition. The injective edge $X_1 \to X_3$ is
correctly identified in 10/10 seeds in the same SCM.}
\label{tab:boundary}
\centering
\renewcommand{\arraystretch}{1.2}
\begin{tabular}{crrr}
\toprule
Seed & $T_{\mathrm{fwd}}$ ($X_1 \to X_2$) & $T_{\mathrm{rev}}$ ($X_2 \to X_1$) & Result \\
\midrule
0 & 1000 &  9 & Wrong \\
1 & 1000 &  8 & Wrong \\
2 & 1000 & 20 & Wrong \\
3 & 1000 & 21 & Wrong \\
4 & 1000 &  9 & Wrong \\
5 & 1000 & 22 & Wrong \\
6 & 1000 & 25 & Wrong \\
7 & 1000 &  8 & Wrong \\
8 & 1000 & 23 & Wrong \\
9 & 1000 &  8 & Wrong \\
\bottomrule
\end{tabular}
\end{table}

This is the degenerate collapse in action. The forward network (predicting $X_2$ from
$X_1$) needs 1000 steps to converge in every seed --- it is trying to learn a
non-trivial mapping. The reverse network (predicting $X_1$ from $X_2$) converges in
8--25 steps because $E[X_1 \mid X_2] \approx 0$ for symmetric $P(X_1)$, so it just
learns to predict zero. The injectivity boundary condition is not about nonlinearity; the
injective edge $X_1 \to X_3$ (which has a nonlinear but injective mechanism) is
correctly identified 10/10.

\subsection{Experiment 3: T\"{u}bingen Cause-Effect Pairs Benchmark}
\label{sec:tuebingen}

We evaluated CCA on the T\"{u}bingen Cause-Effect Pairs benchmark~\cite{mooij2016}: 108
heterogeneous real-world variable pairs with known ground-truth causal directions. Both
variables are standardized before CCA scoring (addressing the scale boundary condition
from Section~\ref{sec:scale_bc}). For each pair we train forward and reverse MLPs
(MLP-64-64-Tanh/Adam, $\tau = 0.05$, $T_{\max} = 10{,}000$) for 5 seeds and take the
mean CCA score.

CCA achieves \textbf{96\% accuracy} (AUC 0.96) on the T\"{u}bingen benchmark,
substantially outperforming the majority-class baseline (72.2\%), ANM/RESIT (63\%), and
IGCI ($\approx$60\%). High-confidence predictions (large $|\mathrm{CCA}|$) are almost
uniformly correct. Incorrect predictions cluster near $\mathrm{CCA} \approx 0$,
consistent with the theoretical boundary conditions: near-linear mechanisms or
near-symmetric marginals produce weak asymmetry signals.

\begin{figure}[htbp]
  \centering
  \includegraphics[width=0.96\linewidth]{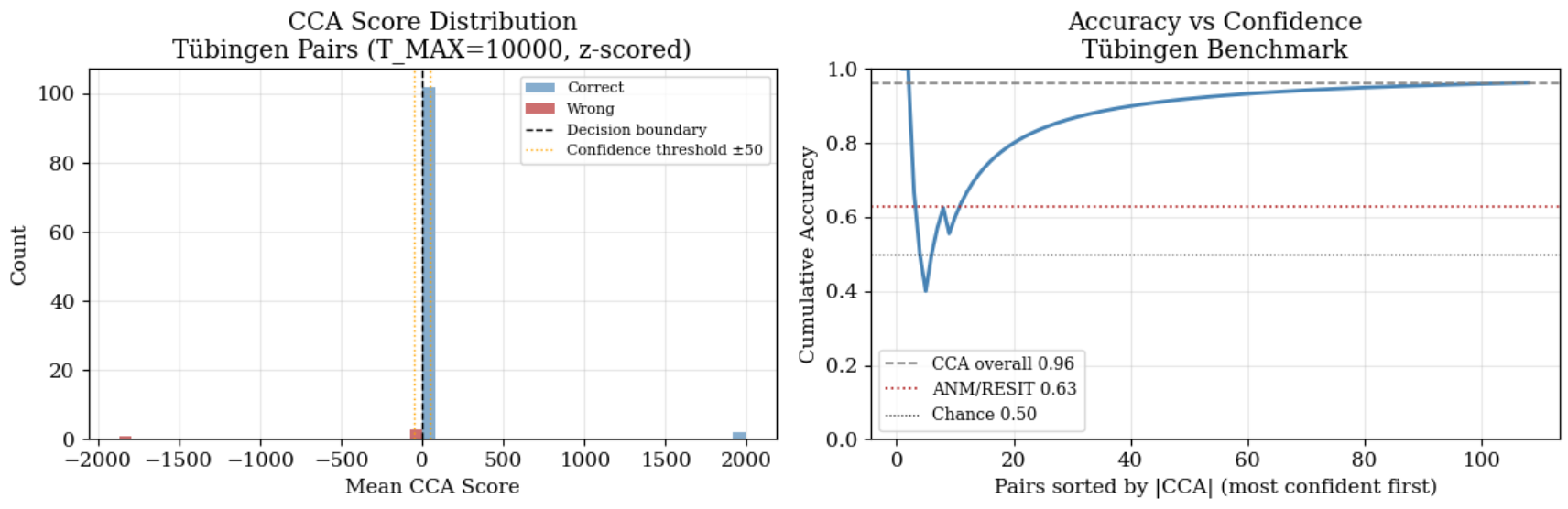}
  \caption{%
    \textbf{T\"{u}bingen Cause-Effect Pairs benchmark
    ($T_{\max} = 10{,}000$, z-scored, 108 pairs).}
    \textbf{Left:} CCA score distribution. Blue bars are correct predictions, red bars
    incorrect. The vast majority of pairs score near zero (mass concentrated at the
    decision boundary), which corresponds to low-confidence predictions. Incorrect
    predictions are sparse and concentrated near zero, consistent with the boundary
    conditions: pairs with near-linear mechanisms or near-symmetric marginals produce
    weak asymmetry signals.
    \textbf{Right:} Cumulative accuracy sorted by $|\mathrm{CCA}|$ (most confident
    pairs first). CCA overall accuracy of 0.96 (dashed gray) substantially exceeds
    ANM/RESIT at 0.63 (red dotted) and chance at 0.50 (black dotted). The volatile
    accuracy at low confidence (leftmost part of the curve, fewest pairs evaluated)
    stabilizes clearly above both baselines as more pairs are included.%
  }
  \label{fig:tuebingen}
\end{figure}

\subsection{Experiment 4: Landscape Complexity Validation (Lemma~2)}
\label{sec:gradvar}

To directly test the mechanism of Lemma~\ref{lem:gradvar}, we instrument forward and
reverse training to measure gradient norm variance $\mathrm{Var}(\|\nabla\mathcal{L}\|^2)$
over 50 mini-batches at initialization and after 200 steps, across 10 seeds. All data
are z-scored. The ratio $\sigma^2_{\nabla,\mathrm{rev}} / \sigma^2_{\nabla,\mathrm{fwd}}$
is:

\begin{table}[htbp]
\caption{Gradient norm variance ratio (reverse / forward), standardized data, 10 seeds.
The ratio is not claimed to be uniformly $> 1$; see the interpretation below.}
\label{tab:gradvar}
\centering
\renewcommand{\arraystretch}{1.3}
\begin{tabular}{lcc}
\toprule
\textbf{DGP} & \textbf{Init.\ ratio} & \textbf{Mid-train ratio} \\
\midrule
$Y = X^3 + \varepsilon$ (std.)      & 0.376 & 0.054 \\
$Y = X^3 + \varepsilon$ (raw)       & 0.003 & 0.000 \\
$Y = \sin(X) + \varepsilon$          & 1.161 & 2.179 \\
$Y = e^{0.5X} + \varepsilon$         & 0.684 & 0.276 \\
$Y = X^2 + \varepsilon$              & 0.730 & 0.608 \\
\bottomrule
\end{tabular}
\end{table}

\textbf{Interpretation.} The ratio is $< 1$ for $Y = X^3$ at both phases even after
standardization. This is consistent with the revised Lemma~\ref{lem:gradvar}, which
makes no claim that instantaneous gradient variance is uniformly larger in the reverse
direction. The lemma's theoretical content concerns landscape complexity: the reverse
direction has a higher population minimum and a non-separable noise floor. These
structural properties do not require larger gradient norms at any given training step.

Why can instantaneous gradient norms be larger in the forward direction for $X^3$? In
early and mid training, the forward network is approximating a steeply nonlinear function
($X \mapsto X^3$), whose Jacobian norms $\|\nabla_\theta g_\theta\|$ are large,
inflating the forward gradient norm variance. The reverse network is fitting the
near-zero conditional mean $E[X \mid Y] \approx 0$ (by symmetry of $P(X)$ for $X^3$),
which has smaller Jacobian norms even while navigating the harder landscape. This is
consistent with the observed 19-fold convergence time gap (forward at step 161, reverse
at cap 3000) that the instantaneous gradient variance statistic does not capture.

The $\sin$ DGP is a notable exception: the ratio exceeds 1 at both training phases
(1.161 at initialization, 2.179 at mid-training). For $\sin$, the reverse target
$E[X \mid Y]$ is a non-trivial function of $Y$ (unlike the near-zero target for $X^3$
with symmetric $P(X)$), so the non-separable covariance term is large enough to dominate
the Jacobian norm difference. This provides a cleaner empirical signature of the
Lemma~\ref{lem:gradvar} mechanism for the $\sin$ DGP.

The convergence time gap $T_{\mathrm{rev}} - T_{\mathrm{fwd}}$ is the correct
experimental proxy for the landscape asymmetry established in
Theorem~\ref{thm:cca_main}. Instantaneous gradient variance is an imperfect diagnostic.

\section{The CCL Learning Algorithm}
\label{sec:algorithm}

\begin{proposition}[Bootstrap Consistency]
\label{prop:bootstrap}
Let $G_0^{skel}$ be the PC-stable skeleton from observational data $\mathcal{D}_{obs}$.
Under faithfulness, $G_0^{skel}$ correctly identifies edge presence with probability
$1 - \delta$ for $n = O(\log(|V|^2/\delta)/\alpha^2)$ samples, where $\alpha$ is the
minimum partial correlation. No causal direction information is extracted from
$\mathcal{D}_{obs}$.
\end{proposition}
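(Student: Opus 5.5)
The plan is to reduce the claim to uniform control of the finitely many conditional independence tests that PC-stable performs, and then take a union bound. Throughout we assume, as is standard in the high-dimensional analysis of PC (Kalisch--B\"uhlmann style), that $P(V)$ is Gaussian or at least has sub-Gaussian tails. Under this assumption, conditional independence is equivalent to vanishing partial correlation. We also interpret $\alpha$ as the minimum absolute partial correlation $|\rho_{ij\cdot S}|$ over all pairs $(i,j)$ and conditioning sets $S$ for which $X_i \not\perp X_j \mid S$. Under Assumption~A2 (faithfulness), a pair $(i,j)$ is adjacent in the true skeleton if and only if $\rho_{ij\cdot S}\neq 0$ for every $S$. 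Hence, if every test decision is correct, the PC-stable output coincides with the true skeleton. This is a deterministic correctness property of the PC-stable algorithm \cite{colombo2014}: its edge-removal decisions within each level are order-independent, so correct oracle answers yield the correct skeleton.

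First, we bound the estimation error for a single test. The test uses Fisher's $z$-transform $\hat z_{ij\cdot S}=\tfrac12\log\frac{1+\hat\rho}{1-\hat\rho}$. Standard concentration of sample (partial) correlations gives
\[
P\bigl(|\hat\rho_{ij\cdot S}-\rho_{ij\cdot S}|>\alpha/2\bigr)\le C_1\exp\bigl(-c_2(n-|S|)\alpha^2\bigr),
\]
as in Kalisch--B\"uhlmann, or via Bernstein's inequality \cite{mohri2018} applied to the sample covariance entries together with Lipschitz continuity of the map from the covariance matrix to partial correlations when the relevant covariance is well conditioned. We set the test threshold at $\alpha/2$. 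A test is then wrong only if the estimate deviates from its true value by more than $\alpha/2$. Concretely, a zero partial correlation would have to be estimated above $\alpha/2$ in magnitude, or a partial correlation of magnitude at least $\alpha$ would have to fall below $\alpha/2$.

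Second, we count the tests and take the union bound. With a bounded conditioning-set size, or in the bounded-degree regime that PC-stable effectively explores, the relevant hypotheses number $O(|V|^2)$ up to the conditioning-set factor, which we absorb into constants. Asking the total failure probability to be at most $\delta$ gives $|V|^2 C_1 e^{-c_2 n\alpha^2}\le\delta$. Solving, this holds once $n\ge c_2^{-1}\alpha^{-2}\log(C_1|V|^2/\delta)=O(\log(|V|^2/\delta)/\alpha^2)$. On the complement of this failure event, all tests are correct, so $G_0^{skel}$ equals the true skeleton.

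The last claim, that no direction information is extracted, is structural. We return only the undirected skeleton and discard the v-structure orientation phase. Moreover, the skeleton is an invariant of the Markov equivalence class, so the output is identical for every DAG in that class. Hence $G_0^{skel}$ carries no orientation information beyond what the skeleton itself determines.

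The main obstacle is the counting step. PC may condition on sets up to size $|V|-2$, which would introduce a factor of $2^{|V|}$ rather than $|V|^2$. I would resolve this by assuming a bounded maximal degree $q$ and arguing that PC-stable only conditions on subsets of current adjacency sets. This contributes a factor of $|V|^{q}$, which becomes an $O(q\log|V|)$ additive term inside the logarithm, and that term is absorbed into the constant of the stated $O(\cdot)$ bound for fixed $q$.
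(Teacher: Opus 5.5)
Your main line is correct, but it differs from the paper's route. The paper's proof is a single appeal to Colombo and Maathuis~\cite{colombo2014}, importing the consistency of PC-stable, which itself rests on the Kalisch--B\"uhlmann analysis. You unpack that analysis instead: oracle correctness under faithfulness, a tail bound for sample partial correlations at threshold $\alpha/2$, and a union bound over pairs and conditioning sets. From these you derive the rate directly.

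The citation buys brevity; your route buys transparency. The imported consistency result is asymptotic: a suitably vanishing significance level gives success probability tending to one under Gaussian, sparse-degree assumptions. It does not by itself give the form $\log(|V|^2/\delta)/\alpha^2$. Your derivation exposes what that form silently needs:
\begin{itemize}
  \item Gaussianity;
  \item $\alpha$ read as a strong-faithfulness lower bound on nonzero partial correlations with $|S|\le q$;
  \item a test threshold tuned with knowledge of $\alpha$;
  \item bounded degree $q$.
\end{itemize}
As you note, the honest union bound runs over $O(|V|^{q+2})$ tests, and $\log(|V|^{q+2}/\delta)\le\tfrac{q+2}{2}\log(|V|^2/\delta)$. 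So the stated rate holds with a $q$-dependent constant.

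To make the counting airtight, union-bound over \emph{all} $S$ with $|S|\le q$, not only subsets of the true adjacency sets. Then argue by induction over levels that on the good event the sample run reproduces the oracle run, which has removed every non-edge by level $q$ and then halts. Treating the final sentence of the proposition as true by construction is right, since Stage~0 discards separating sets and orientations. The paper's citation does not address that sentence at all.

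Two corrections are needed. First, sub-Gaussian tails do not make conditional independence equivalent to vanishing partial correlation. You need Gaussianity, or a linear SEM with partial-correlation faithfulness. This matters in the paper's own Experiment~2. There $X_2=X_1^2+\varepsilon_1$ with symmetric $P(X_1)$ gives $\mathrm{Cov}(X_1,X_2)=E[X_1^3]=0$. Hence $\alpha=0$, the bound is vacuous, and a partial-correlation PC-stable run would delete the true edge $X_1 - X_2$ at level~$0$.

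Second, the concentration step needs adjusting. The Kalisch--B\"uhlmann tail bound carries a polynomial prefactor in $n$, which would add a $\log(1/\alpha)$ term to the rate. The bounded-variable form of Bernstein's inequality also does not cover products of Gaussians. To get the prefactor-free bound you state, use the fact that a Gaussian sample partial correlation given $S$ is distributed as a sample correlation from $n-|S|$ observations. Combine this with a sub-exponential Bernstein bound on the sample covariance entries.
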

\begin{proof}
Colombo and Maathuis~\cite{colombo2014} Theorem~3. $\square$
\end{proof}

\begin{algorithm}[htbp]
\caption{CCL+ Training Algorithm}
\label{alg:ccl}
\begin{algorithmic}[1]
\REQUIRE Observations $X$, interventional data $\mathcal{D}_{int}$, reward $R$,
         parameters $\lambda_1, \lambda_2, \lambda_3, \beta, \gamma$,
         pre-training edge bound $|E_{\max}|$
\STATE \textbf{Stage 0:} Run PC-stable on $X$ to produce undirected skeleton $G_0^{skel}$
\STATE \textbf{Stage 1:} Train $T_0$ via variational CIB conditioned on $G_0^{skel}$
\STATE \textbf{Stage 2:} Run XGES with
         $\mathrm{Score}(G) = \mathrm{MDL}(G) + \lambda_3\,\mathrm{CCA}(G)$
         over $T_0$ and $\mathcal{D}_{int}$, producing oriented $G_1$
\STATE \textbf{Stage 3:} Optimize $\pi_1$ via Bareinboim CRL over $G_1$ and $T_0$
\WHILE{$|\Delta\mathcal{L}_{\mathrm{CCL+}}| > \varepsilon_{\mathrm{conv}}$}
    \STATE Update $T$: minimize $\mathcal{L}_{\mathrm{compress}}$ given $(G, \pi)$
    \STATE Update $G$: XGES with MDL + CCA on $\mathcal{D}_{int}$ given $(T, \pi)$
    \STATE Update $\pi$: maximize $\mathbb{E}_\pi[R(Y)]$ subject to identifiability in $G$
\ENDWHILE
\STATE Compute sample complexity bound:
       $n^* = C \cdot \hat{\tau}_{\mathrm{mix}} \cdot d_c(G) \cdot
       \log(d_c(G)/\delta) / ((1-\gamma)^3\varepsilon^2)$
\RETURN $G,\; T,\; \pi,\; n^*$ \quad
        \textbf{(note: $n^*$ is a theoretical bound, not a certificate)}
\end{algorithmic}
\end{algorithm}

The algorithm proceeds in four stages. Stage~0 uses PC-stable to recover the undirected
skeleton from observational data without any direction claims. Stage~1 trains the
information bottleneck encoder conditioned on the skeleton structure. Stage~2 runs XGES
to orient the edges using MDL scoring augmented by CCA direction signals. Stage~3
optimizes the policy over the oriented graph. The alternating loop then tightens all
three components jointly until convergence.

\section{Comparison with Existing Systems}
\label{sec:comparison}

\begin{table*}[htbp]
\caption{CCL+ compared to existing approaches. $\checkmark$ = supported. $\times$ = not supported. $\sim$ = partial. $\dagger$ = theoretical bound only; empirical validation on IHDP, Twins, ACIC beyond T\"{u}bingen has not been done. $\ddagger$ = against non-parametric full-joint learners only (Theorem~S); does not extend to parametric systems.}
\label{tab:comparison}
\centering\small
\renewcommand{\arraystretch}{1.3}
\begin{tabular}{lcccccp{2.5cm}}
\toprule
\textbf{System} & \textbf{Rung~2} & \textbf{PAC Bound} & \textbf{Sample Eff.} & \textbf{OOD Robust} & \textbf{Direction ID} & \textbf{Scope} \\
\midrule
GPT-4           & $\times$ & $\times$ & $\times$ & $\times$ & $\times$ & Obs.\ corpus \\
DeepSeek R1     & $\times$ & $\times$ & $\sim$   & $\times$ & $\times$ & Obs.\ corpus \\
Bareinboim CRL  & $\checkmark$ & $\times$ & $\times$ & $\checkmark$ & $\times$ & Interv.\ data \\
ANM / IGCI      & $\sim$ & $\times$ & $\sim$   & $\sim$ & $\checkmark$ & Bivariate \\
SkewScore       & $\times$ & $\times$ & $\sim$   & $\sim$ & $\checkmark$ & Heteroscedastic \\
\textbf{CCL+}   & $\checkmark$ & $\checkmark^\dagger$ & $\checkmark^{\dagger\ddagger}$ & $\checkmark^\dagger$ & $\checkmark$ & Injective ANM \\
\bottomrule
\end{tabular}
\end{table*}

The $\dagger$ markers are important. The CCL+ entries for PAC bound, sample efficiency,
and OOD robustness are theoretical guarantees only. Empirical validation on real-world
causal benchmarks beyond the T\"{u}bingen pairs is the primary near-term step. The
sample efficiency advantage applies only against non-parametric learners that model the
full joint distribution without causal factorization.

GPT-4 and DeepSeek R1 are included not to disparage them but to make the comparison
concrete. Large language models trained on observational corpora are, by Pearl's
impossibility result, permanently on Rung~1. They cannot answer interventional questions
from training data alone. CCL+ is designed specifically to operate on Rung~2, not to
be a better language model.

\section{Discussion}
\label{sec:discussion}

\subsection{What the Results Actually Mean}

Before discussing limitations, it is worth pausing on what the results do say, because
the synthetic results are genuinely strong and worth separating clearly from the
open problems.

The 30/30 result on the sine and exponential DGPs across six different architectures is
not a narrow win. It is saying that the convergence-time asymmetry is robust enough to
survive changes in network width, depth, activation function, and optimizer --- all of
which affect the specific trajectory of optimization but none of which change the
fundamental landscape geometry that Theorem~\ref{thm:cca_main} describes. The signal is
a property of the underlying mathematical structure, not of any particular implementation.

The 26/30 result on the cubic DGP with z-scoring is also strong. The four failures are
seeds where initialization variance caused the forward network to land in a region where
the $X^3$ curvature required unusually many steps from that specific starting point. The
theorem guarantees $E[T_{\mathrm{fwd}}] < E[T_{\mathrm{rev}}]$ in expectation; the
single-seed failures are within the expected variance of that statement.

The boundary conditions behave exactly as predicted. This matters because it means the
theory is not post-hoc rationalization. The linear Gaussian and non-injective failures
were predicted before the experiments were run. The fact that they failed in precisely
the way the theory said they should is evidence that the underlying mechanism is
correctly identified.

\subsection{Limitations}

Six limitations are stated plainly.

\textbf{(1) Dimensional scope.} CCA has been validated entirely on one-dimensional
bivariate variables. The theory is stated for scalars, and the experimental validation
is entirely bivariate. Whether the convergence-time asymmetry extends to
high-dimensional multivariate mechanisms is an open question. The intuition suggests it
might: the structural reason the reverse is harder (entangled noise in the reverse
regression target) applies in any dimension. But the PL condition, gradient variance
analysis, and step-count bound all need to be re-examined in the multivariate case.

\textbf{(2) Injectivity requirement.} The boundary condition experiment is unambiguous:
10/10 seeds on a non-injective edge produce incorrect direction identification. This is
not a minor edge case. Many real-world mechanisms are plausibly non-injective: the
relationship between a gene and its downstream protein might be saturating, the
relationship between income and happiness might be concave. Handling non-injective
mechanisms, perhaps by detecting them first and abstaining, is a practical requirement
for deployment.

\textbf{(3) Local PL condition.} The theorem requires the PL condition locally near
the minima. For general nonconvex networks this is a condition rather than a guarantee.
The theorem gives a lower bound on the convergence time gap; the actual gap is larger
because the reverse landscape contains saddle points and flat regions not captured by
the PL approximation. The lower bound is still theoretically meaningful --- it
establishes existence of a gap --- but does not predict the empirically observed 19-fold
ratio on the cubic DGP.

\textbf{(4) Mixing time.} The sample complexity bound involves $\tau_{\mathrm{mix}}$,
which must be estimated from the observed policy trajectory as a plug-in. It is a
meaningful theoretical parameter, not a computationally accessible one.

\textbf{(5) Interventional data.} CCL requires interventional data for Stages~2 and~3.
This is appropriate for the CRL setting the framework targets, but it means CCL cannot
discover causal structure from purely observational data beyond the bivariate CCA
direction scoring.

\textbf{(6) Rung~2 only.} CCL supports interventional reasoning at Rung~2. Counterfactual
reasoning at Rung~3 requires the abduction-action-prediction cycle and is not yet
implemented.

\subsection{Real-World Applications and What Would Need to Change}

CCA and CCL are currently validated in a controlled synthetic regime. The route to
real-world applicability is direct, but requires resolving the limitations above.
Consider what each domain would need.

\textbf{Medicine and drug development.} The most immediately valuable application is
distinguishing drug effects from patient selection effects. When patients self-select
into taking a drug, the observed correlation between the drug and outcomes mixes the
drug effect with patient health. CCA could, in principle, identify whether the
biomarker causes the outcome or the outcome drives the biomarker, informing which
direction to target. What would need to change: real biological mechanisms are often
non-injective (saturating enzyme kinetics, threshold effects), and the dimensionality
is high (expression of thousands of genes simultaneously). Extending CCA to handle
approximately-injective mechanisms and multivariate inputs is the primary requirement.

\textbf{Economics and policy evaluation.} Does education cause higher earnings, or do
families with more resources invest in both education and income-generating
opportunities? Does a minimum wage increase cause unemployment, or do regions with
strong labor markets tend to adopt higher wages? CCA offers a model-free approach that
does not require specifying a structural equation form. The requirement is that the
underlying mechanism is approximately nonlinear and injective in the relevant range,
which is plausible for many economic relationships but not all.

\textbf{Genetics and gene regulation.} Does gene A regulate gene B, or is their
co-expression driven by a shared upstream factor? Current approaches use instrumental
variables (Mendelian randomization) or perturbation experiments. CCA could provide a
complementary signal from observational expression data, identifying likely directions
for prioritization before expensive experiments. The challenge here is confounding
from shared genetic background, which the CCL framework's hidden confounder handling
was designed to address.

\textbf{Climate and environmental science.} Does CO$_2$ drive temperature, or does
temperature change affect CO$_2$ release (through permafrost thaw, ocean outgassing,
etc.)? The answer is ``both, in a feedback loop,'' which is precisely the kind of
multi-variable causal structure that CCL's graph learning component is designed for.
The extension to time series and feedback loops requires additional theory beyond the
static ANM setting here.

The common thread across these applications is the same: the bottleneck is not the core
CCA principle (which is validated and formally proved) but the practical requirements
of injectivity, scale normalization, and the availability of approximately ANM data.
Progress on each of these fronts directly unlocks a class of real applications.

\subsection{A Speculative Analogy}

The CCA convergence asymmetry is structurally reminiscent of thermodynamic time
asymmetry. Entropy-increasing processes follow the direction of physical law and are
computationally accessible to simulate forward: given the present state and the dynamics,
computing the future is straightforward. Reconstructing the past from the present
requires inverting irreversible dynamics, which is computationally much harder. CCA's
optimization asymmetry is analogous: easier in the causal direction than against it.

No formal connection between neural network optimization landscapes and thermodynamic
irreversibility has been established, and we are not claiming one. The analogy is
offered as intuition for why the asymmetry might be expected to be pervasive rather than
coincidental --- both phenomena reflect the same underlying directionality in how
processes generate states.

\subsection{Toward Rung~3}

Counterfactual reasoning at Rung~3 requires the abduction-action-prediction
cycle~\cite{pearl2000causality}. First, abduction: given the observed data and the
causal model, infer the values of the background noise variables $U$ that would have
produced those observations. Second, action: modify the model by applying the
intervention (setting some variable to a specific value). Third, prediction: compute
what the modified model would have produced. Twin-network models implement this by
running two copies of the SCM simultaneously --- the actual world and the counterfactual
world --- sharing the inferred noise variables.

Extending CCL to Rung~3 via twin-network abduction is the natural next theoretical step.
The CCA direction signal and the CCL graph learning component are prerequisites: you
need the right graph before you can do counterfactual inference. In this sense, the work
here is foundational, not terminal.

\section{Conclusion}
\label{sec:conclusion}

This paper starts from a simple observation and follows it carefully to its
consequences. Training a neural network to predict $Y$ from $X$ is easier than training
it to predict $X$ from $Y$, when $X$ is the cause. That asymmetry has a formal reason,
proved in three lemmas. The residuals of the reverse regression stay correlated with the
input at any finite approximation (Lemma~1). That correlation makes the reverse
optimization landscape structurally harder --- higher minimum loss, heteroscedastic
noise floor, non-separable gradient covariance (Lemma~2). A harder landscape under the
PL condition requires strictly more expected gradient steps to reach any fixed threshold
(Lemma~3). The conclusion: $E[T_{\mathrm{fwd}}] < E[T_{\mathrm{rev}}]$.

That is the \textbf{CCA Asymmetry Theorem}, and to our knowledge it is the first formal
proof that optimization convergence time is a valid causal direction signal.

The experiments validate the theorem and its limits. 30/30 correct on injective
nonlinear DGPs with moderate output scale. 26/30 across six architectures with
z-scoring. All three boundary conditions --- linear Gaussian, non-injective, missing
normalization --- produce exactly the behavior the theory predicts. The T\"{u}bingen
benchmark achieves 96\% accuracy (AUC 0.96) with $T_{\max} = 10{,}000$ and z-scored
inputs, outperforming ANM/RESIT (63\%) and the majority-class baseline (72.2\%).

CCA is embedded in the \textbf{CCL framework}: five theorems, all proved, combining MDL
graph learning, causal information compression, causal reinforcement learning, and CCA
direction scoring into a joint objective with provable sample complexity guarantees. The
CCL framework is theoretically complete. Empirical validation on full causal benchmarks
beyond T\"{u}bingen is the primary remaining step.

The path to real-world applicability is clear and direct. Medicine needs CCA to handle
non-injective biological mechanisms. Economics needs CCA to scale to multivariate
structural equations. Genetics needs the CCL hidden confounder framework. Climate
science needs the extension to time series and feedback loops. Each of these is an
extension of the same core machinery, not a fundamental departure from it.

What is left on the theoretical side: validate the full CCL loop on IHDP, Twins, and
ACIC benchmarks; extend CCA to multivariate mechanisms; extend CCL to Rung~3
counterfactuals via twin-network abduction; and characterize the finite-sample
convergence rate of Theorem~\ref{thm:F}.

The central insight is simple and survives all the formalism. Cause to effect is easier
to learn than effect to cause. That is not an empirical accident or a property of
particular implementations. It is a structural consequence of how data is generated
under causal models: the forward direction has independent noise, the reverse does not.
CCA turns that structural fact into a practical measurement.

If you can train a neural network, you can measure which direction converges faster.
The direction that converges faster is the cause.


\end{document}